\journal{arXiv}
\algrenewcommand\alglinenumber[1]{\hbox to 2.2em{\footnotesize #1:\hss}}
\algrenewcommand\algorithmiccomment[1]{\hfill$\triangleright$~#1}
\newcommand{\repoKbeta}{\href{https://github.com/sck-at-ucy/kbeta}{\texttt{kbeta}}}
\newcommand{\repoPINN}{\href{https://github.com/sck-at-ucy/kbeta-pinn3d}{\texttt{kbeta-pinn3d}}}
\newcommand{\repoTrans}{\href{https://github.com/sck-at-ucy/kbeta-transformer2d}{\texttt{kbeta-transformer2d}}}
\newcommand{\doiKbeta}{\doi{10.5281/zenodo.16902740}}
\newcommand{\doiPINN}{\doi{10.5281/zenodo.16915163}}     
\newcommand{\doiTrans}{\doi{10.5281/zenodo.16911538}}
\DeclareMathOperator{\expmone}{expm1}
\newtheorem{theorem}{Theorem}[section]
\newtheorem{lemma}[theorem]{Lemma}
\theoremstyle{definition}
\newtheorem{assumption}[theorem]{Assumption}
\theoremstyle{remark}
  \renewcommand\section{\@startsection{section}{1}{\z@}%
    {2.3ex plus 1ex minus .2ex}{1.2ex plus .2ex}{\Large\bfseries}}%
  \renewcommand\subsection{\@startsection{subsection}{2}{\z@}%
    {2.3ex plus 1ex minus .2ex}{1.0ex plus .2ex}{\large\bfseries}}%
\begin{document}
\begin{frontmatter}

\title{\textbf{Kourkoutas-\(\beta\): A Sunspike-Driven Adam Optimizer with Desert Flair}}

\author[inst1]{Stavros Kassinos\corref{cor1}}
\cortext[cor1]{Corresponding author, kassinos@ucy.ac.cy}
\ead{kassinos@ucy.ac.cy}

\affiliation[inst1]{organization={Computational Sciences Laboratory, Department of Mechanical Engineering, University of Cyprus},
            addressline={1 University Avenue}, 
            city={Aglantzia},
            postcode={2109}, 
            state={Nicosia},
            country={Cyprus}
            }

\begin{abstract}
Transformer neural networks are increasingly used for physics-based problems.
In data-driven training of PDE surrogates, training samples are often generated
by solving the governing equations under heterogeneous boundary and initial
conditions. Even without stochasticity, these sample-to-sample shifts can induce
erratic changes in the loss landscape and \emph{spiky} (bursty) gradients; in Physics-Informed
Neural Networks (PINNs), stiff composite losses can amplify the effect.

We introduce \textbf{Kourkoutas-$\beta$}, an Adam-style optimizer that replaces the
fixed second-moment discount $\beta_2$ with a \emph{layer-wise} dynamic value driven by a
bounded “sunspike’’ ratio: the current pooled gradient norm divided by an EMA (with
coefficient $\alpha$) of past norms, squashed to $[0,1)$. Large spikes lower $\beta_2$ toward
$\beta_{2,\min}$ to react quickly; calm phases keep it near $\beta_{2,\max}$ to smooth updates.
Kourkoutas-$\beta$ optionally includes leaky-AMSGrad (\texttt{decay}\,$\in(0,1]$), a trust-region
clip (\texttt{max\_ratio}), and an adaptive tiny term in the denominator; bias-correction modes
are \texttt{"none"}, \texttt{"beta2max"}, and \texttt{"exact"}. When dynamic $\beta_2$ and all options
are disabled and \texttt{bias\_correction}=\texttt{"none"}, the method is \emph{exactly} Adam.

We evaluate on four testbeds that stress second‑moment adaptivity: (i) a data‑driven
Transformer PDE surrogate (Heat2D), (ii) a 3D cylindrical PINN for heat conduction (Heat3D),
(iii) a lightweight MLX synthetic task with length jitter and a rare trigger that creates deterministic gradient bursts,
and (iv) a character‑level Transformer on a 30\,MB slice of \texttt{enwik8} (\texttt{small‑enwik8}).
Across these, Kourkoutas‑$\beta$ improves stability and final loss versus Adam with fixed~$\beta_2$.
On \texttt{small‑enwik8} it lowers final bits‑per‑character by $\sim38\%$ vs.\ Adam ($\beta_2{=}0.95$) 
and $\sim58\%$ vs.\ Adam ($\beta_2{=}0.999$) over 10 matched seeds, with dramatically smaller across‑seed variance.
Kourkoutas‑$\beta$ retains drop‑in simplicity and parity‑level overhead (on par with Adam when diagnostics are off in
testbeds A-C, within  single‑digit percent in testbed D).
We sketch why, with $\beta_2 \in [\beta_{2,\min},
\beta_{2,\max}]\subset(0,1)$, the method preserves Adam-style convergence properties (e.g., sublinear
regret) yet offers practical robustness under spiky gradients.
\end{abstract}

\begin{graphicalabstract}
\centering
\includegraphics[width=0.35\textwidth]{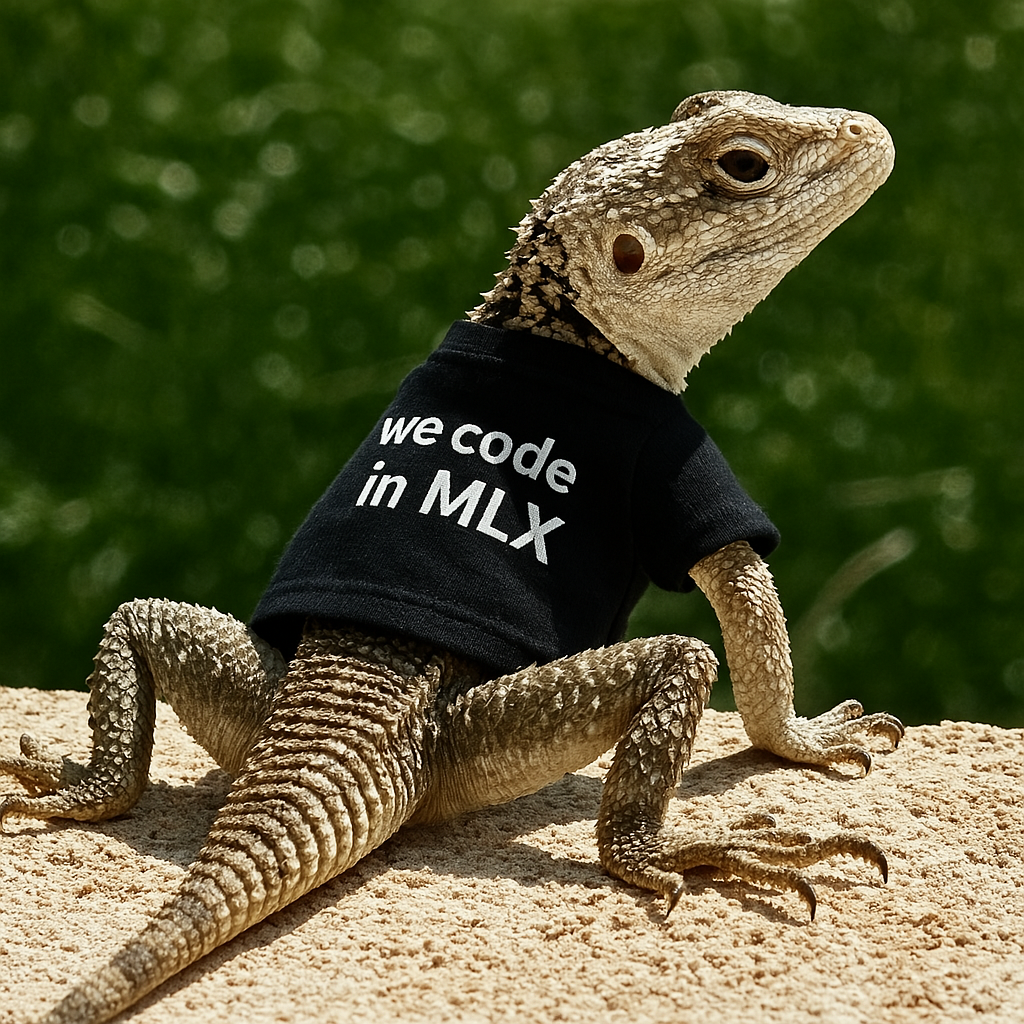}

\medskip
\textit{``Kourkoutas,'' the whimsical desert lizard of Cyprus, is symbolic of how optimizer Kourkoutas–$\beta$ explores the parameter space. Kourkoutas–$\beta$ is a variant of Adam with layer‑wise dynamic $\beta_{2}$ designed for PDE workloads and sequence models (e.g., attention) that exhibit deterministic yet heterogeneous conditions, length‑jitter, or rare‑trigger bursts.}
\end{graphicalabstract}

\begin{highlights}
\item Layer-wise dynamic $\beta_2$ adapts to gradient spikes via a bounded sunspike ratio.
\item Drop-in, Adam-faithful; exact Adam recovered when options are off.
\item Improves stability and final loss on Transformer PDE surrogates and PINNs, enabling more aggressive schedules.
\item Cuts binary cross-entropy by $\sim$10\% vs.\ Adam on bursty sequence signals (length-jitter + rare trigger).
\item Reduces final BPC by $\sim$38\% vs.\ Adam–95 and $\sim$58\% vs.\ Adam–999, with universal per-seed wins on a compact character-level LM task.
\item Retains Adam-style convergence guarantees under standard assumptions.
\end{highlights}

\begin{keyword}
Adam optimizer \sep dynamic second moment \sep PDE surrogates \sep PINNs \sep Transformers \sep MLX
\end{keyword}
\end{frontmatter}

\section{Introduction}
The optimizer proposed in this work takes its name and metaphor from observing a family of Kourkoutas lizards that had adopted my patio grill. 
In the cool morning shade they lay sluggish and hesitant, but once the first spikes of sunlight broke through the cloud cover they came alive, darting about with energy. 
At the same time, we were wrestling with Adam’s slow adaptation when training a Transformer PDE surrogate: long horizons of sluggish updates on smaller 
datasets forced us to generate much larger ones just to reach the target loss. The parallel was irresistible. 
If the lizard needs warmth to explore, so too might an optimizer need warmth from spiky gradients. 
Thus Kourkoutas-$\beta$ was born: an optimizer that reacts dynamically to sunspikes in gradient scale, 
lowering the second‑moment discount  ($\beta_2$) to explore more freely when energized, and restoring high-memory stability when conditions are calm.

Gradient-based optimization underpins modern deep learning. While Adam
\cite{kingma2015adam} is broadly effective, certain regimes produce \emph{large,
intermittent spikes} in gradient norms that slow convergence or require brittle
hyperparameter tuning. In PDE-centric workloads, heterogeneity across samples
(e.g., varying boundary/initial conditions) shifts solution features and yields
abrupt loss changes. Even without randomness in the governing physics, these
deterministic shifts manifest as “spiky’’ gradients. In Physics-Informed Neural
Networks (PINNs) the effect can be amplified by stiff composite losses.

\paragraph{Our setting and observation}
In our recent work on MLX-based Transformer surrogates for heat conduction
\cite{kassinos2025Beyond}, small or heterogeneous datasets produced notably spiky
gradients and—in very small-data regimes—training with standard
Adam often failed without brittle tuning. We therefore adapt Adam’s memory to observed spikiness instead of fixing a single global $\beta_2$.

\paragraph{Kourkoutas-$\beta$ in one line}
We propose \textbf{Kourkoutas-$\beta$}, which replaces Adam’s fixed second-moment
discount $\beta_2$ with a \emph{layer-wise} dynamic value computed from a bounded
“sunspike’’ ratio
\[
  \mathrm{raw}=\frac{\|g\|}{r+\varepsilon_{\mathrm{spike}}},\qquad
  \mathrm{sun}=\frac{\mathrm{raw}}{1+\mathrm{raw}}\in[0,1),
\]
where $r$ is an Exponential Moving Average (EMA) of pooled layer norms (coefficient $\alpha$) . The update
$\beta_{2,t}^{(\ell)}=\beta_{2,\max}-(\beta_{2,\max}-\beta_{2,\min})\,\mathrm{\texttt{sun}}$
lowers $\beta_2$ on spikes (faster reaction) and raises it toward $\beta_{2,\max}$
when calm (stronger smoothing). A short warmup optionally fixes
$\beta_{2,t}^{(\ell)}=\tfrac12(\beta_{2,\min}+\beta_{2,\max})$. Following the code, we store the (squashed) sunspike 
ratio in the variable \texttt{sun}, while \texttt{raw} denotes the unsquashed ratio. Unless stated otherwise, “sunspike” refers to \texttt{sun}.

\paragraph{Faithful to Adam—with controlled extensions}
Beyond dynamic $\beta_2$, the implementation mirrors Adam and adds: (i) optional
AMSGrad or \emph{leaky}-AMSGrad (\texttt{decay}\,$\in(0,1]$), (ii) an optional
trust-region clip \texttt{max\_ratio} providing \emph{element-wise} capping $|\Delta\theta| \le \rho\cdot\texttt{max\_ratio}$,
(iii) an optional adaptive tiny term in the denominator, and (iv) bias-correction
modes \texttt{"none"}, \texttt{"beta2max"}, and \texttt{"exact"} (the latter tracks the
per-layer product of varying $\beta_2$’s). If dynamic $\beta_2$ and all options
are off and \texttt{bias\_correction}=\texttt{"none"}, Kourkoutas-$\beta$ is
\emph{exactly} Adam. With bias-correction set to  \texttt{"beta2max"}, it becomes Adam with bias correction.

\paragraph{When and why it helps}
Kourkoutas-$\beta$ targets regimes with bursty gradients:
\begin{itemize}
  \item \textbf{Data-driven PDE surrogates (Transformers).} Heterogeneous boundary/initial
        conditions across samples create erratic gradient swings. A fixed $\beta_2$ may
        over-smooth or under-react; a dynamic $\beta_2$ reacts swiftly to spikes.
  \item \textbf{Physics-Informed Neural Networks (PINNs).} Stiff residual/BC terms can yield
        persistent bursts. Lowering $\beta_2$ on spikes improves robustness and enables
        more aggressive learning-rate schedules.
  \item \textbf{Quantization-Aware Training (QAT).} Low-bit effects increase gradient variance;
        dynamic $\beta_2$ mitigates bursts where vanilla Adam may stall.
  \item \textbf{Attention-based models and small-batch training.} Sudden attention shifts or tiny
        batches cause large norm variability; modulating $\beta_2$ stabilizes updates.
\end{itemize}

\paragraph{Empirical highlights}
We evaluate four testbeds that stress second–moment adaptation: (i) a data‑driven Transformer surrogate for 2D heat conduction (Heat2D), (ii) a 3D cylindrical PINN (Heat3D), (iii) a non‑PDE MLX synthetic task with length jitter and a 1\% rare trigger (Testbed~C), and (iv) a character‑level Transformer on a 30\,MB slice of \texttt{enwik8} (\texttt{small‑enwik8}) (Testbed~D).
On \textbf{Heat2D} (30 matched seeds; epoch~100), Kourkoutas‑$\beta$ lowers training MSE by \textbf{12.8\%} versus Adam–95 and \textbf{39.4\%} versus Adam–999, winning on \textbf{24/30} and \textbf{30/30} seeds, respectively (paired $t$: $t(29){=}4.205$ and $11.160$; see \S\ref{sec:transformer}, Tables~\ref{tab:transformer-abs}–\ref{tab:transformer-stats-3tests}). 
On the \textbf{Heat3D PINN} (100k epochs; 10 seeds), Kourkoutas‑$\beta$ \textbf{converges in 10/10 runs} (95\% CI [69.2\%,\,100\%]) with median final loss $1.66{\times}10^{-6}$, whereas Adam–95 succeeds in \textbf{1/10} and Adam–999 in \textbf{0/10} (McNemar exact $p{=}0.0039$ and $0.00195$; Table~\ref{tab:pinn3d_summary}, Table~\ref{tab:pinn3d-mcnemar}). 
On \textbf{Testbed~C} (length‑jitter + 1\% rare trigger; 10 seeds), Kourkoutas‑$\beta$ attains lower BCE on \textbf{8/10} seeds and a significant advantage on $\log$‑loss: geometric‑mean loss ratios \textbf{$1.105{\times}$} vs.\ Adam–95 and \textbf{$4.13{\times}$} vs.\ Adam–999 (paired $t(9){=}4.29$, $p{=}0.0022$; Wilcoxon $p{=}0.0098$; \S\ref{sec:toy-rare-trigger}, Tables~\ref{tab:toyrare-seeds}–\ref{tab:toyrare-stats}). 
To test generality beyond physics, \textbf{Testbed~D} trains a compact character‑level Transformer on a 30\,MB \texttt{enwik8} slice with variable lengths (16–512): Kourkoutas‑$\beta$ reaches \textbf{$1.639\pm0.027$ BPC} vs.\ $2.637\pm0.681$ (Adam–95) and $3.906\pm0.087$ (Adam–999), winning \textbf{10/10} seeds; paired $t$ raw $p$‑values are $1.337{\times}10^{-3}$ (Adam–95) and $1.129{\times}10^{-14}$ (Adam–999), Holm‑adjusted to $1.33{\times}10^{-3}$ and $2.40{\times}10^{-14}$ (Tables~\ref{tab:enwik8-abs}–\ref{tab:enwik8-stats}).
Collectively, these results support the central claim: shortening the second‑moment memory during spiky phases and restoring it in calm phases yields consistent stability and loss gains across both PDE and non‑PDE settings.

\paragraph{Theory in brief}
Because $\beta_{2,t}^{(\ell)}\in[\beta_{2,\min},\beta_{2,\max}]\subset(0,1)$ and the
second-moment denominator uses (soft-)monotone $\widehat v_t$, the method retains
Adam-style guarantees (e.g., sublinear regret) under standard assumptions, while
providing practical robustness under spiky gradients.

\section*{Code and data availability}

All code is released under the MIT license. Public repositories and their
archived, version-specific releases are:

\begin{itemize}
  \item Optimizer: \repoKbeta{} (archived release: \doiKbeta) \cite{kbeta_software}.
  \item 3-D PINN benchmark: \repoPINN{} (archived release: \doiPINN) \cite{pinn3d_software}.
  \item 2-D Transformer benchmark: \repoTrans{} (archived release: \doiTrans) \cite{transformer2d_software}.
\end{itemize}

Each repository includes a fully reproducible training script and an environment
file pinned to \texttt{mlx==0.26.3}. The Testbed-C script
\texttt{rare\_trigger\_toy.py} is included verbatim in \ref{app:trigger-code}. For Testbed-D,
\ref{app:codeTestbedD} lists the exact commands; the training code
\texttt{testbed\_d.py} is in \repoKbeta{} under
“Examples”.

\paragraph{\bf Reproducibility note on MLX version}
All Adam baselines reported in this paper were run with MLX~v0.26.3 using the native MLX Adam implementation. We observed minor behavior changes in Adam across nearby MLX releases (e.g., 0.26.3~$\rightarrow$~0.26.5), likely due to kernel and/or default updates. Kourkoutas‑$\beta$ is implemented in our codebase and was unaffected. To reproduce the exact Adam numbers, please use MLX~v0.26.3 (see \ref{app:env} for the environment file). The released repositories are pinned to MLX~v0.26.3, but the code is also compatible with the latest MLX release as of publication.


\section{Kourkoutas-\(\beta\) Overview}
\label{sec:overview}

Kourkoutas-\(\beta\) replaces Adam’s fixed second–moment discount \(\beta_2\) with a 
\emph{layer‑wise} dynamic value driven by a “sunspike” ratio that compares the
current gradient norm to an exponential moving average (EMA). The options discussed in this 
section are summarized in Algorithm~\ref{alg:Algorithms}.

\subsection{Per‑layer pooling and EMA.}
\label{sec:per-layer-pool}
For each layer \(\ell\), define the pooled gradient norm and its EMA
\[
  \|g_t^{(\ell)}\| \;=\; \sqrt{\sum_{p\in\ell}\sum_i g_{t,i}(p)^2},
  \qquad
  r_t^{(\ell)} \;=\; \alpha\,r_{t-1}^{(\ell)} + (1-\alpha)\,\|g_t^{(\ell)}\|,
  \quad r_0^{(\ell)}=0,
\]
where \(\alpha\in(0,1)\) is the EMA coefficient.

\paragraph{Bucketization and notation}
The index \(\ell\) denotes a \emph{parameter bucket}, not necessarily an architectural layer.
Buckets are induced by a user-supplied mapping \texttt{layer\_key\_fn} that assigns each parameter tensor \(p\) to a discrete key; tensors with the same key share the pooled statistic \(r^{(\ell)}_t\) and the resulting per-bucket discount \(\beta^{(\ell)}_{2,t}\).
Typical choices are a stable module path (Transformer), the tensor shape \(p.\mathrm{shape}\) (PINN), or a single global bucket via the constant map $p \mapsto 0$ (i.e., \texttt{layer\_key\_fn=lambda \_: 0}).
\emph{Importantly, when \(\beta_2\) is fixed} (\(\beta_{2,\min}=\beta_{2,\max}=\beta_2\)), the update is \emph{invariant} to \texttt{layer\_key\_fn}: pooled statistics are unused and the method reduces exactly to Adam under the settings in §2.7 and Table~\ref{tab:kbeta-adam-cases}.
See \ref{appendix:KbetaASAdam} for per-step equivalence checks against MLX Adam (with and without bias correction).

\subsection{Sunspike and dynamic \(\beta_2\)}
With a small \(\varepsilon_{\mathrm{spike}}>0\) (code: \texttt{tiny\_spike}), define
\[
  \mathrm{raw}^{(\ell)} \;=\; \frac{\|g_t^{(\ell)}\|}{\,r_t^{(\ell)}+\varepsilon_{\mathrm{spike}}\,},
  \qquad
  \mathrm{sun}^{(\ell)} \;=\; \frac{\mathrm{raw}^{(\ell)}}{1+\mathrm{raw}^{(\ell)}} \in [0,1).
\]
The squashing in \texttt{sun} guards against runaway behavior. 
The per‑layer discount is
\[
  \beta_{2,t}^{(\ell)} \;=\;
  \begin{cases}
    \tfrac{1}{2}\bigl(\beta_{2,\min}+\beta_{2,\max}\bigr), & t \le \texttt{warmup\_steps},\\[4pt]
    \beta_{2,\max} - \bigl(\beta_{2,\max}-\beta_{2,\min}\bigr)\,\mathrm{sun}^{(\ell)}, & t > \texttt{warmup\_steps},
  \end{cases}
\]
so large spikes drive \(\beta_{2,t}^{(\ell)}\!\to\!\beta_{2,\min}\) while calm steps keep
\(\beta_{2,t}^{(\ell)}\!\approx\!\beta_{2,\max}\).

\subsection{Moments and \(v\) variants.}
For each parameter tensor \(p\) in layer \(\ell\),
\[
m_t(p) \;=\; \beta_1\,m_{t-1}(p) + (1-\beta_1)\,g_t(p),
\qquad
v_t(p) \;=\; \beta^{(\ell)}_{2,t}\,v_{t-1}(p) + \bigl(1-\beta^{(\ell)}_{2,t}\bigr)\,g_t(p)^{\odot 2},
\]
where \(g_t(p)^{\odot 2}\) denotes the \emph{elementwise} square. The second moment used in the denominator is
\[
\widehat v_t(p) =
\left\{
\begin{array}{@{}l@{\quad}l@{}}
  v_t(p) 
    & \text{no }v_{\max} \text {(no AMSGrad)}, \\[4pt]
  \max\ \!\bigl(v_t(p),\,v^{\max}_{t-1}(p)\bigr),\ 
    \texttt{decay}=1\ \text{or}\ 
    (\texttt{decay}=\texttt{None} \ \text{\&} \ v_{\max} \ \text{active via} \ \texttt{max\_ratio}) 
    & \textbf{AMSGrad}, \\[6pt]
  \max\ \!\bigl(v_t(p),\,\delta\,v^{\max}_{t-1}(p)\bigr),\ 
    \delta=\texttt{decay}\in(0,1) 
    & \textbf{leaky-AMSGrad}.
\end{array}
\right.
\]

\paragraph{Activation logic and coupling to trust‑region}
In our implementation the \(v_{\max}\) buffer is maintained iff either the AMSGrad knob is enabled 
\(\bigl(\delta=\texttt{decay}\in(0,1]\bigr)\) \emph{or} the trust‑region clip is enabled \(\bigl(\texttt{max\_ratio}\neq\texttt{None}\bigr)\).
Thus:
\begin{itemize}
  \item \(\delta=\texttt{None}\) and \(\texttt{max\_ratio}=\texttt{None}\) \(\Rightarrow\) no \(v_{\max}\) (plain \(v\)).
  \item \(\delta=1.0\) \(\Rightarrow\) hard AMSGrad (non‑decreasing \(v_{\max}\)).
  \item \(\delta\in(0,1)\) \(\Rightarrow\) leaky AMSGrad:
        \(\;v^{\max}_t = \max\bigl(\delta\,v^{\max}_{t-1},\,v_t\bigr)\).
  \item \(\texttt{max\_ratio}\neq\texttt{None}\) \(\Rightarrow\) \(v_{\max}\) is active. If \(\delta=\texttt{None}\) or \(\delta=1.0\),
        the update uses hard AMSGrad together with clipping; if \(\delta\in(0,1)\) it uses leaky‑AMSGrad with clipping.
\end{itemize}

\noindent With the current code, enabling the clip alone implicitly uses AMSGrad; there is no “clip‑only without \(v_{\max}\)” path.
To obtain hard AMSGrad without the clip, set \(\delta=1.0\) and leave \(\texttt{max\_ratio}=\texttt{None}\).
(Alg.~\ref{alg:Algorithms} initializes \(v_{\max}\) iff \texttt{decay} or \texttt{max\_ratio} is set; the per‑step rule then selects the hard vs.\ soft‑max branch.)

\begin{table}[h]
\centering
\caption{Denominator / clip behaviour by settings (implementation semantics).}
\label{tab:denom-clip-matrix}
\begin{tabular}{l l l l l}
\toprule
\textbf{decay} & \textbf{max\_ratio} & \textbf{$v_{\max}$ active?} & \textbf{AMSGrad mode} & \textbf{Clip?} \\
\midrule
None   & None      & No                   & ---                 & No \\
1.0    & None      & Yes                  & Hard (non-decreasing) & No \\
(0,1)  & None      & Yes                  & Soft-max (leaky)    & No \\
None   & $>0$      & Yes                  & Hard                & Yes \\
1.0    & $>0$      & Yes                  & Hard                & Yes \\
(0,1)  & $>0$      & Yes                  & Soft-max (leaky)    & Yes \\
\bottomrule
\end{tabular}

\vspace{0.5ex}
\footnotesize\emph{Notes.} (i) In the current code the trust-region clip implies $v_{\max}$ allocation; there is no clip-only path. (ii) To get hard AMSGrad without the clip, use \texttt{decay}=1.0 and \texttt{max\_ratio}=None. (iii) With \texttt{decay}=None and \texttt{max\_ratio}=None the method reduces to the plain $v$ denominator. 
\end{table}

\subsection{Bias correction options}
Let
\[
a_{1,t} \;=\;
\begin{cases}
  1, & \text{\texttt{bias\_correction} = ``none''},\\[3pt]
  1-\beta_1^t, & \text{\texttt{bias\_correction} \(\in\) \{``beta2max'', ``exact''\}},
\end{cases}
\]
\[
b_{2,t}^{(\ell)} \;=\;
\begin{cases}
  1, & \text{\texttt{bias\_correction} = ``none''},\\[3pt]
  1 - \beta_{2,\max}^{\,t}, & \text{\texttt{bias\_correction} = ``beta2max''},\\[6pt]
  1 - \displaystyle\prod_{i=1}^{t} \beta_{2,i}^{(\ell)}, & \text{\texttt{bias\_correction} = ``exact''}.
\end{cases}
\]
For quick reference:
\[
\text{``beta2max'':}\quad b_{2,t} = 1 - \beta_{2,\max}^{\,t},
\qquad
\text{``exact'':}\quad b_{2,t}^{(\ell)} = 1 - \prod_{i=1}^{t} \beta_{2,i}^{(\ell)}.
\]

\paragraph{Numerical notes on  \texttt{"beta2max"} and \texttt{"exact"}}%
\texttt{“beta2max”} lower‑bounds\footnote{\[
\prod_{i=1}^t \beta_{2,i}\;\le\;\beta_{2,\max}^{\,t}
\quad\Longrightarrow\quad
1-\textstyle\prod_{i=1}^t \beta_{2,i}\;\ge\;1-\beta_{2,\max}^{\,t}.
\]
Thus using \(b_{2,t}=1-\beta_{2,\max}^{\,t}\) makes the denominator slightly larger (a conservative choice) compared to dividing by the
exact factor \(1-\prod_{i=1}^t\beta_{2,i}\).} the true denominator correction when $\beta_{2,t}\le \beta_{2,\max}$.
In \texttt{“exact”} we track the per‑layer product $\prod_{i=1}^t \beta^{(\ell)}_{2,i}$. 
The reference implementation keeps a per‑parameter slot (equivalently, a per‑layer scalar) that accumulates the product $\prod_{i=1}^{t}\beta^{(\ell)}_{2,i}$; 
since all parameters in a bucket share the same $\beta^{(\ell)}_{2,t}$, storing it per parameter or per layer is mathematically identical and it is broadcast within the bucket at use time.
At very long horizons the product can underflow
to zero, making $b_{2,t}^{(\ell)} \approx 1$, which is benign once
$b_{2,t}^{(\ell)}$ is near one, but log‑space accumulation is numerically preferable: keep $s^{(\ell)}_{2,t}=\sum_{i=1}^t \log \beta^{(\ell)}_{2,i}$ and 
compute $b^{(\ell)}_{2,t} = -\expmone\!\big(s^{(\ell)}_{2,t}\big)$.

\subsection{Denominator and update.}
Let \(\delta_{\mathrm{tiny}}\in\{0,1\}\) denote the adaptive‑tiny switch
(\(\delta_{\mathrm{tiny}}=1\) when \texttt{adaptive\_tiny}=\texttt{True}, else \(0\)).
With \(\varepsilon\) matching the code’s \texttt{eps} and
\(\langle|p|\rangle\) is the elementwise mean of \(|p|\),
\[
\mathrm{denom}_t(p)
\;=\;
\sqrt{\frac{\widehat v_t(p)}{\,b_{2,t}^{(\ell)}\,}}
\;+\;
\varepsilon
\;+\;
\delta_{\mathrm{tiny}}\,
\text{\texttt{tiny\_denom}}\,
\max\ \!\bigl(\langle |p|\rangle,\,1\bigr).
\]
The step and parameter update are
\[
\Delta\theta_t(p)\;=\;\frac{\rho}{a_{1,t}}\;\frac{m_t(p)}{\mathrm{denom}_t(p)},
\qquad
\theta_{t+1}(p)\;=\;\theta_t(p)\;-\;
\mathrm{clip}\!\left(\Delta\theta_t(p),\,\pm\,\rho\cdot\texttt{max\_ratio}\right),
\]
where the clip applies only when \texttt{max\_ratio} is set; otherwise
\(\mathrm{clip}(x,\pm L)=x\).

\subsubsection{Trust-region clip (\texttt{max\_ratio}).}
We optionally cap the elementwise update magnitude after bias correction:
\[
\Delta\theta_t(p) \leftarrow \operatorname{clip}\ \!\bigl(\Delta\theta_t(p),\,\pm\rho\cdot\texttt{max\_ratio}\bigr),
\]
which enforces $\lvert\Delta\theta_t(p)\rvert \le \rho\cdot\texttt{max\_ratio}$ coordinate-wise 
(Alg.~\ref{alg:Algorithms}, trust‑region block). 
This guard stabilizes rare bursts without altering the gradient statistics or the $\beta_2$ logic.

\paragraph{Interaction with $v_{\max}$}
For implementation economy, the optimizer allocates and maintains a $v_{\max}$ buffer whenever the clip is enabled; consequently, the denominator uses AMSGrad with a hard bound when 
$\texttt{max\_ratio}\neq\texttt{None}$ and $\delta=\texttt{None}$. 
If you prefer clip-only (no AMSGrad), the code would need a minor refactor to suppress $v_{\max}$ allocation in that case. 
(See \S2.3 ``Activation logic''.)

\subsection{Warmup.}
For steps \(t \le \texttt{warmup\_steps}\) the sunspike is held at zero and
\(\beta_{2,t}^{(\ell)}=\tfrac12\bigl(\beta_{2,\min}+\beta_{2,\max}\bigr)\). Warmup does not affect the learning rate.

\subsection{Plain Adam as a special case (exact equivalence)}
\label{sec:plain-adam-as-sc}
To make Kourkoutas-\(\beta\) behave exactly like plain Adam with \texttt{bias\_correction}=\texttt{"none"}, set:
\begin{itemize}
  \item \textbf{No dynamic \(\beta_2\):} \(\beta_{2,\min}=\beta_{2,\max}=\beta_2\) (e.g.\ \(0.999\)).
  \item \textbf{Bias correction (BC) off:} \texttt{bias\_correction}=\texttt{"none"} \(\Rightarrow a_{1,t}=b_{2,t}^{(\ell)}=1\).
  \item \textbf{No AMSGrad / soft‑max leak:} \texttt{decay}=\texttt{None}.
  \item \textbf{No trust‑region clip:} \texttt{max\_ratio}=\texttt{None}.
  \item \textbf{No adaptive tiny:} \texttt{adaptive\_tiny}=\texttt{False}.
  \item \textbf{No warmup:} \texttt{warmup\_steps}=0.
  \item Diagnostics may be on or off; they do not change the math.
\end{itemize}
With those settings, for each parameter tensor \(p\),
\[
\Delta\theta_t(p)\;=\;\rho\,\frac{m_t(p)}{\sqrt{v_t(p)}+\varepsilon},
\quad
m_t(p)=\beta_1 m_{t-1}(p)+(1-\beta_1)g_t(p),\quad
v_t(p)=\beta_2 v_{t-1}(p)+(1-\beta_2)\,g_t(p)^{\odot 2},
\]
which is standard Adam with bias correction disabled. We verified per‑step numerical equivalence between MLX Adam (bias‑correction off) and Kourkoutas‑\(\beta\) configured as Adam, within O(1e‑6) absolute error in FP32 (\ref{appendix:ablations}), with residual differences attributable to benign differences in arithmetic ordering and kernel fusion.

\paragraph{Clarifications}
\begin{itemize}
  \item The \texttt{"beta2max"} and \texttt{"exact"} bias‑correction options are specific to Kourkoutas-\(\beta\) (useful when \(\beta_2\) is dynamic). If \(\beta_2\) is fixed, both reduce to the usual \(1-\beta_2^t\) correction.
  \item If you \emph{keep} dynamic \(\beta_2\) but set \texttt{bias\_correction}=\texttt{"none"}, you intentionally deviate from standard Adam, which assumes a fixed \(\beta_2\). 
  \item  {\bf Note:} When $\beta_2$ is fixed, the choice of \texttt{layer\_key\_fn} (single global bucket $\lambda\!:\!0$, coarser pooling \texttt{p.shape}, or a fine module path/id) has no effect on the update.

\end{itemize}
In summary, Table~\ref{tab:kbeta-adam-cases} shows when Kourkoutas–$\beta$ matches or deviates from Adam.

\begin{table}[h]
\centering
\caption{Cases where Kourkoutas--$\beta$ matches or deviates from Adam.}
\label{tab:kbeta-adam-cases} 
\setlength{\tabcolsep}{5pt}
\begin{tabular}{@{}llll@{}}
\toprule
\textbf{Dynamic $\beta_2$} & \textbf{Other opts} & \textbf{BC mode} & \textbf{Behavior} \\
\midrule
Off & All off & \texttt{none} & Exact Adam (no BC) \\
Off & All off & \texttt{beta2max}, $\beta_{2,\max}=\beta_2$ & Exact Adam (with BC) \\
On  & ---     & \texttt{none} & Deviates (intentional) \\
On  & ---     & \texttt{beta2max} & Not standard Adam (BC uses capped $\beta_{2,t}$) \\
\bottomrule
\end{tabular}
\end{table}

\subsection{High-Level Intuition}

Recall the metaphor introduced in the opening section: Kourkoutas-$\beta$ behaves like a lizard whose activity depends on the warmth of the sun. 
When the sun spikes—the gradient is large relative to its history—the optimizer lowers its memory parameter ($\beta_2$) and moves more freely; when the gradients are calm, 
it restores $\beta_2$ toward its maximum, maintaining long-memory stability. This simple picture motivates the formal dynamic update rule we now describe.

\subsection{Paper Contributions}
\begin{itemize}
    \item \emph{Dynamic \(\beta_2\) advantage:} We show how Kourkoutas-\(\beta\) helps avoid 
        or reduce the “spiky gradient slowdowns” seen in PDE surrogates, 
        QAT, or attention-based tasks.
    \item \emph{Practical formula and code:} 
        Just a few lines differ from standard Adam. 
        The approach is drop-in for existing frameworks.
    \item \emph{Convergence analysis:} 
        We provide a theoretical sketch showing that even though \(\beta_2\) changes
        per iteration, the method inherits Adam’s sublinear \emph{regret} or 
        diminishing gradient norm properties, under usual assumptions.
\end{itemize}


\section{The Kourkoutas-\(\beta\) Optimizer in Practice}
\label{sec:practice}

For the concrete MLX implementation see public GitHub repo~\repoKbeta. This section records only the
\emph{code-level toggles and defaults}. All equations and definitions (pooled norms and EMA, sunspike, dynamic
\(\beta_2\), second-moment variants \(\widehat v_t\), bias-correction factors, denominator, update rule, and warmup)
appear  in \S\ref{sec:overview}.
\medskip

\noindent\textbf{Implementation notes (pointers to \S\ref{sec:overview}).}
\begin{itemize}
  \item \emph{Layer grouping.} As defined in \ref{sec:per-layer-pool}, layers are formed by a user-supplied \texttt{layer\_key\_fn} (default: tensor
        identity/name). The pooled norm is the L2 over all parameters in a layer.
  \item \emph{Warmup.} As in \S\ref{sec:overview}, during $t\le\texttt{warmup\_steps}$ the sunspike is held at zero and
        \(\beta_{2,t}^{(\ell)}=\tfrac12(\beta_{2,\min}+\beta_{2,\max})\).
   \item \emph{Bias correction.}  Modes are \texttt{"none"}, \texttt{"beta2max"}, and \texttt{"exact"}; definitions are in §2. 
For \texttt{"exact"}, the reference implementation stores a \emph{per-parameter} cumulative product \texttt{st["beta2\_cumprod"]} and broadcasts within each bucket (equivalently, one scalar per layer). 
See Table~\ref{tab:denom-clip-matrix} for the relation to (non)standard Adam.       
 \item \emph{Denominator variants (AMSGrad / leaky-AMSGrad).} Implemented exactly as described in
        \S\ref{sec:overview}: the $v_{\max}$ buffer is active whenever 
either decay is set (leaky-AMSGrad; $\delta\in(0,1]$) or \texttt{max\_ratio} is set (trust-region). With \texttt{decay} = \texttt{None}
 and \texttt{max\_ratio=None} we use the plain $v$ denominator. With \texttt{decay}=\texttt{1.0} we recover hard AMSGrad (non‑decreasing $v_{\max}$).  
            
  \item \emph{Adaptive tiny.} Optional additive term \(\texttt{tiny\_denom}\cdot\max(\langle|p|\rangle,1)\) in the
        denominator; see \S\ref{sec:overview} for the full denominator expression, where $\langle|p|\rangle$ is the scalar mean of the absolute values in tensor $p$.
  \item \emph{Trust‑region clip.} Optional elementwise clip \(|\Delta\theta_t|\le \rho\cdot\texttt{max\_ratio}\) applied after bias correction (see the
        update in \S\ref{sec:overview} and Alg.~\ref{alg:Algorithms} lines 25-27).  Note: in this implementation, enabling the clip also enables the $v_{\max}$ buffer (AMSGrad) by construction; to have clip‑only you would need a minor code refactor.  
        
  \item \emph{Plain Adam equivalence.} The exact conditions are stated once in \S\ref{sec:overview} (``Plain Adam as a
        special case'') and verified per-step in~\ref{appendix:ablations}.
\end{itemize}

\subsection*{Why another Adam variant?}
Kourkoutas-\(\beta\) differs from prior Adam‑style methods by adapting \(\beta_2\) \emph{layer‑wise}
in response to a bounded sunspike signal \((\in[0,1))\): large spikes lower \(\beta_2\) to react faster,
calm phases keep \(\beta_2\) near \(\beta_{2,\max}\) for smoothing. Optional leaky-AMSGrad, trust‑region clipping, adaptive tiny, and
bias‑correction modes round out a drop‑in, code‑faithful optimizer that is robust under spiky gradients
(e.g.\ in PDE and attention workloads). In contrast, AdaBelief modifies the
\emph{variance model} (belief residuals), RAdam rectifies early‑phase variance without dynamic momentum,
Yogi stabilizes the second‑moment growth via additive updates, and RMSProp keeps a fixed decay with no
mechanism to respond to spikes. 
\medskip

\noindent\textit{On other Adam variants.} We cite AdaBelief, RAdam, Yogi, and RMSProp but did not re-run these baselines in our regimes. Our goal here is to isolate the effect of \emph{layer‑wise dynamic} $\beta_2$, so the most relevant control is Adam with a \emph{fixed} $\beta_2$ under identical schedules and $\beta_1,\varepsilon$ -- which our implementation exactly recovers when dynamic $\beta_2$ and all extras are disabled (see \S\ref{sec:plain-adam-as-sc}, "Plain Adam as a special case").  By contrast, these methods target different failure modes: RAdam rectifies early‑phase variance, Yogi controls second‑moment growth via additive updates, AdaBelief changes the variance model, and RMSProp uses a fixed decay with no mechanism to respond to spikes; these are orthogonal to our lever (dynamic $\beta_2$) and would require problem‑specific tuning for a fair comparison. We therefore focus on fixed‑$\beta_2$ Adam as the matched control and leave a full head‑to‑head with those variants to future work (code will be released to facilitate drop‑in comparisons).

\makeatletter
\algrenewcommand\alglinenumber[1]{\hbox to 2.0em{\scriptsize #1:\hss}}
\makeatother
\algrenewcommand\algorithmicindent{1.2em}
\algrenewcommand{\algorithmiccomment}[1]{\hspace{2.5em}{\scriptsize$\triangleright$}~\scriptsize #1}

\Needspace{40\baselineskip}
\begin{figure}[H]  

\captionof{algorithm}{\bf Adam (bias correction off)}\label{alg:adam}
{\footnotesize
\begin{algorithmic}[1]
  \Require step size $\rho>0$, $\beta_1,\beta_2\in(0,1)$, $\varepsilon>0$
  \State Initialize $m_0(p)\gets 0$, $v_0(p)\gets 0$ for all parameters $p$
  \For{$t=1,2,\dots$}
    \State Compute gradients $g_t(p)=\nabla_\theta f(\theta_{t-1})(p)$
    \State $m_t(p)\gets \beta_1 m_{t-1}(p) + (1-\beta_1)\,g_t(p)$
    \State $v_t(p)\gets \beta_2 v_{t-1}(p) + (1-\beta_2)\,g_t(p)^{\odot 2}$ \Comment{elementwise square}
    \State $\Delta\theta_t(p)\gets \displaystyle \rho\,\frac{m_t(p)}{\sqrt{v_t(p)}+\varepsilon}$
    \State $\theta_t(p)\gets \theta_{t-1}(p)-\Delta\theta_t(p)$
  \EndFor
\end{algorithmic}
}

\captionsetup{type=algorithm}
\captionof{algorithm}{\bf Kourkoutas-\texorpdfstring{$\beta$}{beta} (softmax-flex; dynamic $\beta_2$ with optional (leaky) AMSGrad and trust-region)}\label{alg:Algorithms}

{\footnotesize
\begin{algorithmic}[1]
  \Require step size $\rho>0$, $\beta_1\in(0,1)$, $\beta_{2,\min}\le \beta_{2,\max}$, EMA $\alpha\in(0,1)$ (optionally scheduled), $\varepsilon>0$
  \Require options: \texttt{tiny\_spike}, \texttt{tiny\_denom},
           \texttt{decay} $\in (0,1] \cup \{0, \text{\texttt{None}}\}$,
           \texttt{max\_ratio} $\in \mathbb{R}_{+} \cup \{\text{\texttt{None}}\}$,
           \texttt{adaptive\_tiny} $\in \{0,1\}$,
           \texttt{bias\_correction} $\in \{\mathtt{none},\mathtt{beta2max},\mathtt{exact}\}$,
           \texttt{warmup\_steps}, \texttt{layer\_key\_fn}

  \State Initialize $m_0(p)\gets 0$, $v_0(p)\gets 0$ for all $p$
  \If{\texttt{decay}$\neq$\texttt{None} \textbf{or} \texttt{max\_ratio}$\neq$\texttt{None}}
    \State set $v^{\max}_0(p)\gets 0$ \Comment{$v^{\max}$ exists iff \texttt{decay} or \texttt{max\_ratio} is set (trust-region alone activates it)}
  \EndIf
  \For{\textbf{each bucket (a.k.a.\ layer)} $\ell$} \State $r^{(\ell)}_0\gets 0$ \EndFor

\Statex \textit{Semantics (AMSGrad / clip interplay; code-true):}
\Statex \quad\textbullet\; \texttt{decay}=\texttt{None} \& \texttt{max\_ratio}=\texttt{None} $\Rightarrow$ \textbf{no $v_{\max}$} (plain $v$).
\Statex \quad\textbullet\; \texttt{decay}=1.0 $\Rightarrow$ \textbf{hard AMSGrad} (non-decreasing $v_{\max}$):
$v^{\max}_{t} \gets \max(v^{\max}_{t-1},\, v_t)$; clipping applies if \texttt{max\_ratio} is set.
\Statex \quad\textbullet\; \texttt{decay}$\,\in\,(0,1)$ $\Rightarrow$ \textbf{leaky-AMSGrad}:
$v^{\max}_{t} \gets \max(\texttt{decay}\cdot v^{\max}_{t-1},\, v_t)$; clipping applies if set.
\Statex \quad\textbullet\; \texttt{decay}=0 $\Rightarrow$ \textbf{degenerate AMSGrad}:
$v^{\max}_t=\max(0\cdot v^{\max}_{t-1}, v_t)=v_t$, so $\widehat v_t=v_t$; clipping applies if set.
\Statex \quad\textbullet\; \texttt{max\_ratio}\,$\neq$\,\texttt{None} $\Rightarrow$ \textbf{clipping} always applies (post bias-correction) \emph{and} a $v_{\max}$ buffer is allocated; the denominator mode is:
\Statex \quad\hspace{2.2em}\textemdash\; \texttt{decay}=\texttt{None} or $1.0$: \textbf{hard AMSGrad} $($with clipping$)$;
\Statex \quad\hspace{2.2em}\textemdash\; \texttt{decay}\,$\in$\,($0,1$): \textbf{leaky-AMSGrad} $($with clipping$)$;
\Statex \quad\hspace{2.2em}\textemdash\; \texttt{decay}=0: \textbf{degenerate} ($\widehat v_t=v_t$) $($with clipping$)$.

  \For{$t=1,2,\dots$}
    \State Compute $g_t(p)$ and group parameters into layers $\ell$ via \texttt{layer\_key\_fn}
    \For{each layer $\ell$}
      \State $\|g_t^{(\ell)}\|\gets \sqrt{\sum_{p\in\ell}\sum_i g_{t,i}(p)^2}$
      \State $r_t^{(\ell)}\gets \alpha\,r_{t-1}^{(\ell)} + (1-\alpha)\,\|g_t^{(\ell)}\|$ \Comment{EMA of pooled grad norm}
      \State $\text{raw}\gets \dfrac{\|g_t^{(\ell)}\|}{\,r_t^{(\ell)}+\texttt{tiny\_spike}\,}$,\quad
             $\text{sun}\gets \begin{cases}
               0, & t \le \texttt{warmup\_steps}\\[2pt]
               \text{raw}/(1+\text{raw}), & \text{otherwise}
             \end{cases}$
      \State $\beta^{(\ell)}_{2,t}\gets \begin{cases}
         \tfrac12(\beta_{2,\min}+\beta_{2,\max}), & t \le \texttt{warmup\_steps}\\[2pt]
         \beta_{2,\max}-(\beta_{2,\max}-\beta_{2,\min})\,\text{sun}, & \text{otherwise}
      \end{cases}$

      \For{each $p\in\ell$}
        \State $m_t(p)\gets \beta_1 m_{t-1}(p) + (1-\beta_1)\,g_t(p)$
        \State $v_t(p)\gets \beta^{(\ell)}_{2,t} v_{t-1}(p) + \bigl(1-\beta^{(\ell)}_{2,t}\bigr)\,g_t(p)^{\odot 2}$
        \State \textbf{choose }$\widehat v_t(p)$ \textbf{(mutually exclusive)} \Comment{If \texttt{max\_ratio} is set and \texttt{decay} is None, the hard AMSGrad line applies.}
        \If{\texttt{decay}$\in(0,1)$} \Comment{leaky-AMSGrad (soft-max bound)}
          \State $v^{\max}_t \gets \max(\texttt{decay}\cdot v^{\max}_{t-1},\, v_t)$; \quad $\widehat v_t \gets v^{\max}_t$
        \ElsIf{\texttt{decay}$=1.0$ \textbf{ or } (\texttt{decay}$=\texttt{None}$ \textbf{ and } \texttt{max\_ratio}$\neq\texttt{None}$)} \Comment{hard AMSGrad}
          \State $v^{\max}_t \gets \max(v^{\max}_{t-1},\, v_t)$; \quad $\widehat v_t \gets v^{\max}_t$
        \Else \Comment{plain Adam-style denominator (no $v^{\max}$)}
          \State $\widehat v_t \gets v_t$
        \EndIf

        \State $a_{1,t}\gets \begin{cases}
          1, & \texttt{bias\_correction}=\texttt{"none"}\\
          1-\beta_1^t, & \texttt{"beta2max"}\ \text{or}\ \texttt{"exact"}
        \end{cases}$

        \State $b_{2,t}^{(\ell)}\gets \begin{cases}
          1, & \texttt{bias\_correction}=\texttt{"none"}\\
          1-\beta_{2,\max}^{\,t}, & \texttt{"beta2max"}\\
          1-\prod_{i=1}^{t}\beta^{(\ell)}_{2,i}, & \texttt{"exact"}
        \end{cases}$

        \State Let $\delta_{\mathrm{tiny}} \gets \mathbf{1}\{\texttt{adaptive\_tiny}\}$
        \State $\mathrm{denom}_t(p)\gets \sqrt{\dfrac{\widehat v_t(p)}{\,b_{2,t}^{(\ell)}\,}} + \varepsilon
               + \delta_{\mathrm{tiny}}\cdot\texttt{tiny\_denom}\cdot \max\bigl(\langle|p|\rangle,1\bigr)$
        \State $\Delta\theta_t(p)\gets \displaystyle \frac{\rho}{a_{1,t}}\,\frac{m_t(p)}{\mathrm{denom}_t(p)}$

        \If{\texttt{max\_ratio}$\neq$\texttt{None}} \Comment{elementwise trust region}
          \State $\Delta\theta_t(p)\gets \mathrm{clip}\!\bigl(\Delta\theta_t(p),\,\pm\,\rho\cdot\texttt{max\_ratio}\bigr)$
          \Statex \hspace{2.8em}\Comment{\footnotesize Clip does not set \texttt{decay}, but in this implementation \emph{does} imply $v^{\max}$ is maintained (hard AMSGrad if \texttt{decay} is None).}
        \EndIf

        \State $\theta_t(p)\gets \theta_{t-1}(p)-\Delta\theta_t(p)$
      \EndFor
    \EndFor
  \EndFor
\end{algorithmic}
}
\end{figure}

\noindent\textbf{Bucketization.}
In Algorithm~\ref{alg:Algorithms}, the index~$\ell$ denotes a parameter \emph{bucket} returned by \lstinline|layer_key_fn(param)|.
For the PINN we use \lstinline|p.shape| (coarser pooling); for the Transformer we use a stable parameter–path string (finer pooling).
If \lstinline|layer_key_fn| is omitted, the code falls back to \lstinline|p.name| (if present) or \lstinline|id(p)|.
Coarse buckets (e.g., \lstinline|p.shape|) amortize statistics and slightly reduce overhead, yielding a smoother, more data‑efficient $\beta_2$ signal; fine buckets (e.g., module path) localize the adaptation.
Setting \lstinline|layer_key_fn=lambda _: 0| collapses everything into a single global bucket, recovering a single $\beta_2$ schedule. {\bf Note:} When $\beta_2$ is fixed, the choice of \texttt{layer\_key\_fn} (single global bucket $\lambda\!:\!0$, coarser pooling \texttt{p.shape}, or a fine module path/id) has no effect on the update.



\section{Reproducibility of Methods}
\label{sec:repro}

\paragraph{Version pin for the MLX Adam baseline}
Across recent MLX releases we observed small but noticeable differences in the \emph{early} trajectory of the built‑in Adam optimizer (e.g., between v0.26.3 and v0.28.0). In our PINN–3D runs (seed=0), both versions ultimately settled on the same plateau loss, but the initial paths differed. To ensure strict comparability, \textbf{all Adam baselines reported in this paper were run with MLX v0.26.3}. We provide an environment file to reproduce exactly the numbers and figures reported here (~\ref{app:env}).\footnote{We attribute these small changes to low‑level kernel/arithmetic ordering rather than to algorithmic differences. Our Kourkoutas–$\beta$ implementation lives in our codebase and was unaffected by MLX updates.}

\paragraph{Control: Kourkoutas–$\beta$ configured as plain Adam}
To verify that performance differences in §\ref{sec:experiments} arise from \emph{dynamic} $\beta_2$ rather than implementation details, we also ran a control where Kourkoutas–$\beta$ is configured to recover plain Adam: $\beta_{2,\min}{=}\beta_{2,\max}$, \texttt{decay=None}, \texttt{max\_ratio=None}, \texttt{adaptive\_tiny=False}, \texttt{warmup\_steps=0}. 
With \texttt{bias\_correction="none"} this matches “Adam without bias correction” exactly (Overview, “Plain Adam as a special case”); with \texttt{bias\_correction="beta2max"} and fixed $\beta_2$ it reproduces the standard correction factor $1-\beta_2^{t}$. 
On controlled problems we observe step‑by‑step agreement within numerical tolerance (FP32: $\lesssim 2\times 10^{-6}$ absolute). In the full PINN, early trajectories can still diverge slightly (benign arithmetic effects), but final losses at 100k steps are comparable. This control is \emph{not} a competing method; it exists solely to validate equivalence of the update rule when $\beta_2$ is fixed.

\paragraph{Bucketization invariance for the control}
As already noted, when $\beta_2$ is fixed, the choice of \texttt{layer\_key\_fn} (single global bucket $\lambda\!:\!0$, coarser pooling \texttt{p.shape}, or a fine module path/id) has no effect on the update; we verified that trajectories are indistinguishable within floating‑point tolerance in this setting. For the main dynamic‑$\beta_2$ runs we use the per‑testbed bucketing stated in §\ref{sec:practice} and §\ref{sec:experiments} because it governs how the sunspike signal is pooled.

\paragraph{Compute environment}
Unless noted otherwise, all experiments (including the toy checks) were run on a single Apple Studio M2 Ultra with 198\,GB unified memory. Timings are wall‑clock.

\section{Sanity Checks (Toy Problems with bias correction off)}
\label{sec:sanity}

We include three small, deterministic “toy” problems to confirm that Kourkoutas‑\(\beta\) behaves exactly as described in Secs.~\S\ref{sec:overview}–\S\ref{sec:practice} and in our reference implementation.  Hence, to make the comparison faithful to the code paths in Secs.~\S\ref{sec:overview}–\S\ref{sec:practice}, in all three sanity checks, we use a single global bucket (\lstinline|layer_key_fn=lambda _: 0|), \texttt{decay}=\texttt{None}, \texttt{max\_ratio}=\texttt{None}, \texttt{adaptive\_tiny}=\texttt{False}, and \texttt{bias\_correction}=\texttt{"none"} (matching MLX Adam’s default). Each method runs with identical initializations per repeat and a short untimed warm‑up to stabilize JIT/kernels; we report medians over five repeats. Because these are micro‑benchmarks, we treat wall‑clock time as ancillary and focus on the optimization behavior; timing for the full PDE workloads appears in the Results section. The code for all three sanity checks is given verbatim in~\ref{app:synthetic-code}.

\subsection{Sanity‑1: least‑squares regression (convex).}
This test places all optimizers on a smooth quadratic where gradients are well behaved and the sunspike signal is typically small. We minimize a standard squared loss for a linear model over \(10{,}000\) steps with step size \(\rho=10^{-3}\). As expected, Kourkoutas‑\(\beta\) with dynamic \(\beta_2\) converges to the same minimum as its fixed‑\(\beta_2\) configuration and MLX Adam (bias correction off); the final losses are numerically indistinguishable. On this tiny setup, MLX Adam executes slightly faster per step, which is consistent with its lean code path and the absence of per‑layer bookkeeping.

\begin{table}[H]
\centering
\small
\begin{tabular}{lcc}
\toprule
Optimizer & Final MSE (median) & Time (median) \\
\midrule
K‑\(\beta\) (dynamic \(\beta_2\), BC off)  & \(4.699453\times 10^{-5}\) & 4.218\,s \\
K‑\(\beta\) (fixed \(\beta_2\), BC off)    & \(4.692386\times 10^{-5}\) & 4.217\,s \\
MLX Adam (BC off)                          & \(4.692433\times 10^{-5}\) & 3.155\,s \\
\bottomrule
\end{tabular}
\caption{All three converge to the same optimum; MLX Adam is faster on this tiny toy.}
\label{tab:sanity1}
\end{table}

\subsection{Sanity‑2: logistic regression on separable data (nonconvex but well‑behaved).}
Here we probe the regime where gradient norms spike early: a linearly separable classification problem trained with logistic loss for \(20{,}000\) steps at \(\rho=10^{-2}\). All methods reach \(100\%\) accuracy, but the dynamic \(\beta_2\) in Kourkoutas‑\(\beta\) lowers the loss markedly at a fixed step budget—by reacting to transient spikes with a smaller \(\beta_2\) and then drifting back toward \(\beta_{2,\max}\) once the boundary stabilizes. In our runs this shows up as a substantially smaller terminal logistic loss for Kourkoutas‑\(\beta\) (dynamic \(\beta_2\)) than for fixed‑\(\beta_2\) or MLX Adam, while the latter two are essentially tied—as they should be, since Kourkoutas‑\(\beta\) with \(\beta_{2,\min}=\beta_{2,\max}\) reproduces the Adam update when bias correction is off.

\begin{table}[H]
\centering
\small
\begin{tabular}{lccc}
\toprule
Optimizer & Final loss (median) & Acc (median) & Time (median) \\
\midrule
K‑\(\beta\) (dynamic \(\beta_2\), BC off)  & \(3.501587\times 10^{-9}\) & 1.000 & 8.767\,s \\
K‑\(\beta\) (fixed \(\beta_2\), BC off)    & \(9.259177\times 10^{-7}\) & 1.000 & 8.764\,s \\
MLX Adam (BC off)                          & \(9.258275\times 10^{-7}\) & 1.000 & 6.676\,s \\
\bottomrule
\end{tabular}
\caption{All reach perfect accuracy; dynamic \(\beta_2\) attains a substantially lower logistic loss at the same step budget.}
\label{tab:sanity2}
\end{table}

\subsection{Sanity‑3: utility maximization (concave).}
Finally, we flip the perspective and \emph{maximize} a concave logistic utility (equivalently, minimize its negation) for \(50{,}000\) steps with \(\rho=5\times10^{-2}\). This check ensures the method remains stable and unbiased in the opposite curvature regime. All optimizers converge to the same solution up to machine precision, with identical classification accuracy and vanishing loss/utility differences. As in Sanity‑1, MLX Adam tends to be faster per step on this tiny problem; the differences are not meaningful for our conclusions.

\begin{table}[H]
\centering
\small
\begin{tabular}{lcccc}
\toprule
Optimizer & Utility\(\uparrow\) (median) & Loss (median) & Acc (median) & Time (median) \\
\midrule
K‑\(\beta\) (dynamic \(\beta_2\), BC off)  & \(-5.868160\times 10^{-9}\) & \(5.868160\times 10^{-9}\) & 1.000 & 23.363\,s \\
K‑\(\beta\) (fixed \(\beta_2\), BC off)    & \(-5.834350\times 10^{-9}\) & \(5.834350\times 10^{-9}\) & 1.000 & 23.077\,s \\
MLX Adam (BC off)                          & \(-5.823792\times 10^{-9}\) & \(5.823792\times 10^{-9}\) & 1.000 & 17.659\,s \\
\bottomrule
\end{tabular}
\caption{All methods converge to the machine‑precision optimum (differences are negligible). Timing on these toy problems is secondary to the main results section.}
\label{tab:sanity3}
\end{table}

\medskip
\noindent\emph{Summary.} Across convex, classification, and concave toys, Kourkoutas‑\(\beta\) behaves as intended: it collapses to Adam when configured with a fixed \(\beta_2\) and no extras, and it offers a clear advantage in the presence of early gradient spikes by adaptively reducing \(\beta_2\). The empirical patterns here align with the layer‑wise sunspike logic laid out in the Overview; we defer speed comparisons to the PDE experiments, where compute and data movement dominate over micro‑kernel effects.


\section{Experiments}
\label{sec:experiments}

We evaluate Kourkoutas‑$\beta$ on four testbeds designed to stress optimizers under large, bursty gradients: (i) a data‑driven Transformer PDE surrogate (Heat2D), (ii) a 3D cylindrical PINN (Heat3D), (iii) a synthetic “Length‑Jitter + Rare Trigger” MLX toy (Testbed C) that induces intermittent spikes via a 1\% trigger under variable sequence lengths, and (iv) a character-level Transformer on a 30 MB
slice of \texttt{enwik8} (\texttt{small–enwik8}). The PDE implementations are released as reusable MLX reference code \cite{kbeta_software}, \cite{pinn3d_software}, \cite{transformer2d_software} (see companion GitHub repositories \repoPINN{} and \repoTrans{}), while the MLX toy script of Testbed~C is included verbatim in \ref{app:trigger-code}. The character-level language Transformer is given on the \repoKbeta{} repo under ``Examples''.
\smallskip
 
Across all testbeds we freeze the optimizer semantics of §\ref{sec:overview}–§\ref{sec:practice}, meaning a layer‑wise sunspike signal 
drives a dynamic $\beta_2 \in \left[ \beta_{2,\min}, \beta_{2,\max} \right]$.  
The denominator optionally uses (leaky) AMSGrad. Trust‑region clipping and adaptive tiny are off unless explicitly stated and from here on we use bias correction on for all methods unless otherwise stated.

\paragraph{Setup and protocol}
Unless noted otherwise, we keep the following \emph{identical} across optimizers:
the step size $\rho$ (same schedule), $\beta_1$, batch size, number of steps/epochs,
random seeds, and $\varepsilon=10^{-8}$. Our primary baselines are \textbf{MLX Adam}
(with standard bias correction on) at $\beta_2=0.999$ (\textbf{Adam--999}) and
$\beta_2=0.95$ (\textbf{Adam--95}). While $\beta_2=0.999$ is the common default,
$\beta_2\approx0.95$ is sometimes preferred in PDE‑centric workloads, so we report both.
In all main experiments, \textbf{Kourkoutas--$\beta$} uses
\texttt{bias\_correction="beta2max"} (i.e., the denominator uses $b_{2,t}=1-\beta_{2,\max}^{\,t}$),
with other knobs (\texttt{decay}, \texttt{max\_ratio}, \texttt{adaptive\_tiny},
\texttt{warmup\_steps}) fixed per testbed and stated where used.

\paragraph{Statistical Tools and Analysis}
For the sake of transparency and reproducibility, the details of the statistical analysis applied to each of the four Testbeds is detailed in \ref{app:stat}.
 
\paragraph{Compute environment}
\label{sec:setup}
Unless otherwise noted, all experiments, including the sanity checks of \S\ref{sec:sanity}, were run on a single Apple Studio M2 Ultra machine with 198\,GB unified memory (``M2U''). Timings reported are wall‑clock.

\paragraph{Bucketization choices (exactly as implemented)}
Layers are formed by a user‑supplied \texttt{layer\_key\_fn}. We use the following
mapping per testbed, with a consistent naming convention throughout:
\textbf{Testbed~A (Heat2D)} — stable parameter‑path key (fine, per‑parameter buckets);
\textbf{Testbed~B (Heat3D)} — \lstinline|lambda p: p.shape| (coarse, shape‑based buckets);
\textbf{Testbed~C (Length--Jitter + Rare Trigger)} — single global bucket \lstinline|lambda _: 0|;
\textbf{Testbed~D (small‑enwik8)} — stable parameter‑path key (same as Testbed~A).
If \lstinline|layer_key_fn| is \lstinline|None|, the implementation falls back to a stable name
(\lstinline|p.name|, if present) or to identity \lstinline|id(p)|. Coarser buckets yield a smoother,
more data‑efficient $\beta_2$ signal; finer buckets localize the adaptation.
Table~\ref{tab:testbed-settings} summarizes the Kourkoutas-$\beta$ settings used in each of the four testbeds. 

\begin{table}[H]
\centering
\caption{Key per‑testbed settings. Items not shown (e.g., $\rho$, $\beta_1$, batch size) are identical across methods. “Bucketization” denotes the mapping used to pool parameters into buckets $\ell$ for the sunspike and $\beta_2$ statistics.}
\label{tab:testbed-settings}
\setlength{\tabcolsep}{6pt}
\begin{tabular}{lcccc}
\toprule
 & \makecell{\textbf{Testbed~A}\\\textbf{Transformer}\\\textbf{(Heat2D)}} & \makecell{\textbf{Testbed~B}\\\textbf{PINN}\\\textbf{(Heat3D)}} & \makecell{\textbf{Testbed~C}\\\textbf{Length--Jitter}\\\textbf{+ Rare Trigger}} 
 &  \makecell{\textbf{Testbed~D}\\\textbf{LM Transformer}\\\textbf{small‑enwik8}}\\
\midrule
\makecell[l]{Bucketization\\\lstinline|layer_key_fn=|}
& \makecell{per‑parameter (fine)\\ \lstinline|param_path|} 
& \makecell{shape‑based (coarse)\\ \lstinline|lambda p: p.shape|} 
& \makecell{single global\\ \lstinline|lambda _: 0|} 
& \makecell{per‑parameter (fine)\\ \lstinline|param_path|} \\
Warmup steps 
& $\sim$350 & 0 & 50 & 250 \\
Adaptive tiny 
& off & on/off as specified & off & off \\
(leaky)AMSGrad 
& off (default) & on (e.g.\ \texttt{decay}$=0.98$) & off & off \\
Trust‑region clip 
& off (default) & on (\texttt{max\_ratio}$=3$) & off & off \\
Bias correction 
& \texttt{"beta2max"} & \texttt{"beta2max"} & \texttt{"beta2max"} & \texttt{"beta2max"} \\
\bottomrule
\end{tabular}

\vspace{0.2ex}
\footnotesize\emph{Alias mapping (CLI \(\rightarrow\) paper).} \texttt{--layer\_bucket per-array} \(\equiv\) \emph{per‑parameter (stable path)}; 
\texttt{shape} \(\equiv\) shape‑based; \texttt{global} \(\equiv\) single global bucket; 
\texttt{id}/\texttt{auto} \(\equiv\) per‑object identity (not recommended across runs because it is not stable). 
When $\beta_2$ is fixed, the bucketization choice does \emph{not} affect the update (it reduces exactly to Adam).
\end{table}


\subsection{{\bf Testbed A:} Data-Driven Transformer PDE Surrogate (Heat2D): 30-seed study}
\label{sec:transformer}

\subsubsection{Problem physics}
We follow the Heat2D benchmark of Kassinos \& Alexiadis \cite{kassinos2025Beyond}, where a compact Transformer is trained to predict the temperature field \(T(x,y,t)\) on a \(26\times 26\) grid over \(401\) time steps. Each sample in the training set has a randomly selected uniform initial temperature distribution in the interior and four distinct and randomly selected Dirichlet boundary conditions. While the heat diffusivity varies across cases in the dataset, it remains constant during the evolution of the temperature field. The non-dimensional thermal diffusivity  \( \beta \) is the ratio of the physical diffusivity to a reference diffusivity. The non-dimensionalized two-dimensional heat conduction equation is 

\begin{equation}
\frac{\partial \theta}{\partial \tau} = \beta \left( \frac{\partial^2 \theta}{\partial \xi^2} + \frac{\partial^2 \theta}{\partial \eta^2} \right)
\end{equation}

\noindent where \( \theta = \frac{T - T_{\text{ref}}}{T_0} \) is the non-dimensional temperature, \( \tau \) is the non-dimensional time, and \( \xi \), \( \eta \) are the non-dimensional spatial coordinates corresponding to \( x \) and \( y \).
The spatial domain, a rectangular plate, is discretized into a uniform grid with \( N_{\xi} = 26\) and \( N_{\eta}=26 \) nodes along the \( \xi \) and \( \eta \) directions, respectively. The grid spacing is \( \Delta \xi = \frac{1}{N_{\xi}-1} \) and \( \Delta \eta = \frac{1}{N_{\eta}-1} \).

At training time the model receives the first five frames together with initial and boundary–condition tokens and is asked to predict the remaining frames, either in an autoregressive or a block prediction mode. The benchmark is deterministic given the boundary/initial conditions but produces \emph{bursty} gradients during early training due to the random variability of boundary and initial conditions from sample to sample and from long‑horizon error propagation.

\subsubsection{Transformer implementation and code features}
We use the current MLX implementation (Apple Silicon) with:
(i) per‑layer INT8 weight quantization; 
(ii) Rotary positional encodings (RoPE);
(iii) stable parameter–path strings for bucketization (\lstinline|layer_key_fn=param_path|) so that Kourkoutas–\(\beta\) maintains separate \(\beta_2\) tracks per module; 
(iv) a \(350\)-step warm‑up for Kourkoutas–\(\beta\) during which \(\beta_2\) is held at \(\tfrac{1}{2}(\beta_{2,\min}+\beta_{2,\max})\) and the sunspike statistic is quenched (no dips).%
\footnote{Warm‑up only affects Kourkoutas–\(\beta\)'s internal statistics; the optimizer step size/schedule is identical across methods.}
All runs share the same model, batch size, and learning‑rate schedule; only the optimizer differs.

\subsubsection{Why this test highlights differences among optimizers}
Two design choices are intentionally stress‑inducing for second‑moment methods:
(1) \textbf{Small‑data regime.} In Kassinos \& Alexiadis \cite{kassinos2025Beyond}, the training of the Transformer for the particular configuration was done with a dataset of \(12{,}000\) samples
split into a training (\(8{,}400\)), validation (\(2{,}400\)) and test sets (\(1{,}200\)) samples. Here, we train on just \(4{,}000\) samples, split \(2{,}800/800/400\) (train/val/test), which amplifies gradient noise and rare spikes. 
(2) \textbf{Quantization.} We introduce per‑layer INT8 quantization, which sharpens non‑smoothness during early training.
Such settings strongly favor an optimizer that can shorten its second‑moment memory window on spikes and lengthen it when training calms—precisely the behavior Kourkoutas–\(\beta\) delivers.

\subsubsection{Metrics and reporting}
We run \(100\) epochs and report the \emph{training} MSE at epoch \(100\) (lower is better). 
To obtain matched, paired observations we repeat each optimizer over \(30\) seeds \(\{0,\dots,9,12,\dots,31\}\) using identical data shuffles, initializations, and schedules.
We summarize performance by mean\(\pm\)sd and median\([\mathrm{IQR}]\), and we assess significance using paired differences per seed: (i) a paired \(t\)-test with 95\% CIs, and (ii) a Wilcoxon signed‑rank test.
We also report wall‑clock minutes/epoch (median\([\mathrm{IQR}]\)).
Unless noted, Kourkoutas–\(\beta\) uses \(\beta_{2,\min}=0.88\), \(\beta_{2,\max}=0.999\), \(\beta_1=0.9\), \(\alpha_{\text{EMA}}=0.93\), \(\varepsilon=10^{-8}\), \texttt{layer\_key\_fn=param\_path}.
Adam baselines use the same schedule and \(\beta_1=0.9\) with \(\beta_2\in\{0.95,0.999\}\).

\subsubsection{Results}
\label{sec:Trans_Results}

We train the Transformer over 100 epochs, using a hand–coded, piecewise–constant schedule specified as an epoch$\!\to$LR map. Let $e$ denote the epoch index; then
\[
\rho(e)=
\begin{cases}
1.0\times 10^{-3}, & 5 \le e < 30,\\
5.0\times 10^{-4}, & 30 \le e < 40,\\
1.0\times 10^{-4}, & 40 \le e < 60,\\
1.0\times 10^{-5}, & 60 \le e \le 100~,
\end{cases}
\]
i.e., the LR is updated at the \emph{start} of the listed epochs and held constant in between. Training for this benchmark stops at $e{=}100$. The schedule is applied \emph{identically} to all optimizers and seeds. Note that the 350‑step warm‑up discussed for Kourkoutas–$\beta$ affects only the dynamic $\beta_2$ signal and is independent of the LR schedule.

Table~\ref{tab:transformer-abs} shows losses at $e{=}100$ and Table~\ref{tab:transformer-stats-3tests} gives paired statistics. 
Across \(30\) matched seeds, Kourkoutas–\(\beta\) improves the \emph{mean} loss by \textbf{12.8\%} over Adam‑95 and \textbf{39.4\%} over Adam‑999; using \emph{medians}, the improvements are \textbf{11.6\%} and \textbf{37.8\%}. 
It beats Adam‑95 on \(24/30\) seeds and Adam‑999 on \(30/30\) seeds.
Effect sizes are large: vs.\ Adam‑95, \(d_z{=}0.768\) (paired \(r{=}0.615\)); vs.\ Adam‑999, \(d_z{=}2.038\) (paired \(r{=}0.901\)).

\begin{table}[h]
\centering
\caption{Epoch‑100 training MSE ($\downarrow$ is better). Values are mean$\pm$sd over 30 matched seeds ($\times 10^{-6}$). }
\label{tab:transformer-abs}
\begin{minipage}{0.49\linewidth}
\begin{tabular}{lcc}
\toprule
\textbf{optimizer} & \textbf{MSE (mean$\pm$sd)} & \textbf{Median [IQR]} \\
\midrule
K--$\beta$    & $\mathbf{2.203 \pm 0.312}$ & $2.229\ [0.312]$ \\
Adam--95      & $2.527 \pm 0.239$          & $2.522\ [0.421]$ \\
Adam--999     & $3.636 \pm 0.640$          & $3.584\ [0.569]$ \\
\bottomrule
\end{tabular}
\end{minipage}

\medskip
\footnotesize  The “Median [IQR]” column reports the across‑seed median and interquartile range.
\end{table}

\begin{table}[h]
\centering
\caption{Paired comparisons at epoch~100 (30 seeds). Differences are Adam $-$ K--$\beta$ ($\times 10^{-7}$).}
\label{tab:transformer-stats-3tests}
\setlength{\tabcolsep}{5pt}
\small
\begin{minipage}{0.49\linewidth}
\centering
\textbf{Parametric + effect sizes}\\[3pt]
\begin{tabular}{lcccc}
\toprule
\textbf{Comp.} & \textbf{Mean} & \textbf{95\% CI} & $\mathbf{t(29)}$ & $\mathbf{d_z}$; $\mathbf{r}$\\
\midrule
Adam--95  & 3.235  & [1.662, 4.808]   & 4.205  & 0.768; 0.615 \\
Adam--999 & 14.324 & [11.699, 16.949] & 11.160 & 2.038; 0.901 \\
\bottomrule
\end{tabular}
\end{minipage}\hfill
\begin{minipage}{0.49\linewidth}
\centering
\textbf{Distribution‑free tests}\\[3pt]
\begin{tabular}{lccc}
\toprule
\textbf{Comp.} & \textbf{Sign $+$/n (p)} & $\mathbf{W^{+}}$ & $\mathbf{p_{\text{Wilcoxon}}}$ \\
\midrule
Adam--95  & 24/30 (1.43$\times$10$^{-3}$) & 412 & 7.91$\times$10$^{-5}$ \\
Adam--999 & 30/30 (1.86$\times$10$^{-9}$) & 465 & 1.86$\times$10$^{-9}$ \\
\bottomrule
\end{tabular}
\end{minipage}

\medskip
\footnotesize Effect sizes: $d_z{=}t/\sqrt{n}$ (paired‑samples Cohen’s $d$); $r{=}\sqrt{t^2/(t^2+\mathrm{df})}$.
\end{table}

\noindent With diagnostics enabled (used to collect \(\beta_2\)/sunspike histories) Kourkoutas–\(\beta\) runs at \(\,1.525\,[1.506,1.542]\) min/epoch, versus \(1.356\,[1.344,1.382]\) (Adam‑95) and \(1.340\,[1.336,1.348]\) (Adam‑999).
As shown in Table~\ref{tab:transformer-time}, re‑runs with diagnostics \emph{off} give \(1.29\text{–}1.33\) min/epoch for Kourkoutas–\(\beta\) on multiple seeds, indicating parity or a slight speed advantage.

\begin{table}[H]
\centering
\caption{Median time per epoch (minutes/epoch). K‑$\beta$ (diag.~off) is from six spot‑checks; Adam rows are medians over 30 seeds.}
\label{tab:transformer-time}
\begin{tabular}{lcc}
\toprule
\textbf{optimizer} & \textbf{Median time/epoch} \\
\midrule
K--$\beta$ (diag.\ off) & $1.318$ \\
Adam--95                         & $1.356$ \\
Adam--999                        & $1.340$ \\
\bottomrule
\end{tabular}
\end{table}

In summary, the results of Tables~\ref{tab:transformer-abs}–\ref{tab:transformer-time} support the following overall assessment:
\begin{itemize}
    \item \textbf{K--$\beta$ vs Adam--999.} The advantage is large, universal across seeds (30/30), and extremely significant statistically. This strongly supports the idea that spiky early training benefits from dynamic $\beta_2$.
    \item \textbf{K--$\beta$ vs Adam--95.} Average $\sim 12.8\%$ lower mean loss (median gap $\sim 11.6\%$) with moderate-to-large effect size and strong significance. There are a few seeds where Adam--95 slightly wins, but the overall paired evidence (t‑test + sign test + Wilcoxon) favors Kourkoutas-$\beta$.
    \item \textbf{Stability.} Across-seed variability (sd or CV) is lower than Adam--999 and a bit higher than Adam--95, i.e., Kourkoutas-$\beta$ is more consistent than high-memory Adam, and has roughly comparable stability to an aggressively tuned Adam--95.
    \item \textbf{Speed.} With diagnostics off, Kourkoutas-$\beta$ is at least as fast or slightly faster; with diagnostics on (as in these runs), it carries the expected $\sim 12\%$ overhead from histograms/trackers.
\end{itemize}

\subsubsection{Visual impression of the physical significance of achieved loss at epoch 100} 

Figure~\ref{fig:heat2d-seed0-loss} shows the evolution of training loss during training for the case of seed=0. Adam--999 produces a more unstable training with significant jitter in the training loss and tends to plateau earlier than the other two optimizers during each phase of the learning rate schedule. Adam--95 and Kourkoutas-$\beta$ produce a more stable training and track each other closely during the early epochs, but eventually  K-$\beta$ overtakes both Adam--95 and Adam--999 producing a final loss at epoch=100 that is roughly 40\% lower. 

\begin{figure}[h]
  \centering
  \includegraphics[width=0.75\linewidth, clip,trim=5pt 5pt 5pt 5pt ]{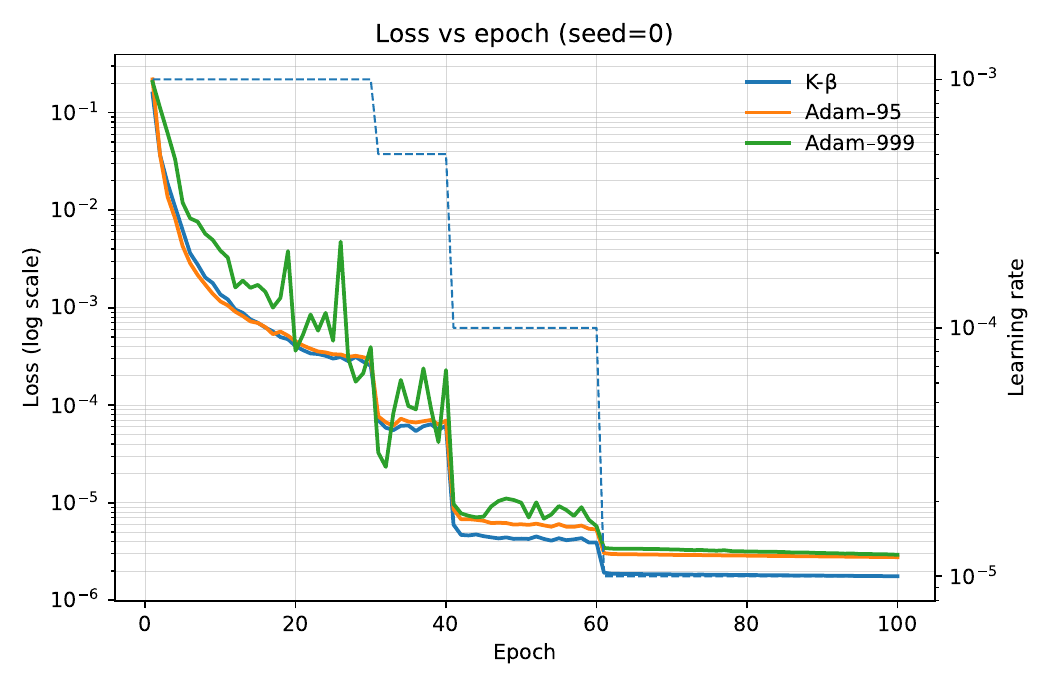}
  \caption{Training MSE vs.\ epoch for the Transformer (Heat2D), seed=0.
  The dashed curve overlays the learning–rate schedule used identically for all optimizers:
  $\rho(e)=10^{-3}$ for $5\le e<30$, $5\!\times\!10^{-4}$ for $30\le e<40$,
  $10^{-4}$ for $40\le e<60$, and $10^{-5}$ for $60\le e\le100$ (schedule defined in §\ref{sec:Trans_Results}).}
  \label{fig:heat2d-seed0-loss}
\end{figure}

\paragraph{Interpretation of the loss curves}
The persistent jitter in Adam--999, even late in training, arises from its very long second-moment memory: with $\beta_2=0.999$, the time constant is $\approx 1/(1-\beta_2) \approx 1000$ steps (half-life $\approx 693$ steps). This long‐lived EMA causes the denominator to lag behind rapid shifts in gradient scale, so the effective step size chases a moving target and “chatters.” Kourkoutas-$\beta$ produces smoother, steadier descent because $\beta_{2,t}$ typically hovers around $0.93$–$0.96$ in calm phases (memory $\approx 15$–$35$ steps) and dips toward $\beta_{2,\min}$ on gradient spikes. This keeps the second-moment estimate aligned with the current regime rather than the distant past.
Learning-rate drops at epochs $e=30,40,60$ benefit Kourkoutas-$\beta$ markedly: its memory shortens at the transition, allowing rapid re-estimation of $v_t$, producing the visible downward kinks in the loss. Adam--999, in contrast, requires $O(10^3)$ steps to “forget” the pre-transition scale, stalling or overshooting. Adam--95 avoids jitter but also cannot adapt mid-training—the fixed $\beta_2$ prevents exploiting the LR schedule.
In the noise-dominated late stage, Adam--999’s long memory keeps injecting stale variance, flattening progress. Kourkoutas-$\beta$’s moderate memory and occasional dips suppress this residual noise, allowing the loss to keep edging downward instead of rattling sideways.

Figure~\ref{fig:heat2d-contrast} shows predictions by the Transformer model trained by Kourkoutas-$\beta$ and Adam--999 using seed=0, for which the final losses achieved at epoch=100 were 1.764$\times 10^{-6}$ and 2.930$\times 10^{-6}$ respectively. The leftmost panel shows the ground-truth temperature distribution for a particular instance during the temperature field evolution (step=6). The middle and rightmost panels shows the predictions of the Kourkoutas-$\beta$ and the Adam--999 models for the same step. The bottom row shows the same images but renormalized with contrast equalization to enhance clarity.
  
\begin{figure}[h]
  \centering
  \includegraphics[width=\linewidth]{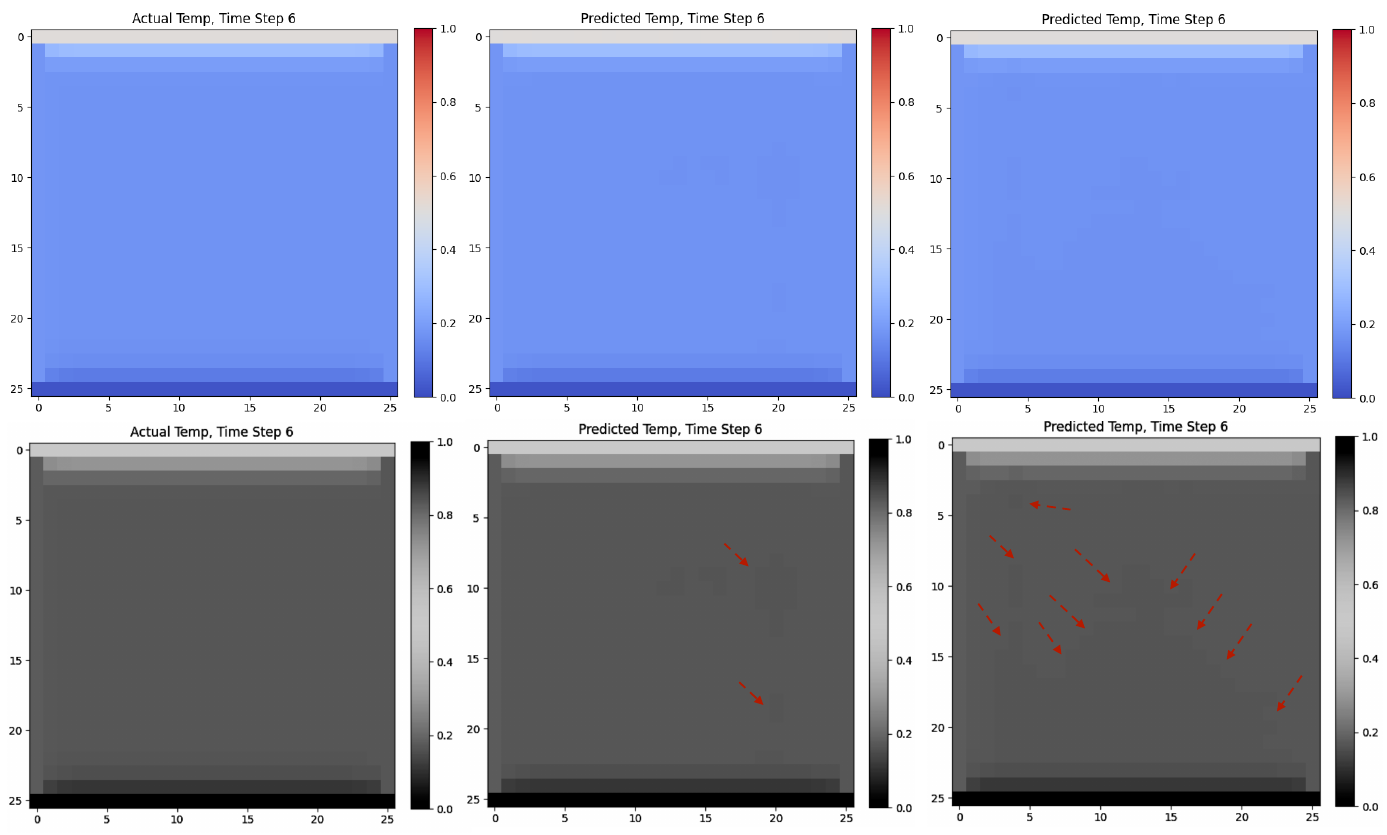}
  \caption{\textbf{Heat2D Transformer – visual impact of small MSE differences.}
  One test case at time step $t{=}6$: \emph{left} ground truth, \emph{middle} prediction from the Kourkoutas-$\beta$ (K-$\beta$) model, \emph{right} prediction from Adam-999. 
  Top row shows raw temperatures with a shared physical color scale; bottom row shows the same images linearly re‑normalized per panel to enhance interior deviations (contrast equalization only, no filtering). 
  Although the top row appears nearly identical, the contrast‑equalized bottom row reveals structured interior ``shadowing'' in the Adam-999 prediction that is largely absent with K‑$\beta$. 
  Final validation MSE at epoch 100: K‑$\beta$ \(1.76{\times}10^{-6}\) vs. Adam-999 \(2.93{\times}10^{-6}\), a \textbf{39.8\%} reduction (Adam is \(1.66{\times}\) higher).}
  \label{fig:heat2d-contrast}
\end{figure}
\medskip

\noindent Even when the absolute error is already in the 10$^{-6}$ range, the $\sim$ 40\% reduction from Kourkoutas-$\beta$ is not just numerology: after contrast equalization, the Adam--999 model exhibits coherent interior artifacts (“shadows”) that are largely suppressed in the Kourkoutas‑$\beta$ result. This is consistent with our hypothesis that dynamic $\beta_2$ dampens the optimizer’s sensitivity to bursty gradients arising from sample‑to‑sample shifts (boundary/initial condition changes), thereby reducing spurious interior structure while keeping the boundary layers intact.

\subsubsection{Dynamics of $\beta_2$ and sunspike}

\paragraph{Reading the sunspike and \(\beta_2\) violins}
We use violin and heatmap plots to visualize how the \emph{sunspike} ratio and the
resulting \(\beta_2\) evolve through training. Recall that \(\mathrm{sun}\in[0,1)\) is a bounded,
layer–wise signal,
\[
\mathrm{sun}_t^{(\ell)} \;=\; \frac{\mathrm{raw}_t^{(\ell)}}{1+\mathrm{raw}_t^{(\ell)}},
\qquad
\mathrm{raw}_t^{(\ell)} \;=\; \frac{\bigl\lVert g_t^{(\ell)}\bigr\rVert}
{\,r_t^{(\ell)}+\varepsilon_{\text{spike}}\,},
\]
so it measures how large the current gradient norm is relative to its recent EMA.
High sunspike (closer to \(1\)) means a genuine spike, i.e. \(\lVert g_t^{(\ell)}\rVert\) greatly
exceeds its history, while low sunspike (near \(0\)) means the step is mild or typical. 
In short, \texttt{sunspike} is an online measure of how ``spiky'' the current gradient is. Thus, the distribution plots offer a quick view of whether each epoch was dominated by mild vs.\ bursty updates.
Kourkoutas–\(\beta\) converts the sunspike signal directly into the second–moment discount via
\[
\beta_{2,t}^{(\ell)} \;=\;
\beta_{2,\max} - \bigl(\beta_{2,\max}-\beta_{2,\min}\bigr)\,\mathrm{sun}_t^{(\ell)}.
\]
Thus epochs with mass near \(\mathrm{sun}\!\approx\!1\) correspond to noticeably lower \(\beta_2\)
(more agile adaptation), while mass near \(\mathrm{sun}\!\approx\!0\) keeps
\(\beta_2\) close to \(\beta_{2,\max}\) (strong smoothing). Each violin summarizes the
distribution across layers at that epoch; the companion \(\beta_2\) violins are the
\emph{image} of the sunspike violins under the linear map above. A dashed line at
\(\beta_2{=}0.999\) marks the fixed‐\(\beta_2\) Adam reference.

Figure~\ref{fig:heat2d-dynamics-seed0} visualizes Kourkoutas--$\beta$ on the Transformer (seed=0).
After the short warmup (sunspike held at zero), the distribution concentrates around
$\texttt{sun}\!\approx\!0.45\text{--}0.55$, which maps linearly to $\beta_2\!\approx\!0.93\text{--}0.96$
($\beta_{2,\min}\!=\!0.88$, $\beta_{2,\max}\!=\!0.999$). Early epochs show broader violins (occasional
dips toward~0.92 and excursions near~0.97), then the spread narrows after $\sim$60 epochs, indicating
that the optimizer “acts like” a well‑tuned Adam‑0.95 on average while retaining the ability to react
to sporadic spikes—something a fixed $\beta_2$ cannot do.

\begin{figure}[H]
  \centering
  \captionsetup[subfigure]{justification=centering}
  \begin{subfigure}[t]{0.46\linewidth}
    \includegraphics[width=\linewidth, clip,trim=5pt 5pt 5pt 5pt]{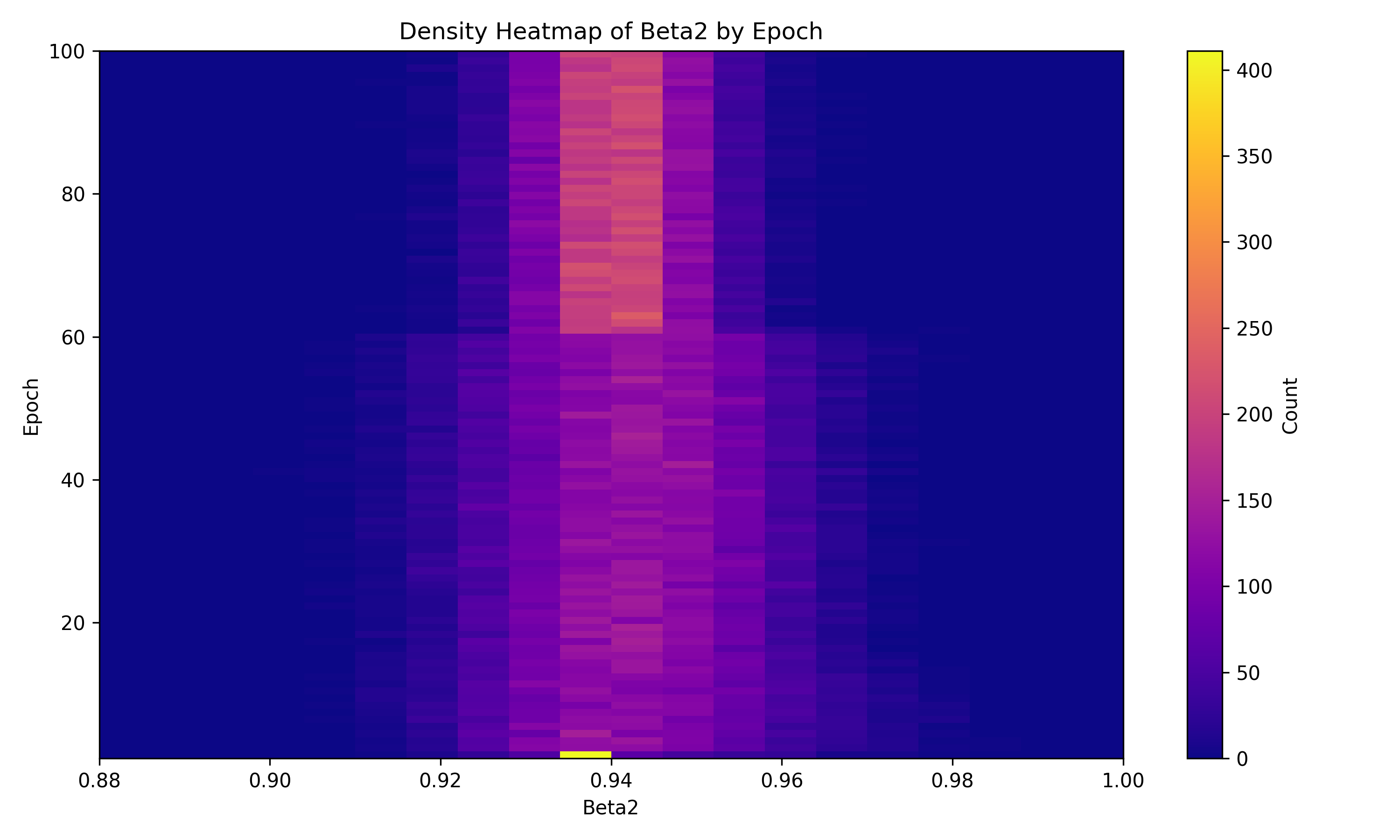}
    \caption{$\beta_2$ density by epoch}
  \end{subfigure}\hfill
  \begin{subfigure}[t]{0.46\linewidth}
    \includegraphics[width=\linewidth, clip,trim=5pt 5pt 5pt 5pt]{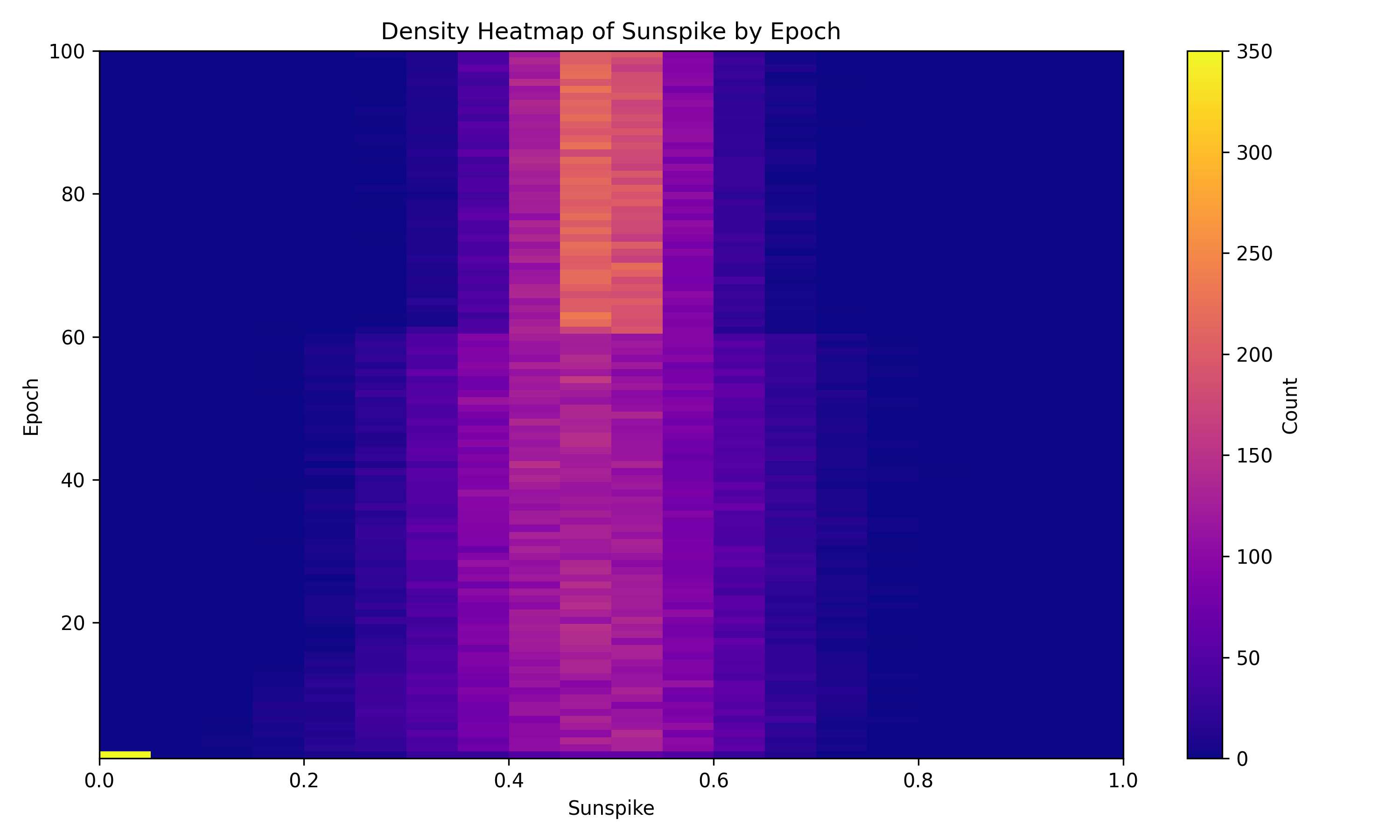}
    \caption{\texttt{sunspike} density by epoch}
  \end{subfigure}

  \vspace{0.4em}

  \begin{subfigure}[t]{0.46\linewidth}
    \includegraphics[width=\linewidth, clip,trim=5pt 5pt 5pt 5pt]{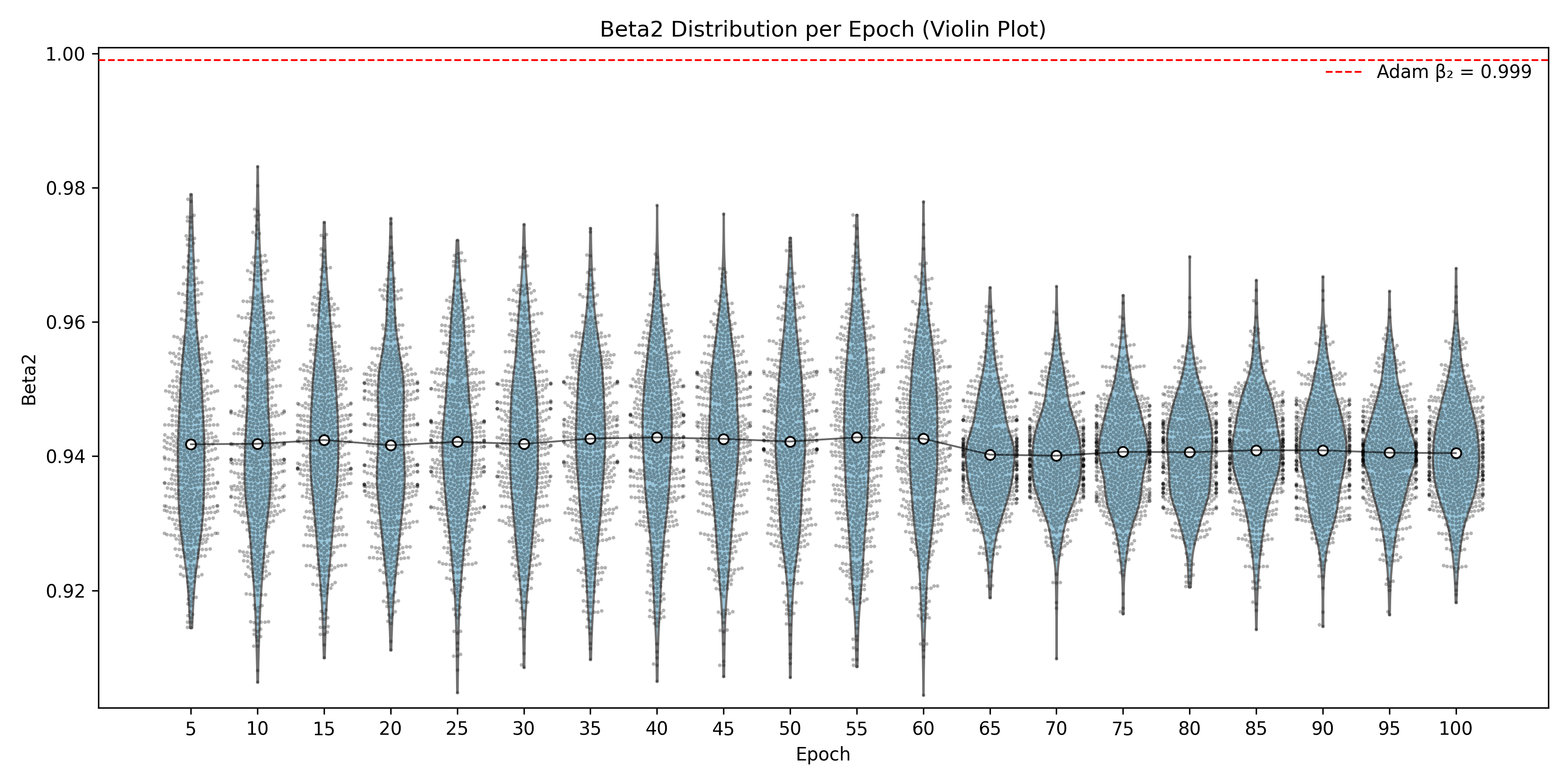}
    \caption{$\beta_2$ violins by epoch (white dots: medians; dashed line: Adam $\beta_2{=}0.999$)}
  \end{subfigure}\hfill
  \begin{subfigure}[t]{0.456\linewidth}
    \includegraphics[width=\linewidth, clip,trim=5pt 5pt 5pt 5pt]{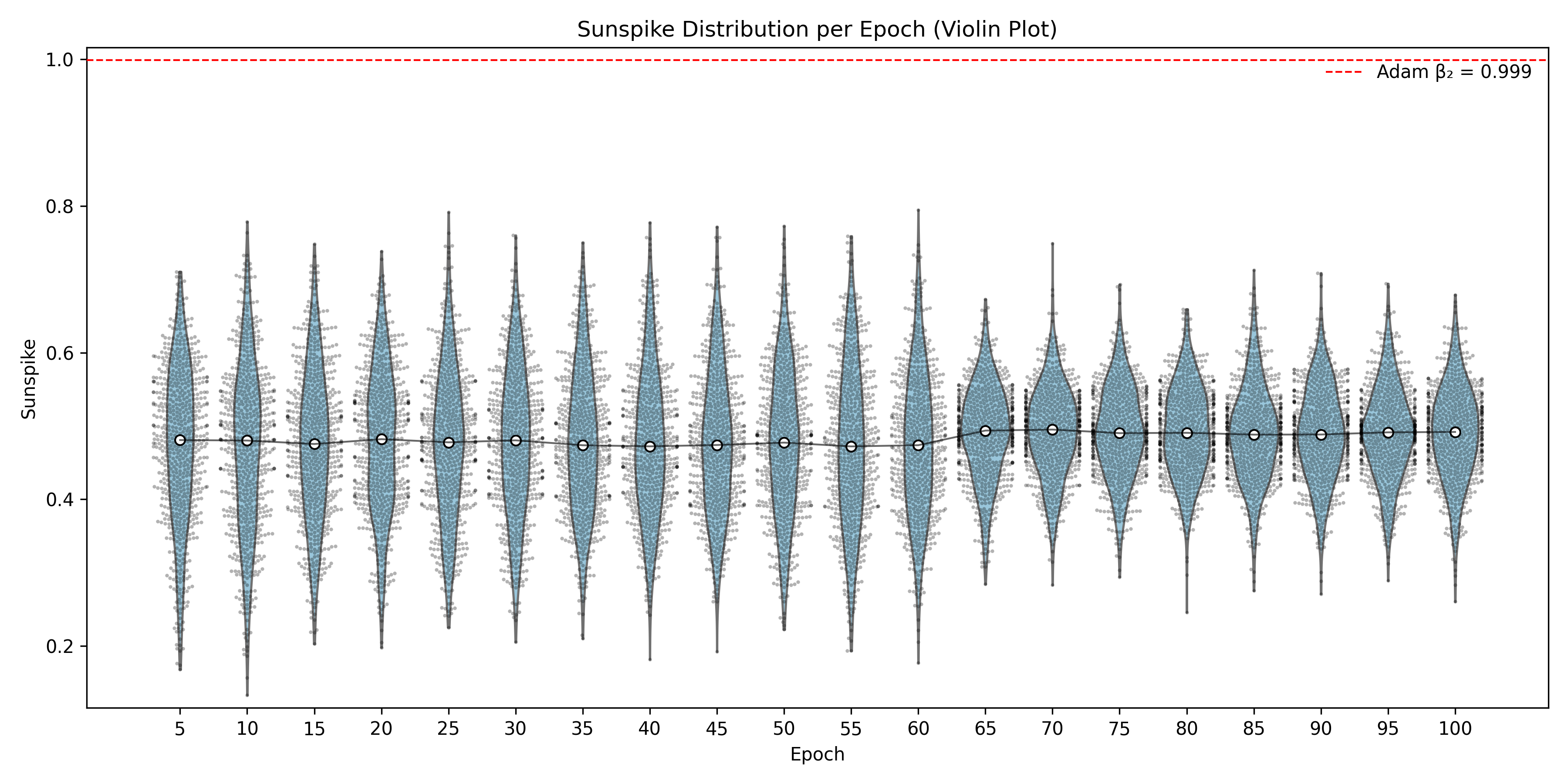}
    \caption{\texttt{sunspike} violins by epoch (white dots: medians)}
  \end{subfigure}

  \caption{\textbf{K--$\beta$ dynamics on the Transformer (Heat2D), seed=0.}
  Bucketing uses stable parameter paths (fine granularity).
  Warmup holds \texttt{sun}=0 and fixes $\beta_2$ to $\tfrac12(\beta_{2,\min}+\beta_{2,\max})\approx0.9395$,
  producing the near‑horizontal band at epoch~1 in the heatmaps.
  After release, \texttt{sunspike} concentrates around $0.45$–$0.55$ and the induced
  $\beta_2$ mass sits in $0.93$–$0.96$, rarely visiting the Adam‑like extreme $\beta_2\!=\!0.999$
  (dashed line). Early‑epoch violins are tall (reactive dips during spikes); by $\sim$epoch~60 the
  distributions tighten, indicating stable training while preserving occasional adaptive kicks.}
  \label{fig:heat2d-dynamics-seed0}
\end{figure}

\paragraph{\textbf{Takeaways}}
(1) Kourkoutas–\(\beta\) consistently outperforms both fixed‑\(\beta_2\) baselines on this quantized, small-data Transformer, with large  \emph{mean} loss improvment gaps (\(+12.8\%\) vs Adam‑95, \(+39.4\%\) vs Adam‑999). 
(2) The paired CIs exclude zero for both baselines, and all \(30\) paired differences against Adam‑999 are positive. 
(3) Runtime with diagnostics disabled is at least on par with Adam, so the accuracy gains do \emph{not} come at a speed cost.

Thus, the layer-wise sunspike adaptation delivers a systematic improvement over both fixed-$\beta_2$ baselines,  without costing runtime when diagnostics are off. 
This aligns with the PINN results discussed next: early, spiky gradients benefit from transiently shorter second-moment memory, while late epochs naturally settle near a $\beta_2\approx 0.94$ band without hand-tuning.

\medskip
\noindent\emph{Reproducibility.} Code is in the companion repository \repoTrans; an environment file pins package versions. The exact seeds, logs, and per‑epoch diagnostics (optional) are included in the artifact bundle.


\subsection{{\bf Testbed B:} 3D Cylindrical PINN (Heat3D)}
\label{sec:PINN}

We train a PINN to satisfy the heat equation in a cylindrical domain with periodicity and mixed boundary terms. The composite loss (residual{+}boundary/periodic terms) is stiff and tends to produce large gradient bursts. We use coarse bucketization (\lstinline|p.shape|), which yields a smooth, data‑efficient \(\beta_2\) signal with minimal overhead. 

All implementation details (geometry sampling, loss decomposition, training harness, LR schedule, and logging utilities) are available in the public GitHub repositories~\cite{kbeta_software},~\cite{pinn3d_software}: optimizer code in \repoKbeta{} and the PINN testbed in \repoPINN{} (repositories provide a pinned environment file to reproduce results exactly).

\subsubsection{Problem physics}
We consider the steady heat-diffusion (Laplace) equation in a cylindrical domain 
\(
  0 < r < r_{\text{out}}(\theta), \quad
  0 \le \theta \le 2\pi, \quad
  0 < z < L
\),
where \(r_{\text{out}}(\theta) = r_{\max} + 0.25\,r_{\max}\,\sin(3\theta)\)
defines an undulating outer boundary. The PDE is:
\[
  \nabla^2 T(r,\theta,z) \;=\; 0,
\]
written in cylindrical coordinates \((r,\theta,z)\) with
\[
  \nabla^2 T 
  \;=\;
  \frac{1}{r} \frac{\partial}{\partial r}
  \Bigl(r \,\frac{\partial T}{\partial r}\Bigr)
  \;+\;
  \frac{1}{r^2} \frac{\partial^2 T}{\partial \theta^2}
  \;+\;
  \frac{\partial^2 T}{\partial z^2}.
\]
\smallskip

\noindent {\it Boundary conditions:} 
\begin{enumerate}
  \item \emph{Inner cylinder:} \(r = r_{\min}\), \(T=1\).
  \item \emph{Inlet plane:} \(z=0\), \(T=1\).
  \item \emph{Outlet plane:} \(z=L\), \(\displaystyle \frac{\partial T}{\partial z}=0\) (insulated).
  \item \emph{Distorted outer boundary:} \(r=r_{\text{out}}(\theta)\) with a piecewise-specified flux \(\displaystyle q(z)\).  
        We impose \(\displaystyle \frac{\partial T}{\partial n} = -\,q(z)\) 
        on this outer boundary, where \(\partial/\partial n\) denotes the outward-normal derivative.
  \item \emph{\(\theta\)-periodicity:} 
        \(T(r,0,z) = T(r,2\pi,z)\), and \(\partial T/\partial\theta\bigl(r,0,z\bigr) = \partial T/\partial\theta\bigl(r,2\pi,z\bigr).\)
\end{enumerate}

\noindent Here, \(r_{\min} = 0.2\), \(r_{\max}=1.0\), and \(L=10\,r_{\max}\). We sample 
(\emph{i})~\textbf{interior points} to enforce \(\nabla^2 T=0\) (collocation), 
(\emph{ii})~\textbf{boundary points} for the Dirichlet/Neumann/flux conditions, 
and (\emph{iii})~\textbf{periodicity points} to tie \(\theta=0\) and \(\theta=2\pi\).

\subsubsection{PINN Implementation and Code Features}

We use a fully-connected MLP comprising 16 layers, each with 128 SiLU-activated neurons, to represent the network $T_{\theta}(r,\theta,z)$, which we train to satisfy:

\[
  \mathcal{L}_{\text{total}}
  =
  \underbrace{\mathcal{L}_{\text{PDE}}( \nabla^2 T_{\theta} )}_{\text{interior}}
  \;+\;
  \underbrace{\mathcal{L}_{\text{BCs}}( T_{\theta}, \partial T_{\theta}/\partial n )}_{\text{boundary}}
  \;+\;
  \underbrace{\mathcal{L}_{\text{periodic}}( T_{\theta} )}_{\theta\text{-periodicity}}.
\]
In code, we explicitly compute the cylindrical Laplacian and outward normal derivative 
to form a composite loss function. We also implement piecewise flux on the outer 
boundary. Data sampling occurs randomly in \(r,\theta,z\) (for interior points) 
and along each boundary region.

The code is a \emph{single-process}, Python-based script using the \texttt{mlx} 
framework; all PDE sampling, MLP modeling, and training loops happen in 
one file. Visualization routines (2D slices, 3D scatter, etc.) are optionally 
invoked at runtime.

\subsubsection{Why This Test Highlights Differences Among Optimizers}

Although Laplace’s equation is linear, the domain has:
\begin{itemize}
  \item \textbf{A wavy outer boundary} \(r_{\text{out}}(\theta)\) and piecewise flux 
        that create spatially-varying boundary conditions.
  \item \textbf{Aggressive learning-rate scheduling} and lower weighting on boundary terms, 
        which yields stiffer PDE gradients in early training.
\end{itemize}

Here we use a cosine decay learning rate schedule starting with an aggressive initial learning rate of $\rho_0=10^{-2}$  that is shared across 
all optimizers (see~\ref{app:schedules}, Listing~\ref{lst:lr-pinn3d}). Under these conditions, a conventional Adam optimizer with fixed 
\(\beta_2\) can get stuck in suboptimal minima or exhibit partial blow-up, 
whereas the proposed \emph{Kourkoutas-$\beta$} optimizer adapts its second-moment 
decay (\(\beta_2\)) to handle large gradient bursts, thus converging to 
a significantly lower final loss. This problem, therefore, serves as an 
effective stress test: in simpler PDE settings (milder domain or smaller LR), 
both optimizers converge similarly, but in this more demanding scenario, 
Kourkoutas$-\beta$ consistently outperforms both Adam--95 and Adam--999. It consistently converges in all
runs where the Adam variants fail and reaches significantly lower final loss in all cases.
As shown below, the advantage does not come from picking a
better constant; it comes from \emph{mobility and granularity}.

\paragraph{Metrics and reporting}
We report medians over repeated runs (seeds) and, when a success threshold $\tau$
is specified, success rates with exact Clopper–Pearson 95\% CIs. Thus, final‑loss medians
are computed over successful runs only. Timings are reported as \emph{per‑epoch} wall‑clock averages over the entire run
(100,000 epochs) with diagnostics disabled and
\emph{no} untimed warm‑up. This means that  any one‑time JIT/compile overhead is therefore included, but
is negligible at these horizons. 

\subsubsection{Results.}

All PINN--3D runs use the same two‑phase schedule: a cosine decay from
$\rho_0=10^{-2}$ to $\rho_{\min}=10^{-5}$ over the first $T_{\mathrm{ramp}}=40{,}000$
steps, followed by a constant plateau at $\rho_{\min}$. Formally,
\[
\rho_t \;=\;
\begin{cases}
\rho_{\min} + \tfrac{1}{2}\bigl(\rho_0 - \rho_{\min}\bigr)\,\bigl(1+\cos(\pi\,t/T_{\mathrm{ramp}})\bigr),
& 0 \le t \le T_{\mathrm{ramp}},\\[4pt]
\rho_{\min}, & t > T_{\mathrm{ramp}}.
\end{cases}
\]
We applied this same $\rho_t$ to all optimizers (Kourkoutas--$\beta$, Adam--0.95, Adam--0.999).
No untimed warm‑up was used; timings average over the full training run.

We evaluate Kourkoutas--\(\beta\) with fixed hyperparameters across seeds:
\(\beta_1{=}0.9\), \(\beta_{2,\max}{=}0.999\), \(\beta_{2,\min}{=}0.88\),
\(\alpha{=}0.93\), \texttt{tiny\_spike}{=}\(10^{-9}\), \texttt{tiny\_denom}{=}\(10^{-8}\),
\texttt{adaptive\_tiny}{=}\texttt{True}, \texttt{decay}{=}0.98, \texttt{max\_ratio}{=}3,
\texttt{warmup\_steps}{=}0, \(\varepsilon{=}10^{-8}\), \(\texttt{bias\_correction}{=}\)``beta2max''. We
compare 
against Adam--0.95 and Adam--0.999 using the MLX Adam implementation with  bias correction on and \(\varepsilon{=}10^{-8}\).
Ten seeds (0--9) are used with no per‑seed retuning.

\smallskip

\noindent A run is deemed a \emph{success} if the final loss at epoch $e=100$K satisfies
\(\text{loss} \le 9\times10^{-5}\). Table~\ref{tab:pinn3d_summary} shows the success/failure metrics and the median final loss for all optimizers. 
Kourkoutas-\(\beta\) reduces instabilities under aggressive schedules and training remains stable without extra tuning of \(\rho\) or \(\beta_1\), 
in line with the optimizer’s design goal of handling spiky gradients Kourkoutas-\(\beta\) . 
As a result, with fixed hyperparameters across seeds, Kourkoutas–$\beta$ succeeds on all 10 seeds and attains a markedly lower median final loss among successes ($1.66\times10^{-6}$).
 Adam--999 fails in all runs and Adam-95 succeeds in only one. 
\begin{table}[H]
\centering
\caption{PINN--3D (100k epochs, 10 seeds). Success means final loss $\le \tau$ (here $\tau=9\times10^{-5}$).
Medians for loss are computed over successful runs only. Hardware: Apple M2 Ultra (198\,GB RAM).}
\label{tab:pinn3d_summary}
\begin{tabular}{@{}lcccc@{}}
\toprule
\textbf{Optimizer} & \textbf{Success (\# / 10)} & \textbf{95\% CI (Clopper--Pearson)} & \textbf{Median final loss}  \\
\midrule
K--$\beta$ & \textbf{10 / 10} & \textbf{[69.2\%,\,100\%]} & \textbf{$1.66\times10^{-6}$}  \\
Adam ($\beta_2{=}0.95$) & 1 / 10 & [0.25\%,\,44.5\%] & n/a\textsuperscript{\dag}  \\
Adam ($\beta_2{=}0.999$) & 0 / 10 & [0\%,\,30.8\%] & n/a \\
\bottomrule
\end{tabular}

\smallskip
\raggedright
\footnotesize \textsuperscript{\dag}\,With only one successful run, a “median among successes” is not informative; we therefore omit it.
\end{table}

As shown in Table~\ref{tab:pinn3d-mcnemar}, the success advantage is statistically significant under paired tests on the per‑seed success indicator (McNemar's exact, two‑sided):

\begin{table}[h]
\centering
\caption{Paired significance on success/failure per seed (McNemar's exact, two‑sided).
\(b\): seeds where the row method succeeds and the column fails; \(c\): vice versa; \(n=b{+}c\).}
\label{tab:pinn3d-mcnemar}
\begin{tabular}{@{}lccc@{}}
\toprule
\textbf{Comparison} & \(b\) & \(c\) & \textbf{\(p\)-value} \\
\midrule
K--\(\beta\) vs Adam--0.999 & 10 & 0 & 0.00195 \\
K--\(\beta\) vs Adam--0.95  & \phantom{0}9 & 0 & 0.00391 \\
Adam--0.95 vs Adam--0.999            & \phantom{0}1 & 0 & 1.000 \\
\bottomrule
\end{tabular}
\end{table}

Kourkoutas-$\beta$ per‑epoch cost is comparable to Adam and slightly lower in median (94.5 vs.\ 99.9$\,$ms for Adam–0.95 and 96.3$\,$ms for Adam–0.999). All timings include any one‑time JIT/compile overhead (no untimed warm‑up) and were collected on an Apple M2 Ultra (198 GB RAM).

\begin{table}[h]
\centering
\caption{Per‑epoch wall‑clock time (ms/epoch), medians across 10 seeds; diagnostics off, no untimed warm‑up.}
\label{tab:pinn3d_time}
\begin{tabular}{@{}lcc@{}}
\toprule
\textbf{Method} & \textbf{Median (ms/epoch)}  \\
\midrule
K--$\beta$            & \bf{94.5} \\
Adam--0.95          & 99.9 \\
Adam--0.999        & 96.3 \\
\bottomrule
\end{tabular}
\end{table}


\subsubsection{Visual impression of the physical significance of achieved loss at epoch 100K}

Figure~\ref{fig:pinn3d_scatter_seed0} shows 3D scatter plots of the steady-state temperature distribution in the domain corresponding to the PINN solution at epoch 100K for seed=0. The Kourkoutas‑$\beta$ solution preserves non‑negativity across the domain, while Adam‑95 shows a small fraction of points with T<0 and Adam‑999 degenerates toward a nearly uniform field.  This mirrors the large gap in final loss and underscores the benefit of a dynamic $\beta_2$ in stiff, composite PINN losses.

\begin{figure}[H]
  \centering
  \begin{subfigure}[b]{0.32\linewidth}
    \centering
    \includegraphics[width=\linewidth]{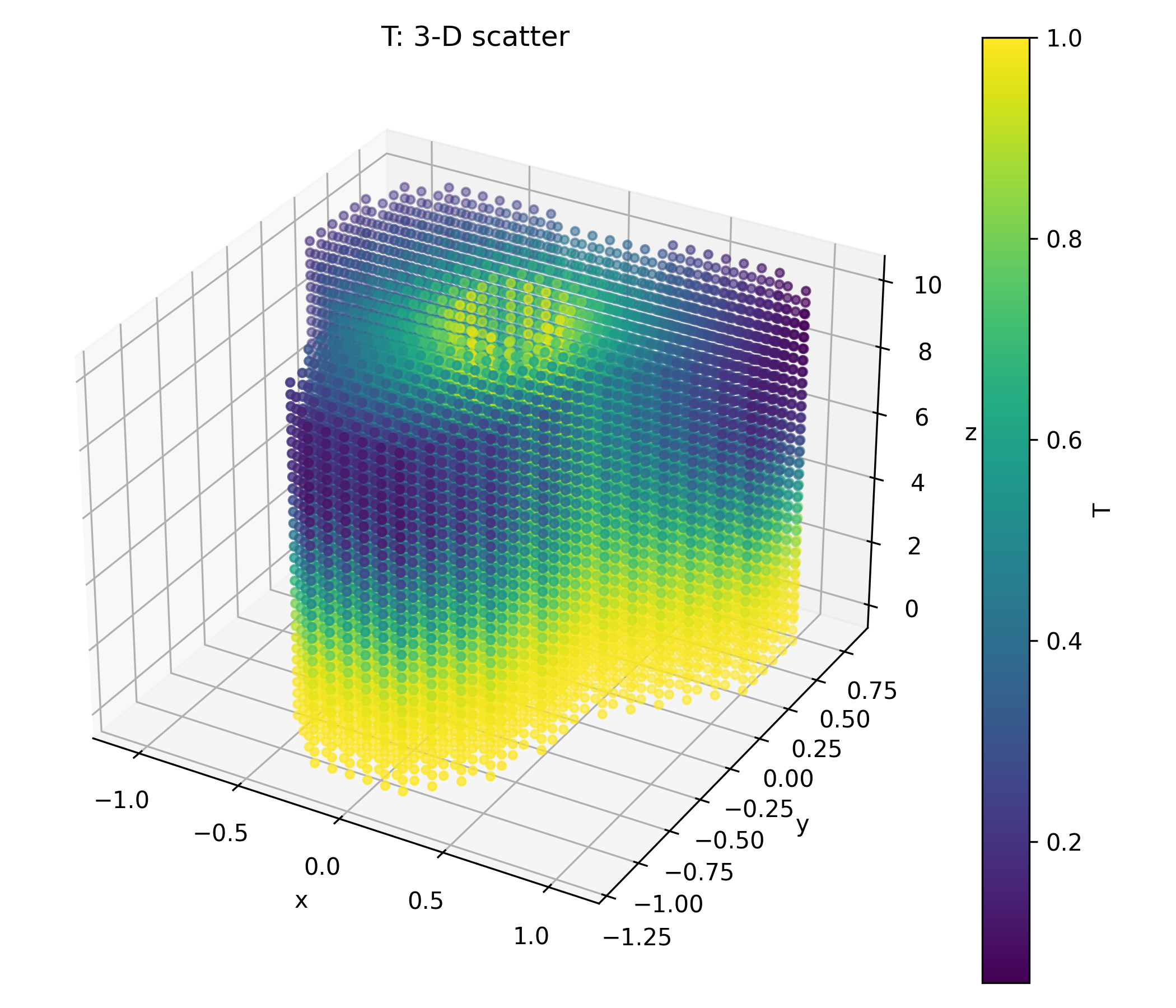}
    \caption{Kourkoutas-$\beta$}
  \end{subfigure}\hfill
  \begin{subfigure}[b]{0.32\linewidth}
    \centering
    \includegraphics[width=\linewidth]{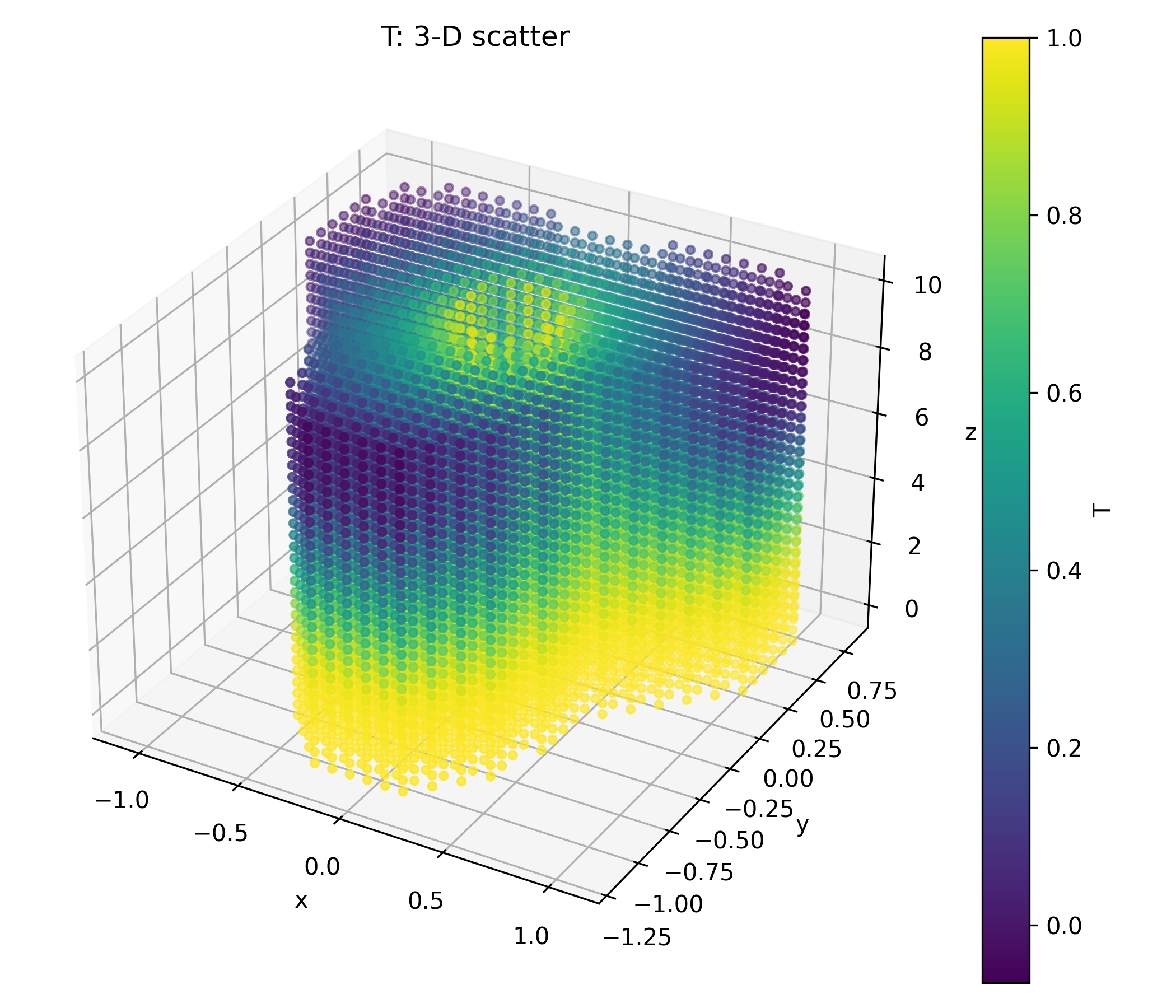}
    \caption{Adam--0.95}
  \end{subfigure}\hfill
  \begin{subfigure}[b]{0.32\linewidth}
    \centering
    \includegraphics[width=\linewidth]{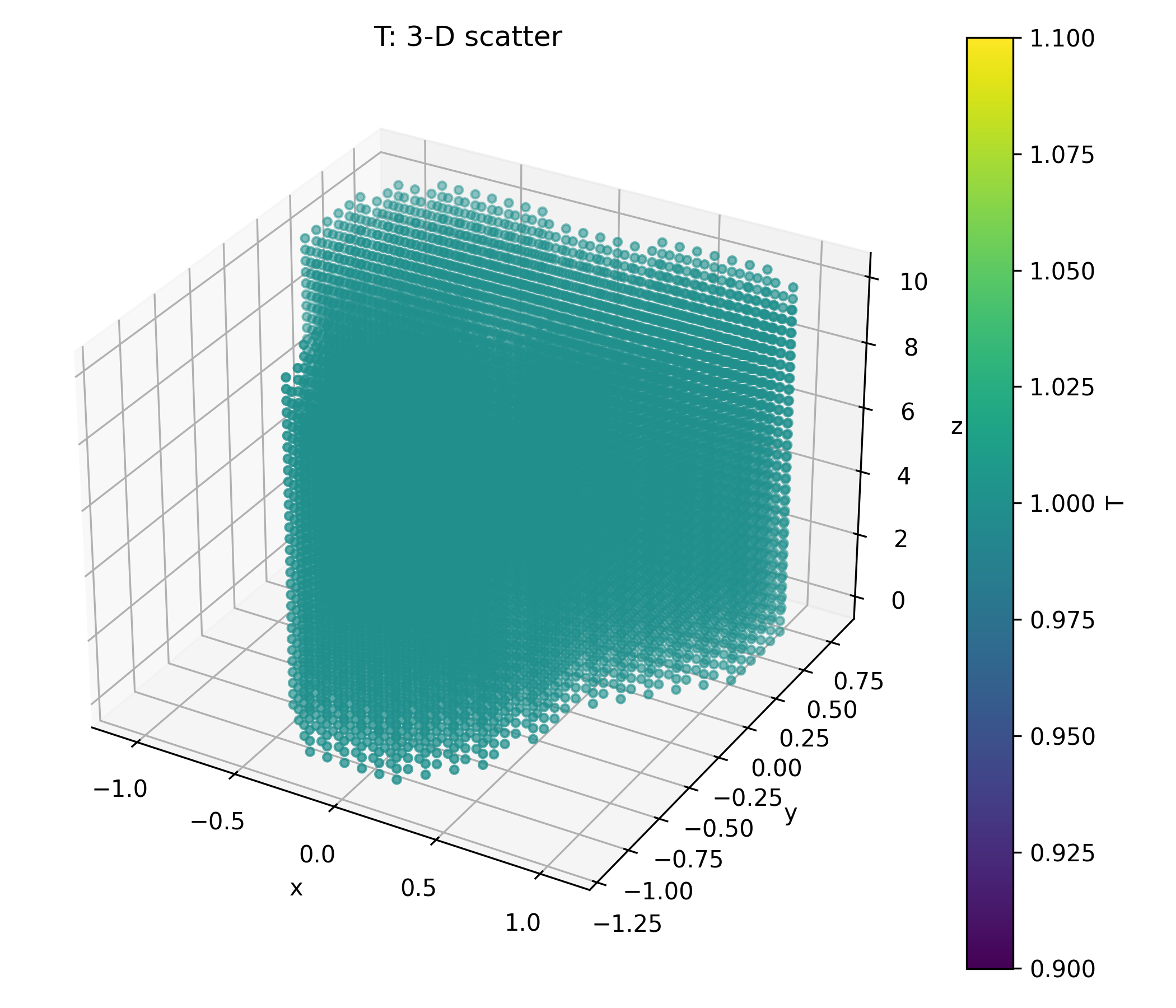}
    \caption{Adam--0.999}
  \end{subfigure}

  \vspace{0.1em}
  \caption{\textbf{PINN--3D temperature field, seed=0 (3-D scatter of collocation points; color = $T$).}
  All runs use the same network, data, and schedule; only the optimizer changes.
  K-$\beta$ yields a physically consistent field with $T\ge 0$ throughout the domain.
  Adam--0.95 is close but exhibits small \emph{undershoots} into negative temperatures at some points
  (its colormap extends slightly below $0$), indicating discrete maximum-principle violations.
  Adam--0.999 collapses toward an almost uniform field ($T{\approx}1$; note its 0.9–1.1 scale),
  consistent with excessive second-moment smoothing.  }
  \label{fig:pinn3d_scatter_seed0}
\end{figure}

\subsubsection{Dynamics of $\beta_2$ and sunspike}

Figure~\ref{fig:kbeta-violins} shows that the sunspike ratio concentrates in the 0.3–0.6 band throughout training, producing $\beta_2$ values in the 0.93–0.96 range via the rule 
$\beta_{2,t}=\beta_{2,\max}-(\beta_{2,\max}-\beta_{2,\min})\,\mathrm{sun}_t$ (with $\beta_{2,\min}{=}0.88$, $\beta_{2,\max}{=}0.999$). The dashed line at $\beta_2{=}0.999$ highlights that Kourkoutas–$\beta$ rarely selects the Adam‑like 
extreme and instead maintains moderate smoothing while staying agile during spikes. Seed‑to‑seed variability is small but not entirely absent showing the dynamic adaptability of Kourkoutas-$\beta$. A mild ripple tracks the cosine‑ramp→constant learning‑rate schedule.

\begin{figure*}[h]
  \centering
  \begin{subfigure}{0.48\textwidth}
    \centering
    \includegraphics[width=\linewidth]{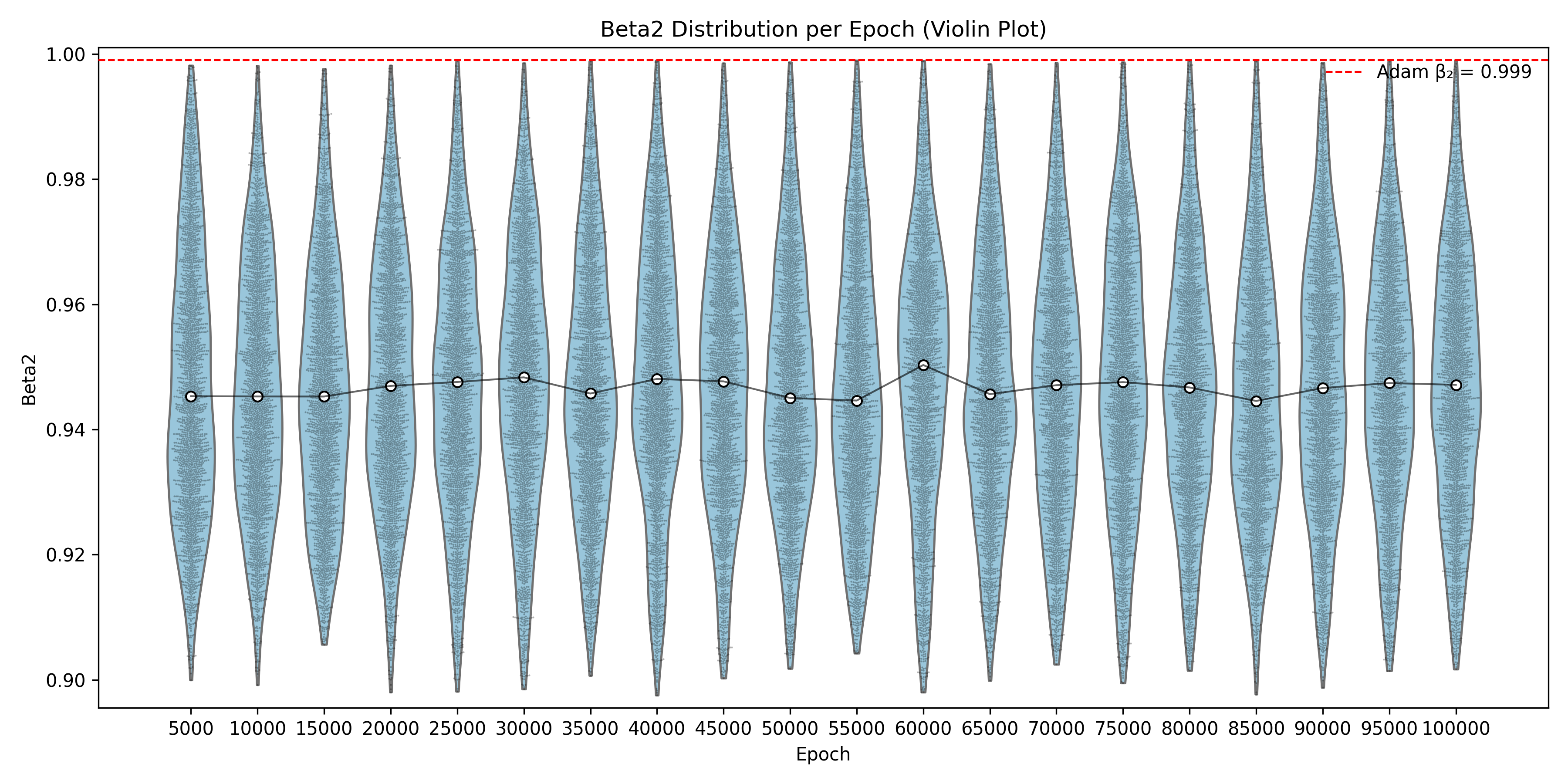}
    \caption{seed=0: $\beta_2$ violins by epoch.}
  \end{subfigure}\hfill
  \begin{subfigure}{0.48\textwidth}
    \centering
    \includegraphics[width=\linewidth]{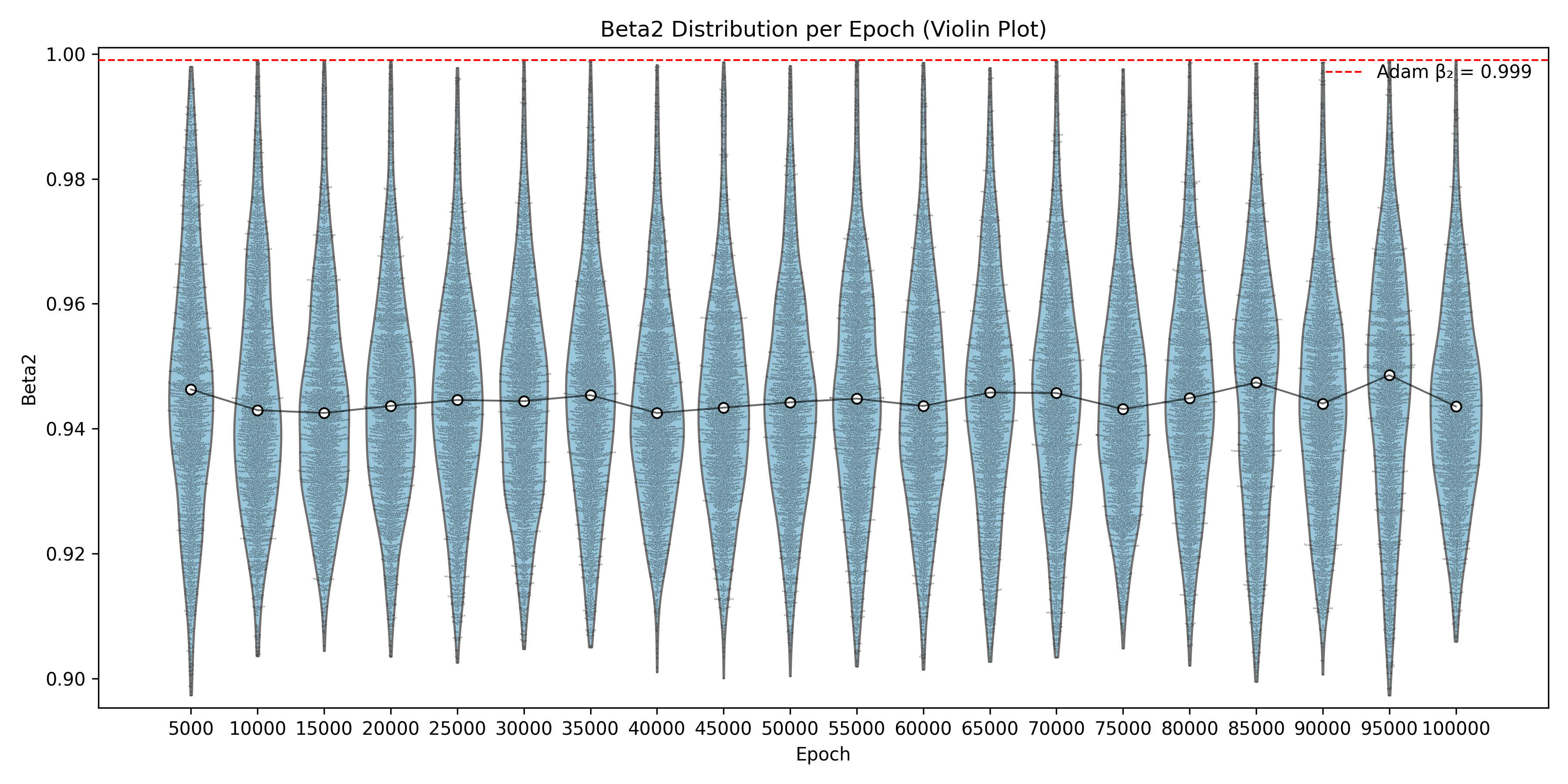}
    \caption{Seed 1: $\beta_2$ violins by epoch.}
  \end{subfigure}

  \medskip

  \begin{subfigure}{0.48\textwidth}
    \centering
    \includegraphics[width=\linewidth]{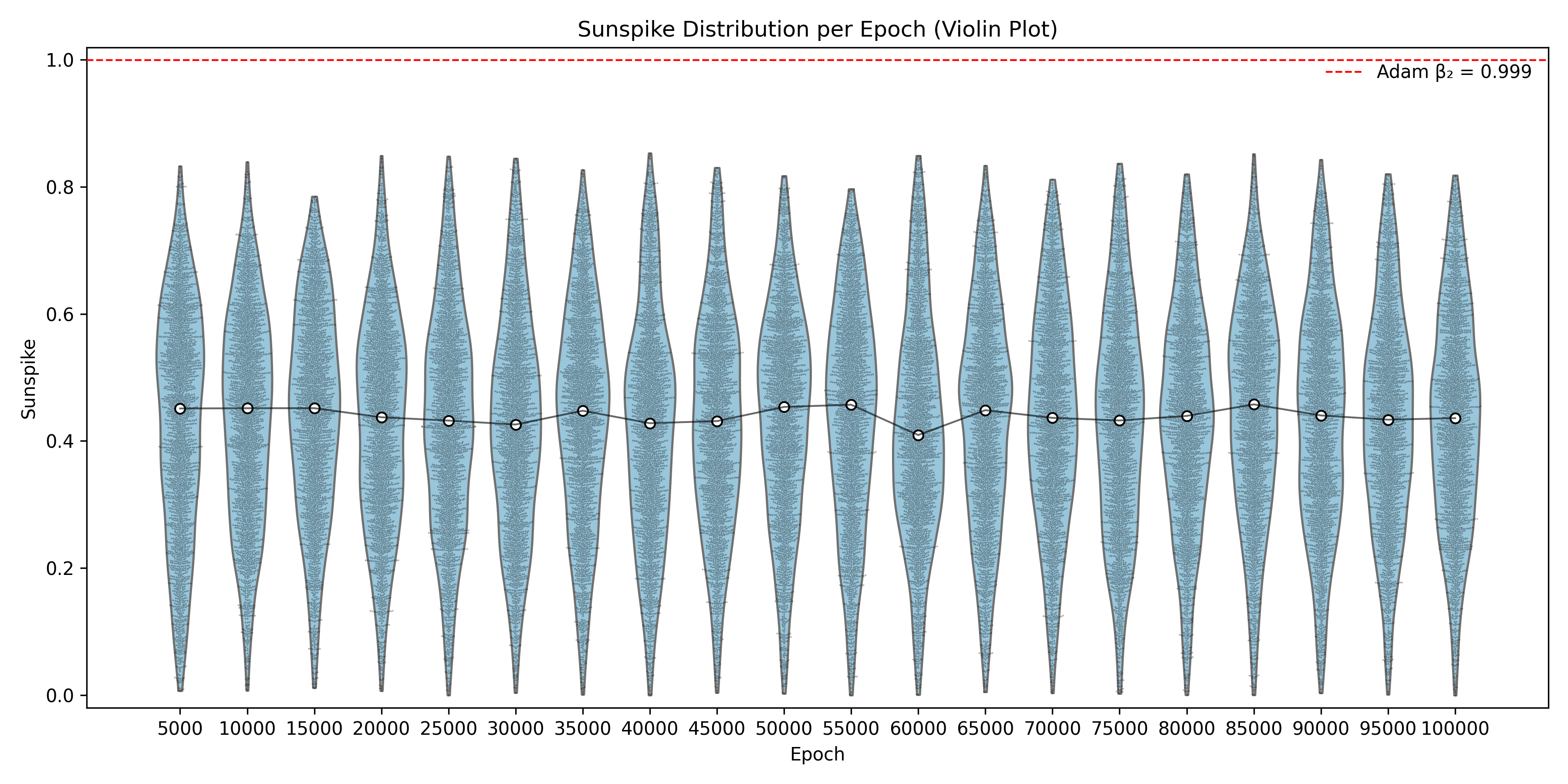}
    \caption{seed=0: sunspike violins by epoch.}
  \end{subfigure}\hfill
  \begin{subfigure}{0.48\textwidth}
    \centering
    \includegraphics[width=\linewidth]{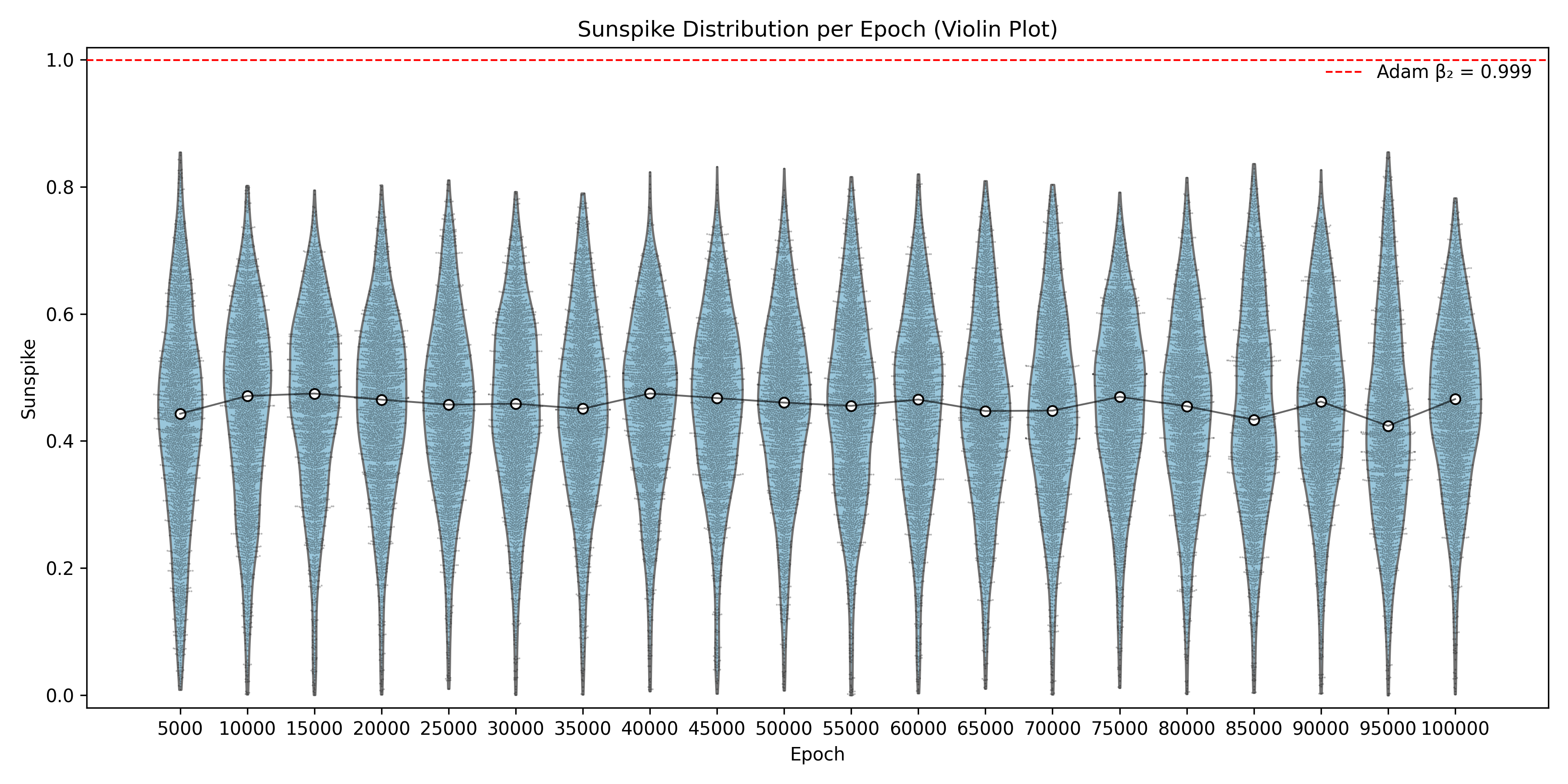}
    \caption{Seed 1: sunspike violins by epoch.}
  \end{subfigure}

  \caption{K–$\beta$ dynamics on PINN–3D (100k epochs). 
  Dashed line marks Adam’s fixed $\beta_2{=}0.999$. K–$\beta$ 
  selects $\beta_2$ predominantly in the $0.93$–$0.96$ band, driven by 
  sunspike values concentrated around $0.3$–$0.6$. Seed‑to‑seed behavior 
  is consistent.}
  \label{fig:kbeta-violins}
\end{figure*}

\paragraph{Median vs.\ mobility}
Across epochs the \emph{median} \(\beta_2\) under Kourkoutas–\(\beta\) typically
hovers near \(0.95\)—i.e., close to the fixed setting of Adam–0.95. Yet Adam–0.95
fails on most seeds while Kourkoutas–\(\beta\) succeeds on all (see
Table~\ref{tab:pinn3d_summary}). This suggests that the advantage does not come from picking a
better constant; it comes from \emph{mobility and granularity}. Kourkoutas–\(\beta\)
moves \(\beta_2\) \emph{in time and per layer}: it dips below the median during
spiky phases and climbs above it during calm phases, yielding a broad,
state‑dependent distribution around the same central value. The violins make this
visible (the interquartile band spreads and shifts) whereas a fixed \(\beta_2\) is
just a horizontal line. In short: similar medians, very different dynamics, and
the dynamic, layer‑wise span is what turns instability into reliable convergence.

\ref{appendix:PINNheatmaps} includes heatmaps of the $\beta_2$ and sunspike distributions which provide an additional method to visualize the agile adaptation that Kourkoutas-$\beta$ provides throughout the PINN training. 


\subsection{{\bf Testbed C:} Length--Jitter + Rare Trigger (MLX)}
\label{sec:toy-rare-trigger}

\paragraph{Purpose}
This toy isolates the “needle in a haystack” pattern that often produces spiky, heavy‑tailed gradient statistics: a single rare token may appear anywhere in a variable‑length sequence, and the task is to detect its presence. Such sparsity and length jitter are a good stress test for optimizers’ second‑moment adaptation.

\paragraph{Data generator (exactly as in \texttt{rare\_trigger\_toy.py})}
We sample batch size \(B=64\) sequences with lengths \(L_i \sim \mathrm{Unif}\{80,\dots,256\}\).
Tokens are i.i.d.\ integers in \([1,255]\); the padding id is \(0\).
With probability \(p_{\text{trigger}}=0.01\) a per‑sample “rare trigger” token id \(255\) is placed at a uniformly random valid position.
The label is \(y=1\) iff the trigger appears inside the unpadded region, else \(0\).
All randomness is from MLX (\texttt{mlx.random}) with a fixed base seed and per‑step seed offsets.

\paragraph{Model}
A tiny bag‑of‑embeddings classifier:
\(\mathbf{E}\in\mathbb{R}^{256\times d}\) with \(d=64\);
we mean‑pool valid token embeddings and apply a single linear head \(\mathbf{w}\in\mathbb{R}^{d\times 1}, b\in\mathbb{R}\).
Loss is binary cross‑entropy (BCE) with logits.
No dropout, weight decay, clipping, or trust region.

\paragraph{Training protocol}
Each run trains for \(30{,}000\) steps at learning rate \(\rho=10^{-2}\), with 10 JIT warmup steps (not timed).
We sweep seeds \(s\in\{0,\dots,9\}\).
The script is self‑contained and MLX‑only; it imports the release optimizer if available and otherwise falls back to MLX Adam.

\paragraph{Optimizers and hyperparameters (as used)}
\begin{itemize}
\item \textbf{Kourkoutas‑\(\boldsymbol{\beta}\)}: \(\beta_1=0.9\), dynamic \(\beta_2\in[0.88,0.999]\),
\(\alpha=0.93\), \(\varepsilon=10^{-8}\), bias correction \texttt{"beta2max"}, warmup \(=50\) steps,
single global bucket, degenerate AMSGrad \texttt{decay=0} (so \(\widehat v_t=v_t\); a \(v_{\max}\)
buffer is still allocated but unused), \texttt{max\_ratio=None}, no adaptive‑tiny.
\item \textbf{Adam--95}: MLX Adam with \(\beta_1=0.9,\ \beta_2=0.95\), \(\varepsilon=10^{-8}\), bias correction enabled.
\item \textbf{Adam--999}: MLX Adam with \(\beta_1=0.9,\ \beta_2=0.999\), \(\varepsilon=10^{-8}\), bias correction enabled.
\end{itemize}

\subsubsection*{Why this testbed?}
The gradient signal for the trigger embedding is extremely sparse (present in only \(\sim 1\%\) of sequences) and appears at random positions in sequences whose lengths vary over a \(3.2\times\) range.
This creates intermittent, high‑kurtosis gradient bursts on top of small, length‑dependent background gradients from the mean‑pool.
An optimizer either (i) over‑smooths the second moment and is slow/fragile on the bursts, or (ii) adapts quickly to bursts without destroying stability elsewhere.
Kourkoutas‑\(\beta\) explicitly targets this regime via a \emph{dynamic} \(\beta_2\in[0.88,0.999]\) and a modest \(\alpha=0.93\) EMA over pooled grad‑norms, with bias correction matched to the current \(\beta_2\) cap (\texttt{"beta2max"}).
That design should help when informative gradients are rare and spiky—exactly the setting here.

\subsubsection*{Results}

\paragraph{Per‑seed final losses (after 30K steps)} Table~\ref{tab:toyrare-seeds} shows the final binary cross-entropy (BCE) loss after 30{,}000 steps along with the ratio of Adam losses to that of Kourkoutas-$\beta$. Median and geometric mean values across seeds appear 
below each column. We note that two seeds (5 and 8) resulted in failure for all optimizers (loss \(\gg 10^{-2}\)) and additionally Adam-999 failed for seed=3. For all of the 8 successful runs,  Kourkoutas-$\beta$ consistently achieves lower loss than both 
Adam variants. 

\begin{table}[h]
\centering
\small
\begin{tabular}{rccc cc}
\toprule
Seed & K-\(\beta\) & Adam--95 & Adam--999 & Adam--95/K-\(\beta\) & Adam--999/K-\(\beta\) \\
\midrule
0 & 0.001055 & 0.001265 & 0.008644 & 1.199 & 8.195 \\
1 & 0.001142 & 0.001285 & 0.008542 & 1.125 & 7.480 \\
2 & 0.000703 & 0.000776 & 0.006253 & 1.104 & 8.900 \\
3 & 0.001498 & 0.001809 & 0.010653 & 1.208 & 7.110 \\
4 & 0.001280 & 0.001405 & 0.009059 & 1.098 & 7.079 \\
5 & 0.524838 & 0.517825 & 0.374564 & 0.987 & 0.714 \\
6 & 0.001195 & 0.001270 & 0.006817 & 1.062 & 5.706 \\
7 & 0.001229 & 0.001330 & 0.005996 & 1.082 & 4.878 \\
8 & 0.103896 & 0.102899 & 0.086955 & 0.991 & 0.837 \\
9 & 0.002445 & 0.002824 & 0.007676 & 1.155 & 3.141 \\
\midrule
Median & 0.001255 & 0.001368 & 0.008593 & 1.103 & 6.390 \\
Geo‑mean & 0.00342 & 0.00378 & 0.0141 & 1.105 & 4.13 \\
\bottomrule
\end{tabular}
\caption{Final BCE loss by seed. “Geo‑mean” is the geometric mean across seeds; ratio columns are per‑seed factors (baseline/K-\(\beta\)). Two seeds (\#5 and \#8) are “failures” for both Adam--95 and K-\(\beta\) (loss \(\gg 10^{-2}\)), and Adam--999 additionally degrades on seed \#3.}
\label{tab:toyrare-seeds}
\end{table}

\paragraph{Aggregate and significance}
Losses are heavy‑tailed due to occasional failures, so in Table~\ref{tab:toyrare-stats} we also report robust summaries and paired tests on \(\log_{10}\) loss, which turns multiplicative effects into additive ones. As, expected the advantage of Kourkoutas-$\beta$ is 
statistically significant. 

\begin{table}[H]
\centering
\small
\begin{tabular}{lcc}
\toprule
 & Adam--95 vs K-\(\beta\) & Adam--999 vs K-\(\beta\) \\
\midrule
Wins for K-\(\beta\) (out of 10) & \(8/10\) (sign test \(p=0.109\)) & \(8/10\) (sign test \(p=0.109\)) \\
Paired \(t\) on \(\log_{10}\) loss & \(t(9)=4.29,\ p=0.0022\) & \(t(9)=4.80,\ p=9.7\times10^{-4}\) \\
Wilcoxon on \(\log_{10}\) loss & \(p=0.0098\) (exact) & \(p=0.0098\) (exact) \\
Geo‑mean ratio (baseline/K-\(\beta\)) & \(1.105\times\) \([1.046,\,1.153]\) & \(4.13\times\) \([2.12,\,8.05]\) \\
Median ratio (baseline/K-\(\beta\)) & \(1.103\times\) & \(6.39\times\) \\
\bottomrule
\end{tabular}
\caption{Paired comparisons across 10 seeds. The geometric‑mean ratio uses the paired mean of \(\log_{10}\) ratios and is reported with a \(95\%\) CI transformed back to multiplicative units.}
\label{tab:toyrare-stats}
\end{table}

\paragraph{Timing (informal)}
Wall‑clock times per run varied with JIT/cache state. Medians:
Kourkoutas-\(\beta\) \(79.9\,\mathrm{s}\), Adam--95 \(73.4\,\mathrm{s}\), Adam--999 \(33.5\,\mathrm{s}\) for \(30\text{K}\) steps on the same machine.
We did not tune for speed here; this toy is aimed at optimizer behavior rather than throughput.

\subsubsection*{Takeaways.}
Across 10 seeds, Kourkoutas‑\(\beta\) achieves consistently lower final loss than Adam(0.95) on \(8/10\) seeds, and the advantage is statistically significant on \(\log_{10}\) loss (\(t(9)=4.29,\ p=0.0022\); Wilcoxon \(p=0.0098\)).
The typical improvement is \(\approx1.10\times\) vs Adam--95 and \(\approx4.1\times\) vs Adam--999 in geometric‑mean loss, with median per‑seed factors \(1.10\times\) and \(6.39\times\), respectively.
Occasionally,  unlucky trigger realizations (class imbalance within batches / late discoveries) can lead to  “failures” where  loss \(\gg 10^{-2}\). 
Both Kourkoutas-\(\beta\) and Adam--95 experience the same two “failed” seeds; on those particular runs the absolute loss is slightly worse for Kourkoutas-\(\beta\), which dominates arithmetic means but is de‑emphasized by robust statistics (median / log‑scale).
Given that we purposely disabled AMSGrad/decay and any trust‑region (\texttt{max\_ratio=None}) to keep the toy minimal, these results suggest Kourkoutas‑\(\beta\)’s dynamic \(\beta_2\) is beneficial precisely in the sparse, bursty‑gradient regime this toy induces.
  
\subsection*{Reproducibility notes.}
Exact settings: \(B=64\), \(d=64\), \(L\in[80,256]\), \(p_{\text{trigger}}=0.01\), vocab \(=256\) with \(\text{pad\_id}=0\) and \(\text{trig\_id}=255\);
\(30{,}000\) steps at \(\eta=10^{-2}\); 10 warmup steps.
Kourkoutas-\(\beta\) hyperparameters and Adam baselines are as listed above; the script (\texttt{rare\_trigger\_toy.py}) toggles between Kourkoutas-\(\beta\) and MLX Adam via \texttt{HAVE\_KBETA}.




\subsection{{\bf Testbed D:} Character-Level Language Modeling on \texttt{small-enwik8} (10 seeds)}
\label{sec:enwik8}

\paragraph{Purpose}
This compact character‑level language‑modeling benchmark stresses second‑moment adaptation in a realistic Transformer training loop under \emph{small data}, pronounced \emph{length‑jitter} (sequence lengths $L\!\in[16,512]$), and abrupt \emph{piecewise‑constant learning‑rate drops}. Although training is fully deterministic (no dropout), these factors induce non‑stationary gradient scales with occasional spikes. The goal is to test whether a \emph{layer‑wise dynamic} $\beta_2$ (Kourkoutas‑$\beta$) can track such bursts better than fixed‑$\beta_2$ Adam ($\beta_2\!\in\!\{0.95,\,0.999\}$) while keeping runtime on par. %
Dataset construction and SHA‑256 checksums for \texttt{small‑enwik8} are given in the data paragraph for this testbed.

\subsubsection*{Dataset and creation (verifiable)}
We use the first 30\,MB of \texttt{enwik8} (the classic Hutter Prize corpus). The slice is created deterministically:
\begin{lstlisting}[language=bash]
curl -L -o enwik8.zip https://data.deepai.org/enwik8.zip
unzip enwik8.zip
head -c 30000000 enwik8 > small-enwik8.txt
\end{lstlisting}
Checksums on our machine:
\begin{lstlisting}[language=bash]
sha256sum enwik8             # 2b49720e...c024a8
sha256sum small-enwik8.txt   # e0152eee...298b7
\end{lstlisting}
Re‑creating \texttt{small-enwik8.txt} reproduced the same SHA‑256 (bit‑for‑bit identity).

\subsubsection*{Model and training protocol (as in the provided script)}
A 6‑block Transformer (\(d_{\text{model}}{=}512\), \(n_{\text{head}}{=}8\), FFN width \(4d\)), GELU, LayerNorm, causal self‑attention; no dropout or weight decay. Training uses variable sequence length with deterministic bucketing: \(L\in[16,512]\) rounded to a multiple of 32, batch \(=4\), context window \(=512\). Steps \(=50{,}001\); learning‑rate schedule (applied identically to all methods): \(1\!\times\!10^{-3}\) for \(1\le s<30\mathrm{k}\), \(5\!\times\!10^{-4}\) for \(30\mathrm{k}\le s<40\mathrm{k}\), then \(1\!\times\!10^{-4}\) for \(40\mathrm{k}\le s\le 50\mathrm{k}\).
Evaluation uses a fixed held‑out batch (length 256, \(B{=}128\)) to report cross‑entropy and bits‑per‑character (BPC). We run 10 matched seeds (0–9).

\subsubsection*{Optimizers and settings}
\begin{itemize}
\item \textbf{Kourkoutas–$\beta$} (our method): \(\beta_1{=}0.9\); dynamic \(\beta_2\in[0.88,0.999]\); \(\alpha{=}0.93\) (EMA for sunspike); \(\varepsilon{=}10^{-8}\); warm‑up \(=250\) steps; \texttt{bias\_correction="beta2max"}; per‑\emph{array} stable buckets (\texttt{layer\_key\_fn} yields a stable id per parameter); no AMSGrad/clip/adaptive‑tiny; diagnostics off.
\item \textbf{Adam--95}: MLX Adam (\(\beta_1{=}0.9, \beta_2{=}0.95, \varepsilon{=}10^{-8}\)), bias correction on.
\item \textbf{Adam--999}: MLX Adam (\(\beta_1{=}0.9, \beta_2{=}0.999, \varepsilon{=}10^{-8}\)), bias correction on.
\end{itemize}

\subsubsection*{Metrics and reporting}
Primary metric: final Bits-Per-Character (BPC )at step \(50\mathrm{k}\). We summarize mean\(\pm\)sd and median\([\mathrm{IQR}]\) over seeds and perform paired statistics on the seedwise BPC differences (Adam – K‑\(\beta\)). 
All optimizer comparisons are paired by seed. We report two‑sided paired t‑tests on per‑seed differences with 95\% CIs and paired effect sizes ($d_z$). As distribution‑free complements we report Wilcoxon signed‑rank tests (exact two‑sided p) and sign tests (exact two‑sided p); for binary outcomes we use McNemar’s exact test. Where two pairwise comparisons are made within a testbed (Kourkoutas--$\beta$ vs Adam--95 and Kourkoutas--$\beta$ vs Adam--999), we additionally report Holm‑adjusted p-values. We do not use Welch’s t‑test because it assumes independent samples and discards the seed pairing.

Hardware: Apple Studio M2 Ultra (198\,GB). Timings are full‑run wall‑clock seconds (no untimed warm‑up).

\subsubsection*{Why this testbed?}
Beyond PDE surrogates and the length‑jitter + rare‑trigger toy, we want a mainstream sequence‑modeling task that still stresses second‑moment adaptivity. Character‑level modeling on a compact slice of \texttt{enwik8} offers: (i) nontrivial long‑range dependencies, (ii) variable effective sequence lengths during training, and (iii) sharp loss shifts when the model locks onto frequent symbol patterns. These features can produce intermittent gradient bursts. This testbed checks whether layer‑wise dynamic $\beta_2$ helps outside physics workloads.

\subsubsection*{Results (10 seeds)}
Table~\ref{tab:enwik8-abs} reports final BPC at step $50\mathrm{k}$. 
Kourkoutas--$\beta$ achieves $\mathbf{1.639\pm0.027}$ (median $1.650$ [0.036]), 
well below both Adam--95 ($2.637\pm0.681$, median $2.378$ [1.143]) 
and Adam--999 ($3.906\pm0.087$, median $3.926$ [0.128]).

\begin{table}[H]
\centering
\caption{Final BPC at step $50\mathrm{k}$ (10 seeds). Lower is better.}
\label{tab:enwik8-abs}
\begin{tabular}{lcc}
\toprule
\textbf{optimizer} & \textbf{Mean$\pm$sd} & \textbf{Median [IQR]} \\
\midrule
K--$\beta$      & $\mathbf{1.639 \pm 0.027}$ & $1.650\ [0.036]$ \\
Adam--95        & $2.637 \pm 0.681$          & $2.378\ [1.143]$ \\
Adam--999       & $3.906 \pm 0.087$          & $3.926\ [0.128]$ \\
\bottomrule
\end{tabular}
\end{table}

Paired differences (Adam $-$ K-\(\beta\)) are large and significant (Table~\ref{tab:enwik8-stats}): 
vs.\ Adam--95 the mean BPC gain is \(0.997\) with \(95\%\) CI \([0.504,\,1.491]\), 
\(t(9){=}4.58\), \(d_z{=}1.45\), \(r{=}0.836\); 
vs.\ Adam--999 it is \(2.267\) with \(95\%\) CI \([2.210,\,2.323]\), 
\(t(9){=}90.93\), \(d_z{=}28.76\), \(r{=}0.999\). 
Kourkoutas-\(\beta\) wins on \(10/10\) seeds against both baselines. 
For both comparisons the Wilcoxon signed‑rank (exact two‑sided) has all 10 paired differences positive (no ties), giving the maximal statistic \(W^{+}{=}55\) and the same exact two‑sided \(p{=}0.001953\).
(For completeness, Holm‑corrected two‑sided \(t\)-test \(p\)-values across the two comparisons are \(1.33\times 10^{-3}\) for Adam--95 and \(2.40\times 10^{-14}\) for Adam--999.)

\begin{table}[H]
\centering
\caption{Paired BPC differences at step \(50\mathrm{k}\) (Adam – K-$\beta$). 
Effect sizes: $d_z$ (paired Cohen’s $d$), $r$ (correlation). 
All nonparametric $p$-values are exact two-sided; Holm adjustment is across the two paired $t$-tests within this testbed.}
\label{tab:enwik8-stats}
\renewcommand{\arraystretch}{1.1}
\small
\begin{tabular}{lccccccc}
\toprule
\textbf{Comparison} & \textbf{Mean} & \textbf{95\% CI} & $\mathbf{t(9)}$ & $\mathbf{d_z}$; $\mathbf{r}$ & \textbf{Wilcoxon $p$} & \textbf{Wins} & \textbf{Holm $p$} \\
\midrule
Adam--95  & 0.997  & [0.504,\,1.491] & 4.58  & 1.45; 0.836 & $1.953\times 10^{-3}$ & 10/10 & $1.33\times 10^{-3}$ \\
Adam--999 & 2.267  & [2.210,\,2.323] & 90.93 & 28.76; 0.999 & $1.953\times 10^{-3}$ & 10/10 & $2.40\times 10^{-14}$ \\
\bottomrule
\end{tabular}
\end{table}

\subsubsection*{Visual interpretation of the loss curves}

Figure~\ref{fig:enwik8-curves-seed} shows the evolution of Bits-Per-Character (BPC) with training step, 
grouped by optimizer (different line styles) and by seed (different colors). 
Kourkoutas--$\beta$ curves descend rapidly and smoothly and remain tightly clustered across seeds. 
Adam--999 curves plateau early and show virtually no further loss reduction beyond the initial drop. 
Adam--95 exhibits wide seed-to-seed variability, with a mean that is significantly higher than the tightly clustered 
Kourkoutas--$\beta$ BPC lines (see Figure~\ref{fig:enwik8-mean-std}).

\begin{figure}[H]
  \centering
  \includegraphics[width=0.80\linewidth, clip,trim=5pt 5pt 5pt 5pt]{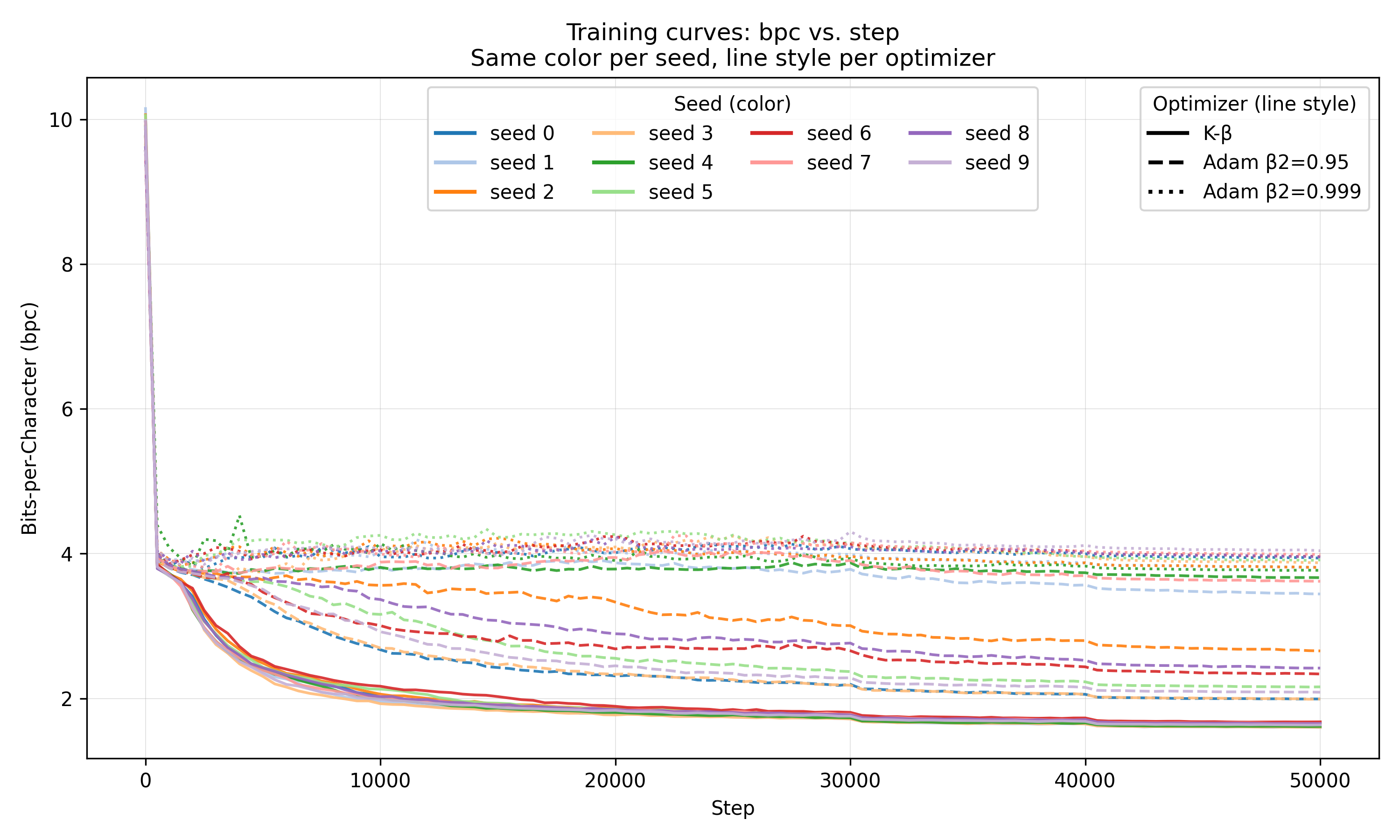}
  \caption{\textbf{BPC vs.\ step across 10 seeds (same color per seed; line‑style per optimizer).}
  K--$\beta$ descends smoothly and tightly clustered, while Adam--95 exhibits wide seed‑to‑seed spread and Adam--999 plateaus high.}
  \label{fig:enwik8-curves-seed}
\end{figure}

\begin{figure}[H]
  \centering
  \includegraphics[width=0.80\linewidth, clip,trim=5pt 5pt 5pt 5pt]{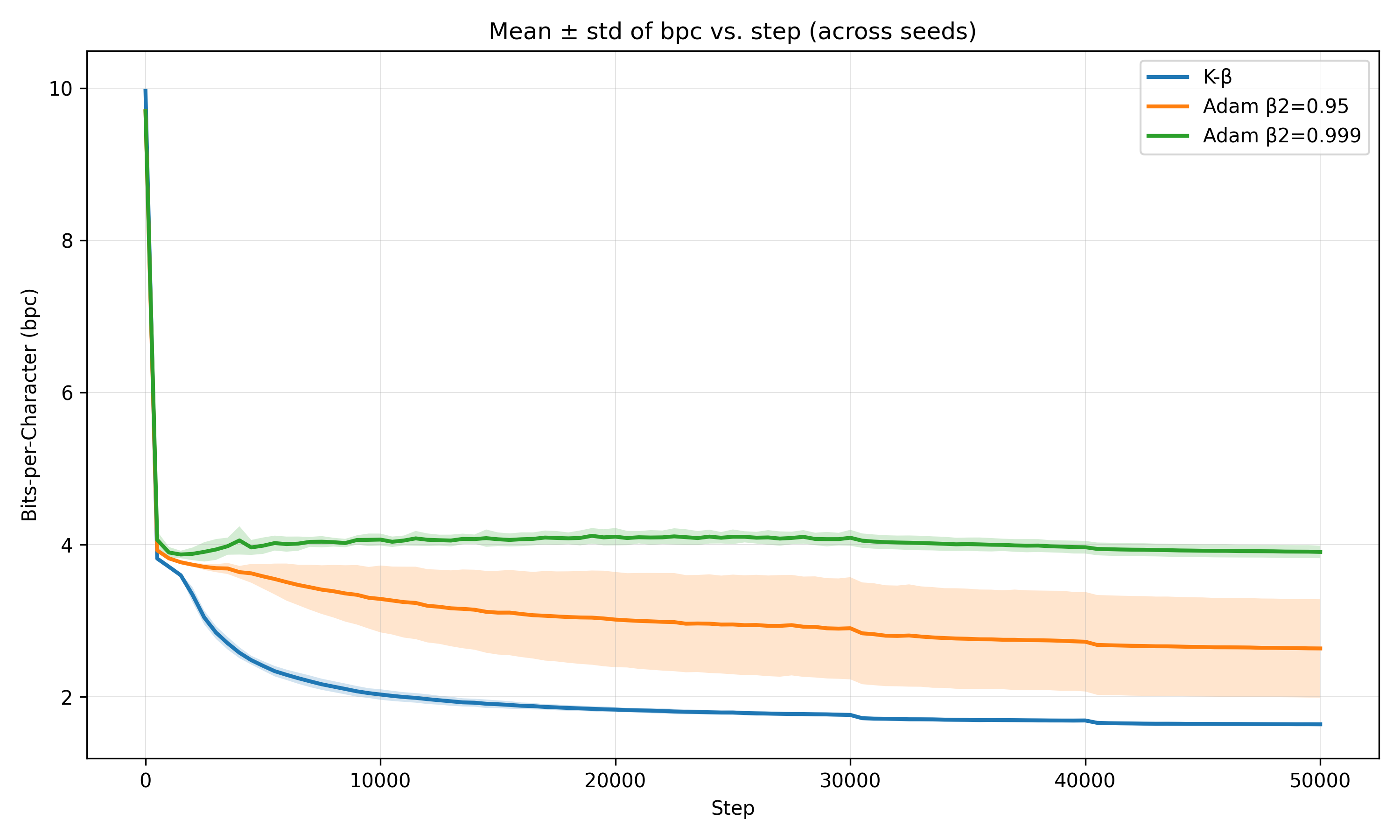}
  \caption{\textbf{Mean $\pm$ 1\,sd BPC over seeds (smoothed curves).}
  Bands show across‑seed variability; K--$\beta$’s band is narrow throughout, indicating robust convergence.}
  \label{fig:enwik8-mean-std}
\end{figure}

\paragraph{Timing}
Median wall-clock per full run: Kourkoutas--$\beta$ \(2109\,\mathrm{s}\) vs.\ Adam--95 \(1992\,\mathrm{s}\) and Adam--999 \(1994\,\mathrm{s}\); 
i.e.\ Kourkoutas--$\beta$ is \(\approx 6\%\) slower here. All runs use identical compile/barrier settings. 
In our other tests (Secs.~\ref{sec:transformer}--\ref{sec:PINN}), Kourkoutas--$\beta$ was parity-speed with diagnostics off. 
This language-modeling harness pays a small overhead for per-array bucketing while remaining far more stable across seeds. 
Part of this overhead may stem from the extra ordering of the parameter tree needed for seed reproducibility, 
which in this testbed was implemented ad-hoc without leveraging MLX tree utilities. 
This step could likely be optimized further in future work.

\subsubsection*{Takeaways}
On \texttt{small-enwik8}, layer‑wise dynamic \(\beta_2\) (via sunspike) yields:
(i) markedly lower BPC than fixed‑\(\beta_2\) Adam (\(\sim38\%\) mean reduction vs Adam–95 and \(\sim58\%\) vs Adam–999), 
(ii) dramatically lower across‑seed variance (IQR \(0.036\) vs \(1.143\) for Adam–95), and 
(iii) universal wins across seeds. This extends the benefits observed on PDE workloads to a standard language‑modeling task and suggests that adapting the second‑moment memory to bursty periods is beneficial beyond physics.

\subsection{Ablations (compact)}
\label{sec:ablations-compact}

\subsubsection{Kourkoutas-$\beta$ configured as Adam}
We verify two equivalences used as controls: (i) with fixed $\beta_2$ and \texttt{bias\_correction}=\texttt{"none"}, Kourkoutas-$\beta$ reproduces Adam with bias correction \emph{off}; (ii) with fixed $\beta_2$ and \texttt{bias\_correction}=\texttt{"beta2max"}, it reproduces Adam with bias correction \emph{on} (see App.~\ref{appendix:ablations}). All other settings are identical across methods (\,$\rho$, $\beta_1$, schedules, seeds\,). When $\beta_2$ is fixed, the choice of \texttt{layer\_key\_fn} (single global bucket $\lambda\!:\!0$, coarser pooling \texttt{p.shape}, or a fine module path/id) has no effect on the update.

\subsubsection{Hyperparameter ablations}
We assess sensitivity to the EMA coefficient $\alpha$ used for the pooled gradient‑norm statistic. Results for the Transformer (Heat2D) at epoch 100 are shown in Table~\ref{tab:alpha-ablation}. The best validation loss is obtained near $\alpha=0.93$; values that are markedly smaller or larger degrade performance modestly, but Kourkoutas-$\beta$ maintains its edge over the Adam variants in all cases.
 
\begin{table}[H]
\centering
\caption{Effect of $\alpha$ on Transformer (Heat2D), seed $=0$. Losses at epoch $=100$.}
\label{tab:alpha-ablation}
\begin{tabular}{c c c}
\toprule
$\alpha$ & Training Loss & Validation Loss \\
\midrule
0.85 & $2.2634 \times 10^{-6}$ & $2.4053 \times 10^{-6}$ \\
0.90 & $2.2792 \times 10^{-6}$ & $2.3993 \times 10^{-6}$ \\
0.93 & $\mathbf{1.7639 \times 10^{-6}}$ & $\mathbf{1.8462 \times 10^{-6}}$ \\
0.98 & $2.1601 \times 10^{-6}$ & $2.2429 \times 10^{-6}$ \\
\bottomrule
\end{tabular}
\end{table}

Table\ref{tab:beta2min-ablation} summarizes the ablation results of $\beta_{2,\min}$ for the same case. The best validation loss is obtained near $\beta_{2,\min}=0.88$ which is the default setting. Performance degrades only modestly away from this setting 
and in all cases Kourkoutas-$\beta$ continues to outperform both Adam variants by a significant margin.  

\begin{table}[H]
\centering
\caption{$\beta_{2,\min}$ ablation results for Transformer (Heat2D), Seed = 0. 
\label{tab:beta2min-ablation}
Losses are reported at epoch  = 100.}
\begin{tabular}{c c c}
\toprule
$\beta_{2,\min}$ & Training Loss & Validation Loss \\
\midrule
0.85 & $2.2271 \times 10^{-6}$ & $2.3600 \times 10^{-6}$ \\
0.88 & $\mathbf{1.7639} \times 10^{-6}$ & $\mathbf{1.8462} \times 10^{-6}$ \\
0.90 & $1.8978 \times 10^{-6}$ & $2.0052 \times 10^{-6}$ \\
0.93 & $1.9852 \times 10^{-6}$ & $2.0999 \times 10^{-6}$ \\
\bottomrule
\end{tabular}
\end{table}

\paragraph{Takeaway} 
The $\alpha$ and $\beta_{2,\min}$ sweeps show modest sensitivity to either parameter, 
with shallow optima near $\alpha \!\approx\! 0.93$ and $\beta_{2,\min} \!\approx\! 0.88$. 
Across the tested ranges $\alpha \in [0.85, 0.98]$ and $\beta_{2,\min} \in [0.85, 0.93]$, 
Kourkoutas--$\beta$ remains robust and continues to outperform Adam under matched settings.

\subsection{Reproducibility and artifacts}
We release training harnesses, configs, and scripts for both testbeds, together with snapshot helpers to log sunspike and \(\beta_2\) per layer (\lstinline|snapshot_sunspike_history()|). The optimizer is exactly the \lstinline|KourkoutasSoftmaxFlex| class described in Sections~\S\ref{sec:overview}–\S\ref{sec:practice}; MLX Adam uses bias correction on in the main comparisons (Transformer/PINN/Testbed C); bias correction is off only in the toy \emph{sanity} checks.

\emph{All baselines use MLX~\texttt{v0.26.3} (pinned in each repository’s \texttt{pyproject.toml}); the full, pinned environment is enumerated in~\ref{app:env}.}

\paragraph{Code availability}
We release the optimizer (\repoKbeta) and all testbeds (\repoPINN, \repoTrans) under open‑source MIT licenses. Exact package pins (including \texttt{mlx==0.26.3}) and scripts are included to reproduce all tables and figures.
The code for the Testbed~C (Length--Jitter + Rare Trigger)  is listed in \ref{app:trigger-code}. The full training script for Testbed~D (\texttt{testbed\_d.py}) is included  under ``/examples'' in \repoKbeta{} (versioned release: \doiKbeta).

\subsection{Discussion}
Kourkoutas-\(\beta\) preserves Adam’s simplicity while adding a layer-wise, bounded adaptation of \(\beta_2\) that is well-suited to deterministic yet heterogeneous PDE workloads. It is drop-in compatible, requires no schedule changes, and its overhead is small in practice. While plain Adam remains a strong default on large, well-conditioned tasks, we find Kourkoutas-\(\beta\) advantageous when gradient norms are intermittently spiky due to boundary/initial condition heterogeneity or stiff composite losses.

\section{Analyzing Convergence: A Step-by-Step Proof Sketch}
\emph{Regret} is a measure of how much extra cost an algorithm incurs compared 
to the best fixed decision in hindsight. If you’re navigating an unfamiliar city 
without a map, regret is the extra distance you traveled because you didn’t 
know the best route upfront. Formally, in an online optimization setting:
\[
  R(T) 
  \;=\;
  \sum_{t=1}^{T} f_t(x_t)
  \;-\;
  \min_{x \,\in\,\mathcal{X}} \sum_{t=1}^{T} f_t(x).
\]
Sublinear regret, \(R(T) = o(T)\), implies the \emph{average} regret 
\(R(T)/T\) goes to zero, meaning the algorithm effectively learns 
an action that is nearly as good as the single best decision in hindsight.

\subsection*{Why This Matters for Adam}
In proofs for Adam-like methods, we typically show \(\sum_{t=1}^T f_t(\theta_t) - f_t(\theta^*)\) 
is sublinear when \(\|g_t\|\) is bounded and the second-moment estimate 
remains well-behaved. For Kourkoutas-\(\beta\), we ensure the second moment
$\,v_t\,$ remains bounded \emph{even though} $\beta_2$ changes per iteration, 
thus preserving sublinear regret.

We now integrate our Kourkoutas-\(\beta\) approach into an Adam-style analysis, 
showing that sublinear regret or diminishing gradient is still achievable. 
We adapt the bounding steps from \cite{kingma2015adam} but let \(\beta_2\) vary 
each iteration in the range \([\beta_{2,\min}, \beta_{2,\max}]\).

\subsection{Recurrence and Worst-Case Bound on \texorpdfstring{$v_t$}{v\_t}}
If the second moment is updated as
\[
v_{t+1}
=
\beta_{2,t}\,v_t
+\bigl(1-\beta_{2,t}\bigr)\,g_t^2,
\]
and \(\|g_t\|\le G\), then unrolling yields:
\[
v_{t+1}
=\;
v_1 \prod_{j=2}^{t+1}\beta_{2,j}
+\sum_{i=1}^{t}
\bigl(1-\beta_{2,i}\bigr)
\bigl(\!\!\ \prod_{j=i+1}^{t+1}\,\, \!\!\!\!\beta_{2,j}\bigr)\,\|g_i\|^2.
\]
Take the worst-case: \(\beta_{2,i} \ge \beta_{2,\min}\) means 
\((1-\beta_{2,i}) \le (1-\beta_{2,\min})\), and 
\(\prod_{j=i+1}^{t+1}\beta_{2,j}\le \beta_{2,\max}^{\,t-i}\). Hence,
\[
v_{t+1}
\;\le\;
v_1\,\beta_{2,\max}^{\,t}
\;+\;(1-\beta_{2,\min})\,\sum_{i=1}^t \|g_i\|^2 \,\beta_{2,\max}^{\,t-i}.
\]
With $\|g_i\|\le G$ and a geometric series sum
\(\sum_{k=0}^{t-1} \beta_{2,\max}^k 
= \frac{1-\beta_{2,\max}^t}{1-\beta_{2,\max}},\)
we get
\[
v_{t+1}
\;\le\;
v_1\,\beta_{2,\max}^t
+\bigl(1-\beta_{2,\min}\bigr)G^2
\frac{1-\beta_{2,\max}^t}{1-\beta_{2,\max}}.
\]
As $t\to\infty$, if $\beta_{2,\max}<1$, 
$\beta_{2,\max}^t\to 0$, so
\[
v_{t+1} 
= 
O\Bigl(\tfrac{(1-\beta_{2,\min})}{1-\beta_{2,\max}}\,G^2\Bigr).
\]
Thus $v_t$ remains bounded by a factor involving $(\beta_{2,\min}, \beta_{2,\max})$.

\subsection{From Bounded \texorpdfstring{$v_t$}{v\_t} to Sublinear Regret}
Standard Adam proofs (e.g.\ \cite{kingma2015adam}) show sublinear regret or diminishing 
$\|\nabla f(\theta)\|$ if: 
\begin{enumerate}
    \item $\rho_t$ (learning rate) decays at a suitable rate, e.g.\ $\rho_t\sim 1/\sqrt{t}$,
    \item $v_t$ remains in a reasonable range, thus $\frac{1}{\sqrt{v_t}}$ 
          is not too large nor too small.
\end{enumerate}
Because Kourkoutas-\(\beta\) ensures a geometric weighting (bounded by $\beta_{2,\max}^t$ 
above), we inherit Adam’s sublinear or no-worse convergence rate, up to factors
depending on $(1-\beta_{2,\min})/(1-\beta_{2,\max})$. Hence the dynamic 
\(\beta_2\) does \emph{not} break the essential bounding steps that yield 
Adam’s typical $O(\sqrt{T})$ regret or diminishing gradient norms in nonconvex settings.

\section{Conclusion and Future Work}
Kourkoutas-\(\beta\) is a drop‑in Adam variant that dynamically modulates the second‑moment discount \(\beta_2\) \emph{per layer} using a bounded “sunspike” signal (current pooled gradient norm relative to its EMA). With \(\beta_2\) constrained to \([\beta_{2,\min},\beta_{2,\max}]\subset(0,1)\), a simple geometric bound preserves Adam‑style guarantees (sublinear regret / diminishing gradients) while enabling rapid reaction to bursty gradients. Empirically, Kourkoutas-\(\beta\) stabilizes training and lowers final loss on PDE surrogates, stiff PINNs, quantization‑aware training, and attention models with large, sporadic gradients, and we observe the same pattern on a standard character‑level \texttt{enwik8} task.

On \texttt{small‑enwik8} (10 seeds), Kourkoutas‑$\beta$ reduces final BPC by  $\sim38\%$  vs.\ Adam–0.95  and $\sim58\%$vs.\ Adam–0.999, with universal per‑seed wins and much tighter seed dispersion.
This, together with the PDE results, suggests that per‑layer dynamic $\beta_2$ is a broadly useful lever whenever gradient scales are bursty or regime‑shifting.

\textbf{Future work.} We will study refined scheduling and partial momentum adaptation (e.g., dynamic \(\beta_1\)), scale experiments to multi‑physics PDE settings, and quantify compute/memory overheads—especially under quantization. We will also explore LLM regimes—starting with small-batch adapter SFT and QAT, then long-context and multi-task mixtures, followed by preference learning (DPO/IPO), RLHF/RLAIF (with our trust-region option), and finally MoE and full pretraining.

It is neither feasible nor desirable for a single group to exhaustively validate Kourkoutas‑$\beta$ across the full spectrum of architectures, data regimes, and training protocols where Adam‑style methods are used. Given this diversity, it is reasonable to expect clear gains in some settings and little to no advantage in others. Our aim here is to establish sufficient promise, both empirical and theoretical, to warrant releasing Kourkoutas‑$\beta$ as an open‑source, installable package accompanied by reproducible testbeds, so the community can exercise it in new contexts, map its strengths and limitations, and iterate on the design. Consistent with this intent, we provide a permissive open‑source release and pinned environments to encourage replication, benchmarking, and improvements by others.

\section*{Acknowledgments}
We thank the original Adam authors \cite{kingma2015adam} for the foundational ideas 
and the entire PDE-solver community for pushing 
practical, robust optimization algorithms. This work is a small enhancement to their original ideas 
that we hope will be relevant to some challenging gradient regimes as discussed. We also thank the 
authors and contributors of the MLX Array Framework \cite{MLX_GitHub} for providing an open-source 
framework that has transformed GPU computing on Apple M series computers.

\appendix
\clearpage


\section{\bf Formal Convergence Discussion for Kourkoutas--$\beta$}
\label{appendix:theorem-style}
\noindent\emph{Context.} This appendix restates the convergence sketch in a theorem--lemma style, matching the notation used in the main text (\S2--\S3). Even though $\beta_2$ varies per iteration and per bucket with $\beta_{2,t}^{(\ell)} \in [\beta_{2,\min},\beta_{2,\max}] \subset (0,1)$, the second moment remains bounded, which preserves Adam‑style sublinear‑regret / diminishing‑gradient guarantees under standard assumptions. (See also the algorithmic description in the main text, Alg.~\ref{alg:Algorithms}.) 

\medskip
\medskip
\noindent{\bf $\bullet$ Setup and Assumptions}

We consider a sequence of functions \(\{f_t\}\) in an online optimization
framework or a single function \(f\) in a batch setting with mini-batches
indexed by \(t\).  In the online setting:

\[
  R(T) 
  \;=\;
  \sum_{t=1}^{T} f_t(\theta_t)
  \;-\;
  \min_{x \,\in\,\mathcal{X}} \sum_{t=1}^{T} f_t(x).
\]

\begin{assumption}[Bounded Gradients]
\label{assump:bounded-grad}
There exists a constant \(G>0\) such that
\(\|\mathbf{g}_t\|\;\le\; G\) for all \(t\), 
where \(\mathbf{g}_t = \nabla f_t(\theta_{t-1})\).
\end{assumption}


\begin{assumption}[Step sizes and momentum bounds]
\label{assump:lr-bounds}
Let $\beta_{2,t}\in[\beta_{2,\min},\beta_{2,\max}]\subset(0,1)$ with $\beta_{2,\max}<1$ for all $t$,
and fix $\beta_1\in[0,1)$. The optimizer step size is denoted by $\rho_t>0$ and
decays as $\rho_t \asymp t^{-1/2}$ (e.g., $\rho_t=\rho_0/\sqrt{t}$).
We reserve $\alpha\in(0,1)$ exclusively for the EMA coefficient used in the pooled
gradient-norm statistic $r_t=\alpha r_{t-1}+(1-\alpha)\|g_t\|$.
\end{assumption}

These assumptions closely mirror those used in the Adam and AdaBelief analyses
\cite{kingma2015adam,adabelief2020}, except we permit a changing \(\beta_2\).

\subsection*{Bounding the Second-Moment Term}

\begin{lemma}[Bound on the Second Moment $v_t$]
\label{lemma:vt-bound}
Suppose Assumption~\ref{assump:bounded-grad} holds and 
\(\beta_{2,t} \in [\beta_{2,\min}, \beta_{2,\max}]\subset(0,1)\).  Then the
second-moment sequence \(v_t\) defined by
\[
  v_{t+1} 
  \;=\; 
  \beta_{2,t}\,v_t \;+\; \bigl(1-\beta_{2,t}\bigr)\, (\mathbf{g}_t \odot \mathbf{g}_t)
\]
remains bounded.  In particular,
\[
  v_{t+1} 
  \;\le\;
  v_1\,\beta_{2,\max}^{\,t}
  \;+\;(1-\beta_{2,\min})\,G^2 
    \sum_{i=1}^{t} \beta_{2,\max}^{\,t-i},
\]
and hence
\[
  v_{t+1}
  \;=\;
  O\,\!\Bigl(\frac{(1-\beta_{2,\min})}{1-\beta_{2,\max}} \cdot G^2\Bigr).
\]
\end{lemma}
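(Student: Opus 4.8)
The plan is to prove the bound by explicitly unrolling the one-step recurrence into a weighted sum over the gradient history and then majorizing each weight by its worst case. First I would expand $v_{t+1}=\beta_{2,t}\,v_t+(1-\beta_{2,t})(\mathbf g_t\odot\mathbf g_t)$ down to the base term $v_1$, exactly following the recurrence computation of the main text (\S``Recurrence and Worst-Case Bound on $v_t$''), obtaining
\[
v_{t+1}=\Bigl(\prod_{j=1}^{t}\beta_{2,j}\Bigr)v_1+\sum_{i=1}^{t}(1-\beta_{2,i})\Bigl(\prod_{j=i+1}^{t}\beta_{2,j}\Bigr)(\mathbf g_i\odot\mathbf g_i).
\]
Since every operation ($\beta_{2,t}v_t$, the convex combination, the elementwise square) acts coordinatewise, this identity and every subsequent inequality are read per coordinate; in particular $v_1\ge 0$ and $\mathbf g_i\odot\mathbf g_i\ge 0$ elementwise, so no sign issues arise when I pass to upper bounds.

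Second, the worst-case substitution. For each gradient term indexed by $i$, the prefactor $(1-\beta_{2,i})$ involves only $\beta_{2,i}$, whereas the product $\prod_{j=i+1}^{t}\beta_{2,j}$ involves only indices $j>i$; these index sets are \emph{disjoint}. Hence I may simultaneously apply $\beta_{2,i}\ge\beta_{2,\min}$ (giving $1-\beta_{2,i}\le 1-\beta_{2,\min}$) and $\beta_{2,j}\le\beta_{2,\max}$ (giving $\prod_{j=i+1}^{t}\beta_{2,j}\le\beta_{2,\max}^{\,t-i}$). Using the gradient bound coordinatewise, $(\mathbf g_i\odot\mathbf g_i)_k\le\|\mathbf g_i\|^2\le G^2$ by Assumption~\ref{assump:bounded-grad}, and $\prod_{j=1}^{t}\beta_{2,j}\le\beta_{2,\max}^{\,t}$ for the $v_1$ term, I arrive at the stated majorization
\[
v_{t+1}\ \le\ v_1\,\beta_{2,\max}^{\,t}+(1-\beta_{2,\min})\,G^2\sum_{i=1}^{t}\beta_{2,\max}^{\,t-i}.
\]

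Third, I would collapse the geometric sum via $\sum_{i=1}^{t}\beta_{2,\max}^{\,t-i}=\sum_{k=0}^{t-1}\beta_{2,\max}^{\,k}=\frac{1-\beta_{2,\max}^{\,t}}{1-\beta_{2,\max}}\le\frac{1}{1-\beta_{2,\max}}$, valid precisely because $\beta_{2,\max}<1$ (Assumption~\ref{assump:lr-bounds}). The same condition forces $\beta_{2,\max}^{\,t}\to 0$, so the $v_1$ contribution vanishes asymptotically and the uniform-in-$t$ estimate $v_{t+1}\le v_1\,\beta_{2,\max}^{\,t}+\frac{1-\beta_{2,\min}}{1-\beta_{2,\max}}\,G^2$ yields the claimed $O\!\bigl(\tfrac{1-\beta_{2,\min}}{1-\beta_{2,\max}}\,G^2\bigr)$ bound, coordinatewise and hence in any norm on the fixed-dimensional parameter tensor.

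The calculation is elementary; the only place that needs care — and the step I would flag as the genuine (if mild) obstacle — is the simultaneous worst-casing of the $\beta_{2,j}$'s. One might worry that upper-bounding the product wants every $\beta$ near $\beta_{2,\max}$ while upper-bounding $(1-\beta_{2,i})$ wants $\beta_{2,i}$ near $\beta_{2,\min}$, an apparent contradiction. The resolution, which I would state explicitly, is that the prefactor and the product touch disjoint indices, so both extremal choices are legitimate term by term; this is exactly why the exact unrolling, rather than a crude monotone-comparison argument on the recurrence, is the cleanest route. A secondary point worth one sentence is that all inequalities are elementwise on $v_t$, so ``bounded'' means each coordinate is uniformly bounded — which is precisely what the downstream denominator $\sqrt{\widehat v_t}$ needs to stay controlled.
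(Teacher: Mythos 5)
Your proof is correct and takes essentially the same route as the paper's: unroll the recurrence exactly, majorize each factor by its worst case ($1-\beta_{2,i}\le 1-\beta_{2,\min}$ and $\beta_{2,j}\le\beta_{2,\max}$, the $v_1$ product by $\beta_{2,\max}^{\,t}$), and collapse the geometric series using $\beta_{2,\max}<1$. Your two added points of care --- reading every inequality elementwise and noting that the prefactor and the tail product touch disjoint indices --- are refinements of exposition rather than a different argument (indeed the paper's own proof is looser here, writing $\|\mathbf g_t\|^2$ where the elementwise square $\mathbf g_t\odot\mathbf g_t$ is meant), so nothing needs to change.
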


\begin{proof}[Proof]
By definition,
\[
  v_{t+1}
  \;=\;
  \beta_{2,t}\,v_t 
  + (1-\beta_{2,t})\,\|\mathbf{g}_t\|^2.
\]
Applying the worst-case bounds 
\(\beta_{2,t} \le \beta_{2,\max}\) and \(1-\beta_{2,t}\ge 1-\beta_{2,\min}\),
we unroll the recurrence to get
\[
  v_{t+1}
  \;\le\;
  v_1 \prod_{j=2}^{t+1}\beta_{2,j}
  \;+\;\sum_{i=1}^{t} (1-\beta_{2,\min})\,\|\mathbf{g}_i\|^2 
    \prod_{j=i+1}^{t+1}\beta_{2,j}.
\]
Using \(\|\mathbf{g}_i\|\le G\) from Assumption~\ref{assump:bounded-grad}
and \(\beta_{2,j}\le \beta_{2,\max}\), this becomes
\[
  v_{t+1}
  \;\le\;
  v_1\,\beta_{2,\max}^{\,t}
  \;+\;
  (1-\beta_{2,\min})\,G^2
  \sum_{i=1}^t
  \beta_{2,\max}^{\,t-i}.
\]
Finally, the geometric series
\(\sum_{k=0}^{t-1}\beta_{2,\max}^k
= \frac{1-\beta_{2,\max}^t}{1-\beta_{2,\max}}\)
implies the bound
\[
  v_{t+1}
  \;\le\;
  v_1\,\beta_{2,\max}^t
  \;+\;
  \frac{(1-\beta_{2,\min})\,G^2}{1-\beta_{2,\max}}\, 
  \bigl(1-\beta_{2,\max}^t\bigr).
\]
As \(t\to\infty\), \(\beta_{2,\max}^t\to 0\) if \(\beta_{2,\max}<1\), 
so this yields
\[
  v_{t+1} 
  = 
  O\bigl(\tfrac{(1-\beta_{2,\min})}{1-\beta_{2,\max}}\,G^2\bigr).
\]
\end{proof}

\clearpage

\section{Statistical Transparency}
\label{app:stat}

\paragraph{Global conventions}
All tests are \emph{two-sided} and \emph{paired by seed} unless noted. We report
(i) a parametric paired $t$-test on seedwise differences when appropriate, with
effect sizes $d_z{=}t/\sqrt{n}$ and $r{=}\sqrt{t^2/(t^2+\mathrm{df})}$ and $95\%$ CIs; 
(ii) distribution-free complements (Wilcoxon signed-rank, exact two-sided; sign test); 
(iii) for binary success/failure, McNemar’s exact test (two‑sided) with Clopper–Pearson $95\%$ CIs on rates.
Where two planned pairwise contrasts are made within a testbed (Kourkoutas--$\beta$ vs.\ Adam--95 and vs.\ Adam--999), 
we apply \textbf{Holm} adjustment to the two $p$-values. 
Seed lists and pairing are identical across methods within each testbed.

\begin{table}[H]
\centering
\caption{Tests and rationale by testbed. Endpoints and labels match the main text/tables.}
\label{tab:stat-transparency}
\setlength{\tabcolsep}{4pt}           
\renewcommand{\arraystretch}{1.1}     
\small
\begin{tabular}{@{}l l l l l l@{}}
\toprule
\makecell[l]{\textbf{Testbed}} &
\makecell[l]{\textbf{Endpoint}} &
\makecell[l]{\textbf{Primary}\\\textbf{test}} &
\makecell[l]{\textbf{Robust /}\\\textbf{Exact}} &
\makecell[l]{\textbf{Effect}\\\textbf{size(s)}} &
\makecell[l]{\textbf{Multiplicity}} \\
\midrule
\makecell[l]{A (Heat2D)} &
\makecell[l]{Train MSE\\@ epoch 100} &
\makecell[l]{Paired $t$ on\\seedwise diffs} &
\makecell[l]{Wilcoxon;\\Sign test} &
\makecell[l]{$d_z$, $r$;\\Median [IQR]} &
\makecell[l]{Holm across\\2 contrasts} \\
\addlinespace[1pt]
\makecell[l]{B (Heat3D PINN)} &
\makecell[l]{Success @100k;\\final loss among\\successes} &
\makecell[l]{McNemar (exact)\\on success/failure} &
\makecell[l]{Clopper–Pearson CI;\\bootstrap CI for\\medians\textsuperscript{$\dagger$}} &
\makecell[l]{Success rate diff;\\exact OR (optional)} &
\makecell[l]{n/a (binary\\endpoint); Holm if\\2 contrasts} \\
\addlinespace[1pt]
\makecell[l]{C (Length--Jitter\\+ Rare Trigger)} &
\makecell[l]{Final BCE\\@ 30k steps} &
\makecell[l]{Paired $t$ on\\$\log_{10}$ loss} &
\makecell[l]{Wilcoxon (exact)\\on $\log$ loss;\\Sign test} &
\makecell[l]{Mean log‑ratio\\$\Rightarrow$ geo‑mean\\ratio; rank‑biserial $r$\\(optional)} &
\makecell[l]{Holm across\\2 contrasts} \\
\addlinespace[1pt]
\makecell[l]{D (small‑enwik8)} &
\makecell[l]{Final BPC\\@ 50k steps} &
\makecell[l]{Paired $t$ on\\seedwise diffs} &
\makecell[l]{Wilcoxon (exact);\\Sign test} &
\makecell[l]{$d_z$, $r$;\\Median [IQR]} &
\makecell[l]{Holm across\\2 contrasts} \\
\bottomrule
\end{tabular}

\vspace{0.3em}
\footnotesize \textsuperscript{$\dagger$}\,We summarize final losses for PINN only among successful runs; we report a median with a (percentile or BCa) bootstrap CI as a descriptive complement.
\end{table}

\subsection*{Why these tests}
\textbf{Testbed A (Heat2D).} Matched seeds yield paired observations. We test seedwise \emph{differences} in MSE with a paired $t$-test; 
Q–Q checks on differences were acceptable, but we also report Wilcoxon/sign as distribution‑free complements. 
Effect sizes ($d_z$, $r$) accompany CIs. See Tables~\ref{tab:transformer-abs}–\ref{tab:transformer-stats-3tests}.

\textbf{Testbed B (Heat3D PINN).} The primary outcome is binary convergence at 100k steps under identical seeds; 
McNemar’s exact test is the appropriate paired test on the success indicator. 
We show Clopper–Pearson CIs for rates, and (descriptively) medians of final loss among successes with a bootstrap CI. 
See Tables~\ref{tab:pinn3d_summary}–\ref{tab:pinn3d-mcnemar}.

\textbf{Testbed C (Length–Jitter + Rare Trigger).} Losses are heavy‑tailed with occasional “fail” seeds; 
we analyze \emph{$\log_{10}$ loss} (multiplicative effects $\Rightarrow$ additive on log‑scale). 
Primary: paired $t$ on log‑loss; secondary: Wilcoxon (exact) and sign test on the same transform. 
We report geometric‑mean loss ratios and CIs. See Tables~\ref{tab:toyrare-seeds}–\ref{tab:toyrare-stats}.

\textbf{Testbed D (small‑enwik8).} Continuous BPC per seed, paired design. 
Primary: paired $t$ on differences; secondary: Wilcoxon/sign (exact). 
We report $d_z$, $r$, and Holm‑adjusted $p$ across the two planned contrasts. See Tables~\ref{tab:enwik8-abs}–\ref{tab:enwik8-stats}.

\paragraph{Assumptions and handling rules}
(i) Seeds are independent across runs and identical across optimizers (pairing). 
(ii) All tests are two‑sided. 
(iii) Normality is only assumed for the \emph{differences} in the paired $t$-tests; we always include a distribution‑free complement. 
(iv) For PINN, medians among successes are descriptive; inference on the binary endpoint uses McNemar. 
(v) No outliers were removed; heavy tails are handled via log‑scale (Testbed C) and robust summaries (medians/IQR). 
(vi) When two contrasts are reported within a testbed, $p$-values are Holm‑adjusted.

\paragraph{Additional robustness (not used in the main text)}
Repeating Testbed~D over 20 seeds (0–19) confirmed the 10‑seed pattern:
Kourkoutas$\beta$ $1.635\pm0.027$ vs.\ Adam–95 $2.452\pm0.596$ and Adam–999 $3.768\pm0.509$ (final BPC),
with Kourkoutas‑$\beta$ winning 20/20 paired seeds against both baselines.

\clearpage
\section{Synthetic Sanity‑Check Scripts}
\label{app:synthetic-code}
All three scripts run under Python~3.11 with \texttt{mlx} v0.26.3 (or later) and complete in a few seconds on an Apple M‑series GPU. We keep bias correction \emph{off} here to match the code paths in \S4 and to make the Adam‑equivalence control exact. 
\vspace{0.5em}

\subsection{Quadratic Bowl (Strongly Convex)}
\lstinputlisting[language=Python,caption={\texttt{quadratic\_bowl.py}}]{code/sanity1_extendedLatex.py}

\subsection{Logistic Regression (Convex)}
\lstinputlisting[language=Python,caption={\texttt{logistic\_regression.py}}]{code/sanity2_extendedLatex.py}

\subsection{Concave Log‑Likelihood Ascent}
\lstinputlisting[language=Python,caption={\texttt{concave\_utility\_ascent.py}}]{code/sanity3_extendedLatex.py}

\clearpage

\section{Reproducibility: rare\_trigger\_toy.py (K‑$\beta$ vs Adam baselines, Testbed 3)}
\label{app:trigger-code}
All three scripts run under Python~3.11 with \texttt{mlx} v0.26.3 (or later) and complete in a few seconds on an Apple M‑series GPU. 
\vspace{0.5em}


\lstinputlisting[language=Python,caption={\texttt{ rare\_trigger\_toy.py}}]{code/rare_trigger_toy.py}

\clearpage

\section{Code availability and exact commands (Testbed D)}
\label{app:codeTestbedD}

The full training script is \texttt{testbed\_d.py} in \repoKbeta{} (archived, versioned release: \doiKbeta).
All runs use \texttt{mlx==0.26.3} (pinned in the released environment files).

\paragraph{Dataset creation (deterministic)}-----

\begin{lstlisting}[language=bash]
curl -L -o enwik8.zip https://data.deepai.org/enwik8.zip
unzip -o enwik8.zip
head -c 30000000 enwik8 > small-enwik8.txt
# optional: sha256sum small-enwik8.txt
\end{lstlisting}

\paragraph{Environment (minimal)}-----

\begin{lstlisting}[language=bash]
python -m venv .venv && source .venv/bin/activate
pip install -e ".[dev]"   # pins mlx==0.26.3
\end{lstlisting}

\paragraph{Single-seed commands (reported schedule)}-----

\begin{lstlisting}[language=bash,caption={Kourkoutas-beta (Testbed D).}]
python -u testbed_d.py --text ./small-enwik8.txt \
  --steps 50001 --batch 4 --d_model 512 --n_layer 6 --n_head 8 \
  --ctx 512 --lmin 16 --lmax 512 --warmup 250 --opt kbeta \
  --layer_bucket per-array --barrier_every 100 --eval_every 500 \
  --lr 1e-3 --seed SEED --fixed_eval_seed 1234 --deterministic --compile \
  --wd 0.0 --lr_schedule "1:1e-3,30000:5e-4,40000:1e-4,60000:1e-5"
\end{lstlisting}

\begin{lstlisting}[language=bash,caption={Adam with beta2=0.95 (baseline).}]
python -u testbed_d.py --text ./small-enwik8.txt \
  --steps 50001 --batch 4 --d_model 512 --n_layer 6 --n_head 8 \
  --ctx 512 --lmin 16 --lmax 512 --warmup 250 --opt adam --adam_beta2 0.95 \
  --layer_bucket per-array --barrier_every 100 --eval_every 500 \
  --lr 1e-3 --seed SEED --fixed_eval_seed 1234 --deterministic --compile \
  --wd 0.0 --lr_schedule "1:1e-3,30000:5e-4,40000:1e-4,60000:1e-5"
\end{lstlisting}

\begin{lstlisting}[language=bash,caption={Adam with beta2=0.999 (baseline).}]
python -u testbed_d.py --text ./small-enwik8.txt \
  --steps 50001 --batch 4 --d_model 512 --n_layer 6 --n_head 8 \
  --ctx 512 --lmin 16 --lmax 512 --warmup 250 --opt adam --adam_beta2 0.999 \
  --layer_bucket per-array --barrier_every 100 --eval_every 500 \
  --lr 1e-3 --seed SEED --fixed_eval_seed 1234 --deterministic --compile \
  --wd 0.0 --lr_schedule "1:1e-3,30000:5e-4,40000:1e-4,60000:1e-5"
\end{lstlisting}

\paragraph{Seed sweep used in the paper (10 seeds: 0--9)}-----

\begin{lstlisting}[language=bash]
for s in 0 1 2 3 4 5 6 7 8 9; do
  python -u testbed_d.py --text ./small-enwik8.txt \
    --steps 50001 --batch 4 --d_model 512 --n_layer 6 --n_head 8 \
    --ctx 512 --lmin 16 --lmax 512 --warmup 250 --opt kbeta \
    --layer_bucket per-array --barrier_every 100 --eval_every 500 \
    --lr 1e-3 --seed $s --fixed_eval_seed 1234 --deterministic --compile \
    --wd 0.0 --lr_schedule "1:1e-3,30000:5e-4,40000:1e-4,60000:1e-5"
done
\end{lstlisting}

\paragraph{Notes}
(i) \texttt{--deterministic} and \texttt{--fixed\_eval\_seed} give per-seed reproducibility. 
(ii) BPC is reported on a fixed held-out batch every 500 steps; final numbers at step \texttt{50001}. 
(iii) Hardware: Apple Studio M2 Ultra (198\,GB).

\noindent\textbf{Environment and MLX pin.} All runs use \texttt{mlx==0.26.3} (see \S\ref{app:env}).  
\medskip

\noindent\textbf{Dataset checksums.} Enwik8: \texttt{2b49720e…024a8}; Small‑Enwik8: \texttt{e0152eee…298b7}.

\clearpage
\section{Training Schedules and Configurations}
\label{app:schedules}
\noindent\textbf{Learning‑rate schedule used in PINN--3D.} Identical across methods (Kourkoutas--$\beta$, Adam--0.95, Adam--0.999): cosine decay from $10^{-2}$ to $10^{-5}$ over the first $40{,}000$ steps, then constant at $10^{-5}$. 
\begin{lstlisting}[language=Python, caption={PINN--3D learning‑rate schedule (MLX).}, label={lst:lr-pinn3d}]
# ---------------- learning schedule ----------------
init_lr    = 1e-2
target_lr  = 1e-5
ramp_steps = 40_000

cosine_part   = optim.cosine_decay(init_lr, decay_steps=ramp_steps, end=target_lr)
constant_part = lambda _: target_lr

lr_schedule = optim.join_schedules([cosine_part, constant_part], [ramp_steps])
\end{lstlisting}


\clearpage

\section{$\beta_2$ and \texttt{sunspike} Heatmaps for PINN--3D}
\label{appendix:PINNheatmaps}
We use heatmaps and violins to visualize the dynamics of the bounded \texttt{sunspike} ratio and the induced $\beta_2$.
High \texttt{sunspike} ($\to 1$) indicates gradients much larger than their recent EMA and yields smaller $\beta_2$ (more agile updates); low \texttt{sunspike} ($\approx 0$) keeps $\beta_2 \approx \beta_{2,\max}$ (more smoothing). See \S5.2.5 for the discussion that these distributions concentrate around sunspike $\approx 0.3$--$0.6$ and $\beta_2 \approx 0.93$--$0.96$ with mild epoch‑wise drift.

\begin{figure*}[h]
  \centering
  \begin{subfigure}{0.48\textwidth}\includegraphics[width=\linewidth]{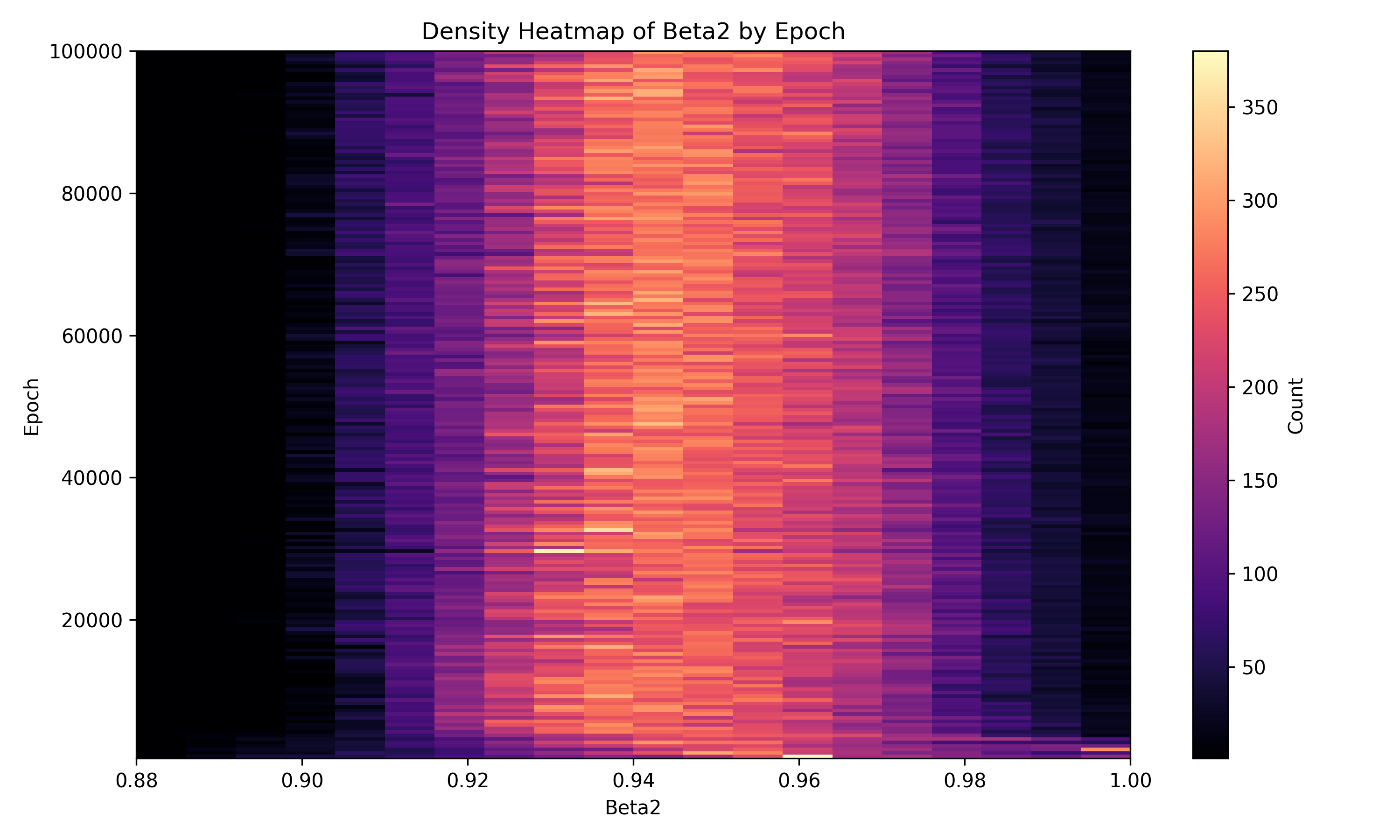}\caption{seed=0: $\beta_2$ density by epoch.}\end{subfigure}\hfill
  \begin{subfigure}{0.48\textwidth}\includegraphics[width=\linewidth]{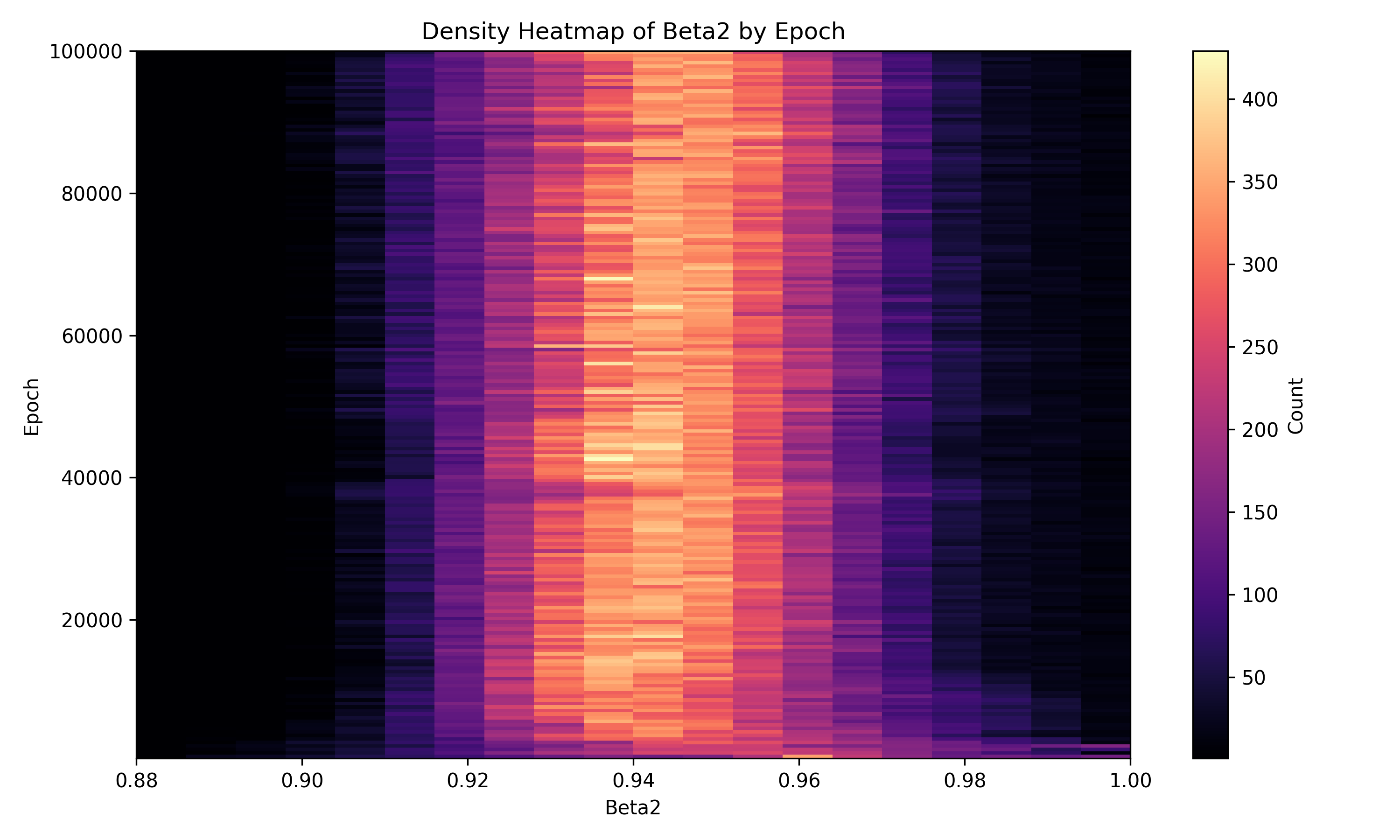}\caption{Seed 1: $\beta_2$ density by epoch.}\end{subfigure}

  \medskip

  \begin{subfigure}{0.48\textwidth}\includegraphics[width=\linewidth]{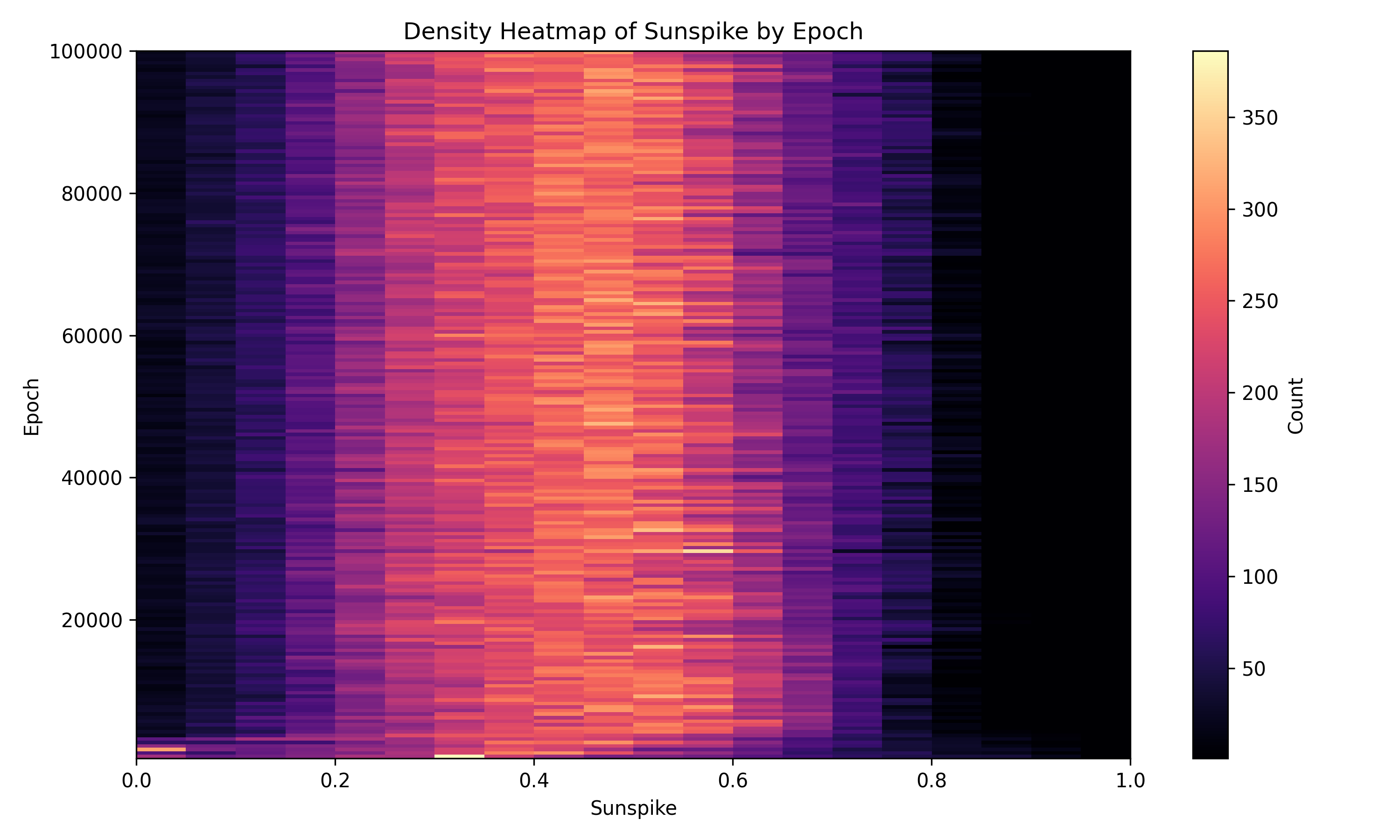}\caption{seed=0: \texttt{sunspike} density by epoch.}\end{subfigure}\hfill
  \begin{subfigure}{0.48\textwidth}\includegraphics[width=\linewidth]{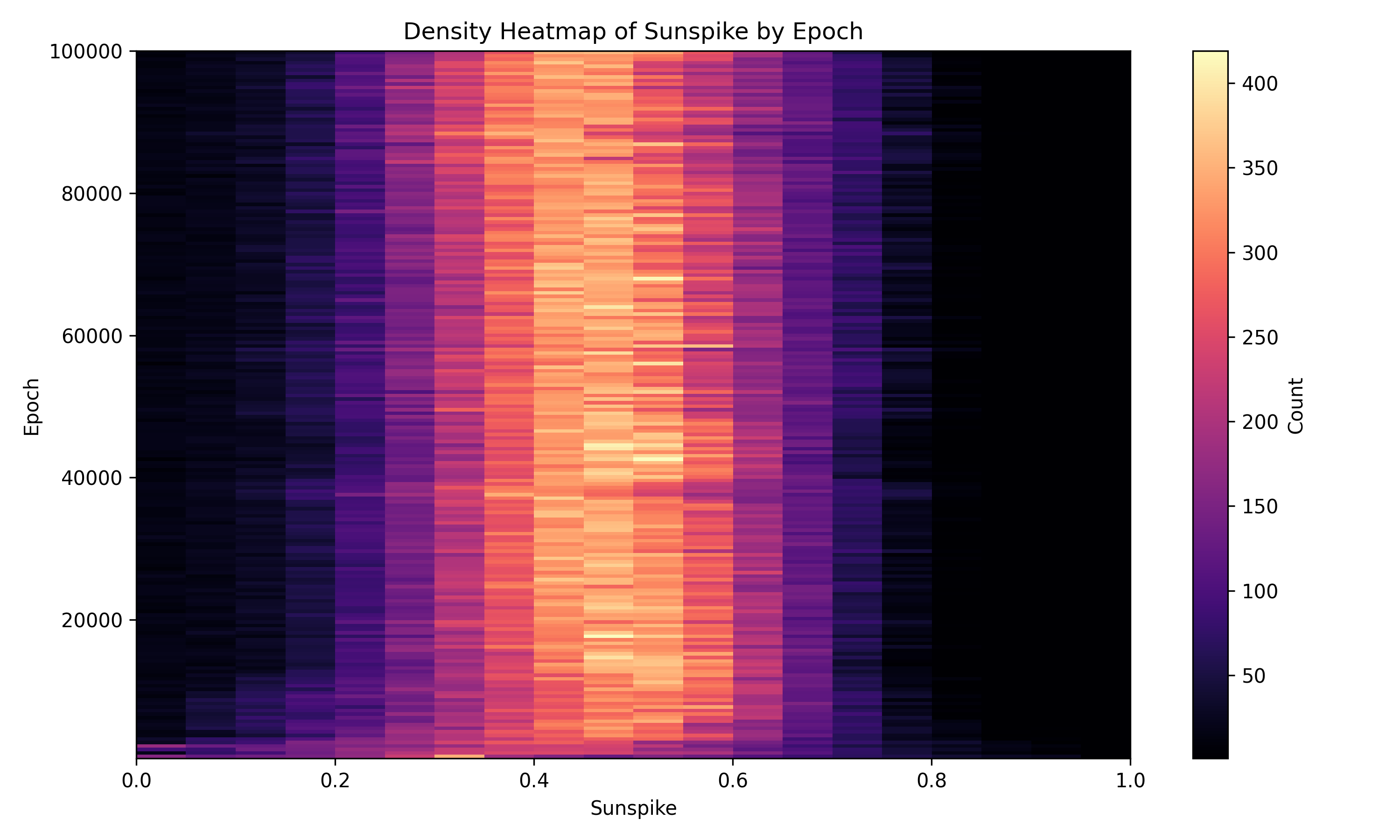}\caption{Seed 1: \texttt{sunspike} density by epoch.}\end{subfigure}

  \caption{Density heatmaps complementing the violins in \S5.2.5. High density concentrates near $\beta_2\!\approx\!0.94$--$0.96$ and \texttt{sunspike} $\approx 0.3$--$0.6$, with modest, schedule‑induced drift that is consistent across seeds.}
  \label{fig:kbeta-heatmaps}
\end{figure*}

\clearpage

\section{Reproducibility and Environment}
\label{app:env}

\noindent\textbf{Hardware.} Apple Studio M2 Ultra with 198\,GB unified memory. Wall‑clock timings are reported per epoch with diagnostics disabled and \emph{no} untimed warm‑up (averaged over the full run). 

\noindent\textbf{Key software pins.}
Python~3.11; \texttt{mlx} \textbf{v0.26.3} (all Adam baselines were run with this version due to minor version‑sensitivity in early trajectories), NumPy, Matplotlib, and any other libraries as in the provided environment file. Kourkoutas--$\beta$ is implemented in our codebase and was unaffected by MLX updates. 

\medskip
\noindent\textbf{Install (wheels).}
If you prefer prebuilt wheels (either from PyPI or the \texttt{wheels/} directory of the artifact bundle):
\begin{verbatim}
python3.11 -m venv .venv
source .venv/bin/activate
python -m pip install --upgrade pip
pip install "mlx==0.26.3"
# If/when published on PyPI:
pip install kbeta kbeta-transformer2d kbeta-pinn3d
# Or install local wheels shipped in the artifact bundle:
pip install wheels/*.whl
\end{verbatim}
\noindent This preserves the MLX pin used for all Adam baselines while allowing a one‑command install of the optimizer and testbeds.

\medskip
\noindent\textbf{Install (from source).}
To reproduce the exact development layout with the repository pins in \texttt{pyproject.toml}:
\begin{verbatim}
python3.11 -m venv .venv
source .venv/bin/activate
python -m pip install --upgrade pip
pip install "mlx==0.26.3"
# Editable installs for the optimizer and testbeds:
pip install -e ./kbeta
pip install -e ./kbeta-transformer2d
pip install -e ./kbeta-pinn3d
\end{verbatim}
\noindent Alternatively, to recreate the full environment exactly as used for tables/figures, use the shipped environment file (see artifact bundle):
\begin{verbatim}
pip install -r env/requirements.txt
\end{verbatim}

\medskip
\noindent\textbf{Artifacts.} We ship a full environment file and per‑repo \texttt{pyproject.toml} pins to reproduce tables and figures exactly. See the artifact bundle for the precise versions/hashes and training commands.

\clearpage
\section{Ablations}
\label{appendix:ablations}
The scripts run under Python~3.11 with \texttt{mlx} v0.25.0--0.28.0 and complete in a few seconds on an Apple M‑series GPU. 
\vspace{0.5em}

\subsection{Kourkoutas-$\beta$ configured as Adam with bias correction off}
\label{appendix:KbetaASAdam}
The observed per‑parameter FP32 max absolute difference satisfies $\,\le 1\times 10^{-6}\,$ across 1{,}000 steps.

\lstinputlisting[language=Python,caption={\texttt{ablation\_Kourkoutas\_asAdam.py}}]{code/ablation_Kourkoutas_asAdam.py}

\subsection{Kourkoutas-$\beta$ configured as Adam with bias correction on}
\lstinputlisting[language=Python,caption={\texttt{ablation\_Kourkoutas\_asAdam\_BC.py}}]{code/ablation_Kourkoutas_asAdam_BC.py}

\end{document}